\documentclass{article}

\newcommand{\affmark}[1]{%
  \begingroup
  \renewcommand{\thefootnote}{\arabic{footnote}}%
  \footnotemark[#1]%
  \endgroup
}
\newcommand{\afftext}[2]{%
  \begingroup
  \renewcommand{\thefootnote}{\arabic{footnote}}%
  \footnotetext[#1]{#2}%
  \endgroup
}

\usepackage[english]{babel}
\usepackage[a4paper,top=2cm,bottom=2cm,left=1.5cm,right=1.5cm,marginparwidth=1.75cm]{geometry}

\usepackage[round]{natbib}
\usepackage{amsmath,amsthm,amssymb,bm,color}
\IfFileExists{bbm.sty}{\usepackage{bbm}}{}
\usepackage{mathrsfs}
\usepackage{enumitem}
\usepackage{graphicx}
\usepackage{booktabs}
\usepackage{float}
\usepackage{placeins}
\usepackage{titlesec}
\titleformat{\section}
  {\normalfont\large\bfseries}
  {\thesection}{1em}{\MakeUppercase}
\usepackage{hyperref}
\usepackage[capitalise]{cleveref}
\hypersetup{
  colorlinks=true,
  citecolor=blue,
  linkcolor=magenta,
  pdftitle={Minimax-optimal Scaling Laws in Ridge Regression via Transfer Learning},
}

\newcommand{\pt}{\texttt{sr}}
\newcommand{\ft}{\texttt{tr}}

\renewcommand{\P}{\mathbb{P}}
\newcommand{\E}{\mathbb{E}}

\newcommand{\cN}{\mathcal{N}}

\newcommand{\R}{\mathbb{R}}

\newcommand{\lamonmu}{\tau}
\newcommand{\lamonmupowerft}{\gamma_{\tau_{\ft}}}
\newcommand{\lamonmupowerpt}{\gamma_{\tau_{\pt}}}

\newcommand{\crosstracescale}{\Gamma_T}
\newcommand{\shrinkagebias}{B_{\rm sh}}

\newcommand{\gammatrans}{\Gamma_{\rm TL}}
\newcommand{\eps}{\varepsilon}

\newcommand{\diag}{\text{diag}}

\def\bzero{{\boldsymbol 0}}

\DeclareMathOperator*{\argmin}{arg\,min}

\newtheorem{theorem}{Theorem}
\newtheorem*{theorem*}{Theorem}
\newtheorem{lemma}{Lemma}

\newtheorem{assumption}{Assumption}

\newtheorem{proposition}{Proposition}

\newtheorem{corollary}{Corollary}

\theoremstyle{definition}

\newtheoremstyle{myremark} 
    {\topsep}                    
    {\topsep}                    
    {\it}                        
    {}                           
    {\bf}                        
    {.}                          
    {.5em}                       
    {}  

\theoremstyle{myremark}
\newtheorem{remark}{Remark}[section]

\DeclareSymbolFont{rsfs}{U}{rsfs}{m}{n}
\DeclareSymbolFontAlphabet{\mathscrsfs}{rsfs}

\def\bA{{\boldsymbol A}}
\def\bB{{\boldsymbol B}}
\def\bC{{\boldsymbol C}}

\def\bH{{\boldsymbol H}}

\def\bI{{\boldsymbol I}}

\def\bP{{\boldsymbol P}}
\def\bQ{{\boldsymbol Q}}

\def\bS{{\boldsymbol S}}

\def\bV{{\boldsymbol V}}
\def\bW{{\boldsymbol W}}
\def\bX{{\boldsymbol X}}

\def\be{{\boldsymbol e}}

\def\bx{{\boldsymbol x}}
\def\by{{\boldsymbol y}}
\def\bz{{\boldsymbol z}}

\def\bbeta{{\boldsymbol \beta}}

\def\bSigma{{\boldsymbol \Sigma}}
\def\bTheta{{\boldsymbol \Theta}}

\def\cR{\mathcal{R}}

\def\Tr{{\rm Tr}}

\def\cV{{\mathcal V}}

\def\cV{{\mathcal V}}

\def\naturals{{\mathbb N}}

\def\naturals{{\mathbb N}}

\def\cD{{\mathcal D}}

\def\bA{{\boldsymbol A}}

\def\bTheta{{\boldsymbol \Theta}}

\def\cV{{\mathcal V}}
\def\bP{{\boldsymbol P}}
\def\diag{{\rm diag}}
\def\bS{{\boldsymbol S}}

\def\bC{{\boldsymbol C}}

\def\cB{\mathcal{B}}

\def\cB{\mathcal{B}}

\def\op{{\rm op}}

\newcommand{\figorbox}[3]{%
  \IfFileExists{figs/#1}%
    {\includegraphics[width=#2]{figs/#1}}%
    {\fbox{\parbox[c][#3][c]{0.92\linewidth}{\centering
      \textcolor{red}{\texttt{\detokenize{figs/#1}} not found}}}}%
}

\title{{\fontsize{22}{27}\selectfont Minimax-optimal Scaling Laws in Ridge Regression \\via Transfer Learning}}
\author{
Yalda Shabanzadeh \affmark{1}
\\
\and
Diyuan Wu \affmark{1}
\\
\and 
Marco Mondelli \affmark{1}
}
\date{}

\begin{document}
\maketitle
\begingroup
\renewcommand{\thefootnote}{}
\footnotetext{Correspondence to: \texttt{diyuan.wu@ist.ac.at}}
\endgroup
\afftext{1}{Institute of Science and Technology Austria (ISTA). Emails: \texttt{\{yalda.shabanzadeh, diyuan.wu, marco.mondelli\}@ist.ac.at}}

\begin{abstract}
Pretrained models are often reused across downstream tasks, motivating the question of whether this transfer learning procedure can provide a substantial performance improvement, namely, in terms of the scaling law. We address this question in the prototypical setting of high-dimensional regression, where \emph{(i)} a pretrained \emph{source} model is trained via ridge regression, and \emph{(ii)} a fine-tuned \emph{target} model is regularized via the $L_2$ distance from the source estimator. First, we establish a deterministic equivalent for the risk of this two-step procedure under both model and covariance shift. Next, by specializing the deterministic equivalent to a shared regression function and commuting power-law covariance spectra, we identify a regime where the resulting scaling law achieves the minimax-optimal exponent. Notably, in this regime, the optimal source regularization depends on source data only, indicating its universality across all targets. Furthermore, we show that the same optimal rate is achieved by performing the fine-tuning step via one-pass stochastic gradient descent. We support our theoretical analysis via experiments on both synthetic settings and language models.
\end{abstract}

\section{INTRODUCTION}
Empirical studies have demonstrated that reusing pretrained
representations improves downstream performance in computer vision
\citep{donahue2014decaf,yosinski2014transferable}
and language tasks
\citep{howard2018universal,devlin2019bert}. Methods such as L2-SP~\citep{li2018explicit},
adapters~\citep{houlsby2019parameter},
and LoRA~\citep{hu2022lora} adapt models through
regularization toward pretrained weights, the addition of task-specific
modules, and low-rank parameter updates, respectively.
Besides the choice of a specific fine-tuning method, downstream performance
also depends on the scale of the source and target model. More precisely, empirical scaling laws describe how transfer performance varies
with the scaling of model size and the amounts of pretraining and fine-tuning data
\citep{hernandez2021scaling,zhang2024scaling}.
Notably, better pretraining performance on the source task does not necessarily improve results downstream:
longer pretraining can improve source performance while worsening
downstream performance \citep{springer2025overtrained};
conversely, stronger pretraining weight decay can improve downstream
performance even when it increases the pretraining validation loss
\citep{han2026weight}.
Understanding these effects requires studying when pretraining
improves downstream learning, especially in the scaling law, and how the source model should
be trained to achieve these gains.

From a theoretical viewpoint, several works have studied regularization with respect to a source estimator, referred to as L2-SP in \citep{li2018explicit}.
In this regard, \citet{dar2024common} analyze minimum-norm pretraining with isotropic source covariates followed by ridge fine-tuning
under model shift, identifying conditions under which transfer
outperforms ridge regression only on target data.
\citet{kim2025transfer} study minimum-norm pretraining followed
by the target interpolator closest to the source estimator,
deriving non-asymptotic risk bounds under covariate
and model shift in benign-overfitting regimes.
In functional linear regression, \citet{lin2024functional}
establish minimax rates for transfer learning where both the pretraining and fine-tuning objective is $L_2$-regularized
under bounded model shift and covariance compatibility
conditions, quantifying its benefits over learning only on target data.
In nonparametric regression under model shift,
\citet{lin2024smoothness} develop a method for
selecting regularization parameters in pretraining and
fine-tuning 
that achieves minimax rates. However, these works do not explicitly characterize when transfer learning
achieves a strictly better rate than learning from a single dataset (only the source, or only the target). \cite{wu2022power} establish sharp risk bounds
under covariate shift when both pretraining and fine-tuning use one-pass stochastic gradient descent (SGD), and
construct specific examples where this two-stage procedure improves sample complexity. This motivates us to derive a systematic characterization of how covariance spectra and sample sizes determine when transfer improves the decay of the risk. 
A related question concerns how the source model should
be trained to achieve these gains.
For ridge regression,
\citet{hedges2026source} shows that, under proportional asymptotics with isotropic data, the optimal source regularization is
independent of the target sample size but it still depends on task
alignment.
This leaves open whether a single source estimator, trained
without target information, can remain optimal across
a family of target distributions in regimes where transfer
improves the scaling exponent over either dataset alone.

In this paper, we study these questions through a tractable
model of high-dimensional linear regression.
The source and target tasks may differ in their regression
coefficients, covariate distributions, and noise levels,
with sample sizes growing at different polynomial rates.
We obtain the source estimator from ridge regression
and fine-tune it on target data using a squared
$\ell_2$ penalty centered at the source estimator.
We summarize our main contributions as follows:

\vspace{-.5em}

\begin{itemize}[itemsep=1pt,leftmargin=*]

\item \textbf{Scaling laws for transfer learning.} We derive a deterministic equivalent for the target excess risk of ridge regression, together with a non-asymptotic  approximation
    error (Theorem~\ref{thm:total-risk}).
    Specializing to power-law covariance spectra and a common regressor, we obtain
    scaling laws for transfer learning
    (Theorem~\ref{thm:ridge-transfer-scaling}). Comparing them to the joint minimax rate which uses both datasets
    (Theorem~\ref{thm:minimax}), we show that the optimally tuned deterministic equivalent achieves the joint minimax rate whenever the latter is
    strictly faster than the minimax rates of either
    dataset alone (Corollary \ref{cor:minimax-optimal-ridge-transfer}).

\item \textbf{Universality of optimal source training.} Our analysis unveils that the optimal source regularization depends
    only on source information, allowing a single source estimator
    to achieve these gains across various target
    distributions without target-dependent tuning of the source. Furthermore, the same source estimator remains optimal when fine-tuning is performed via one-pass SGD (Lemma \ref{lem:sgd-scaling-main}), indicating universality across fine-tuning methods.

    \item \textbf{Numerical evidence.} We evaluate the accuracy of the predictions based on deterministic equivalents in synthetic data and PCA features from MNIST and CIFAR-10.
We further conduct language-modeling experiments, 
giving evidence that similar transfer
phenomena may arise beyond linear models.

\end{itemize}


\section{Related Work}
\label{sec:related-work}

\textbf{Empirical works.} 
Early empirical studies examined how pretrained representations
are adapted and how transfer benefits vary across downstream
tasks \citep{yosinski2014transferable,kornblith2019better}.
Common approaches include using frozen features, fine-tuning
all pretrained parameters, and adapting a small number of
trainable parameters.
L2-SP regularizes the target model toward pretrained weights
\citep{li2018explicit}, while adapters and LoRA introduce
task-specific modules and low-rank weight updates
\citep{houlsby2019parameter,hu2022lora}.
The adaptation procedure also affects out-of-distribution
performance: \citet{kumar2022finetuning} show that full
fine-tuning can distort pretrained features, and that
training a linear head on frozen features before fine-tuning
can improve robustness to distribution shift.
Empirical scaling laws further quantify the benefits of
pretraining in terms of an equivalent amount of additional
target data \citep{hernandez2021scaling}, and describe how
downstream performance depends on model size, pretraining
and fine-tuning data sizes, and the adaptation method
\citep{zhang2024scaling}.
\citet{isik2025scaling} show that source-target distribution
alignment and fine-tuning data size affect downstream scaling:
increasing pretraining data can reduce downstream cross-entropy
without consistently improving translation quality.
These findings motivate theoretical analyses of how task
alignments, datasets size, and fine-tuning procedures
affect the benefit of transfer learning.

\textbf{Theoretical works.} Beyond the two-stage regression analyses discussed in the
introduction, theoretical studies have considered other forms of transfer learning. One related line of work is continual linear regression, which can be considered as a generalization of transfer learning to multiple sequential tasks. \citet{zhao2024statistical} identify matrix-valued regularizers
that match the estimation rates of learning jointly from all tasks, under shared regression coefficients and
commuting covariances.
\cite{karpel2026optimal} characterize the generalization error and the optimal  regularization
strength for isotropic data in the proportional asymptotics regime.
For classification, classical domain-adaptation bounds relate
target error to source error, distribution discrepancy,
and the joint error of the best shared classifier
\citep{bendavid2010theory}.
\citet{hanneke2019value} characterize minimax rates,
quantifying how transfer difficulty and label noise determine
the relative value of source and target samples.
\citet{tripuraneni2020taskdiversity,collins2024multitask}
study models in which tasks share an unknown feature
representation and 
show that learning this representation from sufficiently
diverse training tasks can reduce the sample complexity
of learning a new task.
In high-dimensional regression,
\citet{li2022transfer} establish minimax rates under sparse
or small source--target coefficient differences and develop
methods that adapt to unknown informative sources.
\citet{he2024transfusion} develop transfer
estimators that improve estimation under heterogeneous
covariances when source--target coefficient differences
are sufficiently small. Complementing these results, we study how covariance spectra and relative sample sizes determine the scaling benefits of transfer, identifying regimes in which ridge pretraining followed by ridge or SGD fine-tuning achieves minimax scaling and outperforms both target-only and source-only learning. 

\section{Problem SETUP}
\label{sec:setup}

Given $n\in \mathbb N$, we use the shorthand $[n]:=\{1, \ldots, n\}$. We consider two independent training datasets, corresponding to a source
task ($\pt$) and a target task ($\ft$).  For the source and target tasks, respectively, we observe $n_{\pt}$ and $n_{\ft}$ independent samples
$(\bx_{\pt}^{(j)},y_{\pt}^{(j)})_{j\in[n_{\pt}]}$ and
$(\bx_{\ft}^{(j)},y_{\ft}^{(j)})_{j\in[n_{\ft}]}$, where
$\bx_{\pt}^{(j)}$ and $\bx_{\ft}^{(j)} \in \R^d$ are drawn from distributions
$P_{\pt}$ and $P_{\ft}$ such that
$\E_{P_{\pt}}[\bx]=\E_{P_{\ft}}[\bx]=\mathbf 0$, 
$\E_{P_{\pt}}[\bx\bx^\top]=\bSigma_{\pt}$, and 
$\E_{P_{\ft}}[\bx\bx^\top]=\bSigma_{\ft}$.
We assume the following concentration property on the designs.

\begin{assumption}
\label{asm:distr}
There exist constants $C,c>0$ such that, for every $z \in \{\pt, \ft\}$, every PSD matrix $\bA \in \R^{d \times d}$, and every $t>0$, a sample $\bx\sim P_z$ satisfies
\begin{equation}
    \P(|\bx^\top\bA\bx-\Tr(\bSigma_z\bA)|> t\|\bSigma_z^{1/2}\bA\bSigma_z^{1/2}\|_F)\leq Ce^{-ct}.
\end{equation}
\end{assumption}

Assumption \ref{asm:distr} is standard in the literature \citep{misiakiewicz2024non,wu2026improved} and covers e.g.\ independent sub-Gaussian entries.
We denote $\bX_{z}=[\bx_{z}^{(1)},\ldots,\bx_{z}^{(n_{z})}]^\top\in\R^{n_{z}\times d}$ the data matrix and $\by_{z}=(y_{z}^{(j)})_{j\in[n_{z}]}\in\R^{n_{z}}$ the label for source ($z = \pt$) and target ($z= \ft$). The labels are generated by
\begin{equation}
\begin{aligned}
\by_{z} &= \bX_{z}\bbeta_{z}+\boldsymbol{\eps}_{z},
&\qquad
\boldsymbol{\eps}_{z}
&\sim \cN(\bzero,\sigma_{z}^2\bI_{n_z}).
\end{aligned}
\label{eq:two-tasks}
\end{equation}

The noise vectors are independent of the data and of each other.
In particular, the source and target tasks may differ in their parameters,
covariances, and noise levels.
For an estimator $\hat\bbeta$, we consider the excess prediction risk
on the target task,
\begin{equation}
\begin{aligned}
  \cR(\hat\bbeta)
  &:=\E_{\bx, \eps}\left[
    \big(\bx^\top\hat\bbeta-\bx^\top\bbeta_{\ft}\big)^2
  \right]
  \\&=\E_{\eps}\left[
    (\hat\bbeta-\bbeta_{\ft})^\top
    \bSigma_{\ft}
    (\hat\bbeta-\bbeta_{\ft})
  \right],
\label{eq:risk-def}
\end{aligned}
\end{equation}
where the expectation also averages
over label noise. 

In this paper, we study transfer learning via the following two-step procedure. We first obtain a source estimator via ridge regression on the source dataset,
\begin{equation}
  \hat\bbeta_{\pt}
  =\argmin_{\bbeta}
  \left\{
    \|\bX_{\pt}\bbeta-\by_{\pt}\|_2^2
    +\lambda_{\pt}\|\bbeta\|_2^2
  \right\},
\label{eq:pretrain}
\end{equation}
and then obtain the target model by fitting a regression model on the target dataset with ridge penalty
centered at the source estimator,
\begin{equation}
  \hat\bbeta_{\ft}
  =\argmin_{\bbeta}
  \left\{
    \|\bX_{\ft}\bbeta-\by_{\ft}\|_2^2
    +\lambda_{\ft}\|\bbeta-\hat\bbeta_{\pt}\|_2^2
  \right\}.
\label{eq:finetune}
\end{equation}
The above two-step procedure has been used by  \cite{kim2025transfer} to study properties of transfer learning. The two estimators admit the closed forms
\begin{equation}
\begin{aligned}
\hat\bbeta_{\pt}
&=
(\bX_{\pt}^\top\bX_{\pt}+\lambda_{\pt}\bI_d)^{-1}
\bX_{\pt}^\top\by_{\pt},\\
\hat\bbeta_{\ft}&=
(\bX_{\ft}^\top\bX_{\ft}+\lambda_{\ft}\bI_d)^{-1}
\left(
\bX_{\ft}^\top\by_{\ft}
+\lambda_{\ft}\hat\bbeta_{\pt}
\right).
\end{aligned}
\label{eq:closed-form}
\end{equation}

\begin{remark}[Infinite-dimensional formulation]
For convenience, we use matrix notation throughout.
In the infinite-dimensional setting, covariance matrices
$\bSigma_z$ are interpreted as positive semidefinite trace-class
operators on a separable Hilbert space, and transposes should be understood as adjoints. Our arguments apply to both settings under the stated assumptions.
\end{remark}



\section{Deterministic equivalent for target risk}
\label{sec:deteq}

We start by establishing a deterministic equivalent for the risk \eqref{eq:risk-def} of $\hat\bbeta_{\ft}$. First, let the effective regularization parameters $\lamonmu_\pt,\lamonmu_\ft>0$ be the unique solution of 
\begin{equation}
\begin{aligned}
  n_z-\frac{\lambda_z}{\lamonmu_z}
  =\Tr\left[\bSigma_z(\bSigma_z+\lamonmu_z\bI_d)^{-1}\right],\quad z\in\{\pt,\ft\}.
\label{eq:effective-ridge-main}
\end{aligned}
\end{equation}
Furthermore, for $z\in\{\pt,\ft\}$, let us also define \begin{equation}
\begin{aligned}
\bW_z:=\bSigma_z(\bSigma_z+\lamonmu_z\bI_d)^{-2},
\end{aligned}
\label{eq:W-def-main}
\end{equation}
and the two scalar prefactors
\begin{equation}
\begin{aligned}
  c_{\ft}&:=\frac{\lamonmu_{\ft}^2\,n_{\ft}}
  {n_{\ft}-\Tr[\bSigma_{\ft}^2(\bSigma_{\ft}+\lamonmu_{\ft}\bI_d)^{-2}]},
  \\
  c_{\pt}&:=\frac{1}
  {n_{\pt}-\Tr[\bSigma_{\pt}^2(\bSigma_{\pt}+\lamonmu_{\pt}\bI_d)^{-2}]}.
\end{aligned}
\label{eq:c-def-main}
\end{equation}



\begin{theorem}[Deterministic equivalent for target risk in transfer learning]
\label{thm:total-risk}
Let Assumption~\ref{asm:distr} hold, and suppose there exists a constant $K>0$ such that for $z \in \{\pt, \ft\},$ \begin{equation*}
    \begin{split}
        \lambda_{z}\rho_{z}
\geq
\|\bSigma_{z}\|_{\mathrm{op}}n_{z}^{-K},
\qquad
\rho_{z}^{5/2}\log^{3/2}(n_{z})
\leq K\sqrt{n_{z}},
    \end{split}
\end{equation*} where $\rho_z$ is defined in \eqref{eq:rho_lambda}. Define
$$
\varepsilon_\pt
:=
\frac{\rho_\pt^6\log^{5/2}(n_\pt)}{\sqrt{n_\pt}},
\qquad
\varepsilon_\ft
:=
\frac{\rho_\ft^6\log^{5/2}(n_\ft)}{\sqrt{n_\ft}},
$$
and
\begin{equation}
\begin{aligned}
\mathcal E_{\pt}
&:=
c_{\ft}\lamonmu_{\pt}^2
\left(
\bbeta_{\pt}^\top
(\bSigma_{\pt}+\lamonmu_{\pt}\bI_d)^{-1}
\bbeta_{\pt}
\right)\\&\qquad \cdot
\left\|
\bW_{\ft}^{1/2}
(\bSigma_{\pt}+\lamonmu_{\pt}\bI_d)^{-1}
\bW_{\ft}^{1/2}
\right\|_{\mathrm{op}}.
\label{eq:bias-error-scale-theorem}
\end{aligned}
\end{equation}
Then for any $D>0,$ with probability at least
$1-(n_{\pt}^{-D}+n_{\ft}^{-D})$, 
\begin{equation}
|\cR-\overline{\cR}|
\leq
C_{D,K}
\left[
(\varepsilon_{\pt}+\varepsilon_{\ft})
\overline{\cR}
+
\varepsilon_{\pt}\mathcal E_{\pt}
\right],
\label{eq:main-decomposition}
\end{equation}
where
\begin{align} 
 \overline{\cR} &:= \underbrace{\overline{\cB}_1+\overline{\cB}_2}_{\textstyle\overline{\cB}}
  +\overline{\cV}_{\pt}+\overline{\cV}_{\ft},\\ 
  \overline{\cB}_1
&:=
 c_{\ft}
 \left\|
 \bW_{\ft}^{1/2}
 (\bSigma_{\pt}
  (\bSigma_{\pt}+\lamonmu_{\pt}\bI_d)^{-1}
  \bbeta_{\pt}-\bbeta_{\ft})
 \right\|_2^2,
  \label{eq:main-B1}\\[1mm]
  \overline{\cB}_2
  &:=c_{\ft}c_{\pt}\lamonmu_{\pt}^2
  \Tr\left(\bW_{\ft}\bW_{\pt}\right)
  \bbeta_{\pt}^\top\bW_{\pt}\bbeta_{\pt},
  \label{eq:main-B2}\\[1mm]
  \overline{\cV}_{\pt}
  &:=\sigma_{\pt}^2\,c_{\ft}c_{\pt}
  \Tr\left(\bW_{\ft}\bW_{\pt}\right),
  \label{eq:main-var-pt}\\[1mm]
  \overline{\cV}_{\ft}
  &:=\sigma_{\ft}^2\,
  \frac{\Tr\left[\bSigma_{\ft}^2
  (\bSigma_{\ft}+\lamonmu_{\ft}\bI_d)^{-2}\right]}
  {n_{\ft}-\Tr\left[\bSigma_{\ft}^2
  (\bSigma_{\ft}+\lamonmu_{\ft}\bI_d)^{-2}\right]}.
  \label{eq:main-var-ft}
\end{align}
\end{theorem}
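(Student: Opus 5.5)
Writing $\bP_z:=\bX_z^\top\bX_z+\lambda_z\bI_d$, the closed forms \eqref{eq:closed-form} give
\[
\hat\bbeta_{\ft}-\bbeta_{\ft}=\lambda_{\ft}\bP_{\ft}^{-1}(\hat\bbeta_{\pt}-\bbeta_{\ft})+\bP_{\ft}^{-1}\bX_{\ft}^\top\boldsymbol{\eps}_{\ft},
\qquad
\hat\bbeta_{\pt}=\bbeta_{\pt}-\lambda_{\pt}\bP_{\pt}^{-1}\bbeta_{\pt}+\bP_{\pt}^{-1}\bX_{\pt}^\top\boldsymbol{\eps}_{\pt}.
\]
Averaging over the two independent noise vectors kills the cross terms. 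The risk therefore splits exactly into three pieces. The first is the target variance $\sigma_{\ft}^2\Tr(\bSigma_{\ft}\bP_{\ft}^{-1}\bX_{\ft}^\top\bX_{\ft}\bP_{\ft}^{-1})$. The second is the source variance $\sigma_{\pt}^2\Tr(\bA\,\bP_{\pt}^{-1}\bX_{\pt}^\top\bX_{\pt}\bP_{\pt}^{-1})$, where $\bA:=\lambda_{\ft}^2\bP_{\ft}^{-1}\bSigma_{\ft}\bP_{\ft}^{-1}$. The third is the bias $\bv^\top\bA\bv$ with $\bv:=(\bI-\lambda_{\pt}\bP_{\pt}^{-1})\bbeta_{\pt}-\bbeta_{\ft}$. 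The key structural point is that, conditional on $\bX_{\ft}$, the matrix $\bA$ is a fixed PSD matrix independent of the source design. I will therefore proceed in two stages: first apply deterministic equivalents for the source design with $\bA$ frozen, then apply deterministic equivalents for the target design.

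\textbf{Stage 1 (target side).} Using the known non-asymptotic deterministic equivalents for ridge functionals under Assumption~\ref{asm:distr} (as in \citet{misiakiewicz2024non}), with the stated conditions on $\rho_{\ft}$ and $\lambda_{\ft}$, I expect three facts. First, $\sigma_{\ft}^2\Tr(\bSigma_{\ft}\bP_{\ft}^{-1}\bX_{\ft}^\top\bX_{\ft}\bP_{\ft}^{-1})=\overline{\cV}_{\ft}(1+O(\varepsilon_{\ft}))$. Second, for any fixed PSD $\bB$, $\Tr(\bA\bB)=c_{\ft}\Tr(\bW_{\ft}\bB)(1+O(\varepsilon_{\ft}))$. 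Third, for any fixed vector $\bu$, $\bu^\top\bA\bu=c_{\ft}\|\bW_{\ft}^{1/2}\bu\|_2^2(1+O(\varepsilon_{\ft}))$. The prefactor $c_{\ft}$ is exactly the usual $\lambda^2 n/(n-\mathrm{df}_2)$ correction obtained from $\lambda_{\ft}\bP_{\ft}^{-1}\approx \lamonmu_{\ft}(\bSigma_{\ft}+\lamonmu_{\ft}\bI)^{-1}$ together with its second-order (variance-inflation) term.

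\textbf{Stage 2 (source side, conditional on $\bX_{\ft}$).} Apply the same equivalents to the source design with test matrix $\bA$. The source variance becomes $\sigma_{\pt}^2c_{\pt}\Tr(\bA\bW_{\pt})$, and Stage 1 turns this into $\overline{\cV}_{\pt}$. For the bias, expand $\bv^\top\bA\bv$ with the standard ridge-bias equivalent for a general test matrix:
\[
\E\text{-equivalent of }\bv^\top\bA\bv=\bar\bv^\top\bA\bar\bv+c_{\pt}\lamonmu_{\pt}^2\,\bbeta_{\pt}^\top\bW_{\pt}\bbeta_{\pt}\,\Tr(\bA\bW_{\pt}),
\]
where $\bar\bv=\bSigma_{\pt}(\bSigma_{\pt}+\lamonmu_{\pt}\bI)^{-1}\bbeta_{\pt}-\bbeta_{\ft}$. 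Stage 1 then converts the first term into $\overline{\cB}_1$ and the second into $\overline{\cB}_2$.

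The main obstacle is the error control in the bias term, where $\bbeta_{\ft}$ is not aligned with the source problem. The fluctuation of $\lambda_{\pt}\bP_{\pt}^{-1}\bbeta_{\pt}$ around $\lamonmu_{\pt}(\bSigma_{\pt}+\lamonmu_{\pt}\bI)^{-1}\bbeta_{\pt}$ enters linearly through cross terms $\bar\bv^\top\bA(\cdot)$, and these cannot be bounded relative to $\overline{\cR}$ alone. I would bound them by Cauchy–Schwarz in the $\bA$-norm: a relative error $\varepsilon_{\pt}$ times the scale $\lamonmu_{\pt}^2\,\bbeta_{\pt}^\top(\bSigma_{\pt}+\lamonmu_{\pt}\bI)^{-1}\bbeta_{\pt}\,\|\bA^{1/2}(\bSigma_{\pt}+\lamonmu_{\pt}\bI)^{-1}\bA^{1/2}\|_{\op}$. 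This quantity is controlled using the operator-norm version of the Stage 1 equivalent, $\bA\preceq c_{\ft}\bW_{\ft}(1+O(\varepsilon_{\ft}))$ in the relevant sense, which produces exactly the term $\varepsilon_{\pt}\mathcal E_{\pt}$. This step needs care because $\bA$ is random. I would handle it by conditioning on $\bX_{\ft}$ and applying the source equivalents uniformly over test matrices via the $\rho$-dependent bounds, which requires no net argument since the source bounds hold for each fixed $\bA$. Finally, take a union bound over both stages to obtain the probability $1-n_{\pt}^{-D}-n_{\ft}^{-D}$, and absorb the products $\varepsilon_{\pt}\varepsilon_{\ft}$ into the constant $C_{D,K}$.
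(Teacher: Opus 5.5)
Your exact decomposition is right, your four deterministic terms are right, and the variance terms go through in either conditioning order. The gap is in the bias error control. It comes from your choice to freeze $\bA=\lambda_{\ft}^2\bP_{\ft}^{-1}\bSigma_{\ft}\bP_{\ft}^{-1}$ and remove the source design first.

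\textbf{Where the argument breaks.} With $\bA$ frozen, the source equivalents for the cross term $\lambda_{\pt}(\bbeta_{\pt}-\bbeta_{\ft})^\top\bA\bP_{\pt}^{-1}\bbeta_{\pt}$ and for the quadratic term carry additive errors. Their scale involves $(\bbeta_{\pt}-\bbeta_{\ft})^\top\bA(\bSigma_{\pt}+\lamonmu_{\pt}\bI_d)^{-1}\bA(\bbeta_{\pt}-\bbeta_{\ft})$, or after Cauchy--Schwarz $\|\bA^{1/2}(\bSigma_{\pt}+\lamonmu_{\pt}\bI_d)^{-1}\bA^{1/2}\|_{\mathrm{op}}$. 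Both are random in $\bX_{\ft}$.

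To turn them into the deterministic $\mathcal E_{\pt}$ you appeal to ``$\bA\preceq c_{\ft}\bW_{\ft}(1+O(\varepsilon_{\ft}))$ in the relevant sense''. The target-side equivalents do not give this. They hold for each fixed test vector or matrix (traces and quadratic forms), not uniformly over directions, so they provide no PSD-order or operator-norm control of $\bA$.

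The PSD statement is in fact false in general. For $\boldsymbol u\in\ker\bX_{\ft}$ you have $\lambda_{\ft}\bP_{\ft}^{-1}\boldsymbol u=\boldsymbol u$, hence $\boldsymbol u^\top\bA\boldsymbol u=\boldsymbol u^\top\bSigma_{\ft}\boldsymbol u$. Meanwhile $c_{\ft}\bW_{\ft}$ damps an eigendirection with eigenvalue $\xi$ by roughly $\lamonmu_{\ft}^2/(\xi+\lamonmu_{\ft})^2$ relative to $\bSigma_{\ft}$. When $d\gg n_{\ft}$, you can add to a top eigenvector a tail component of negligible $\bSigma_{\ft}$-energy and land in $\ker\bX_{\ft}$, so the domination fails by a large factor.

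Handling the quadratic form with two copies of $\bA$ directly does not rescue this either. It is a four-resolvent functional, outside the deterministic equivalents you are invoking.

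\textbf{The fix: reverse the order, as the paper does.} Condition on $\bX_{\pt}$ and remove the target design first. Then $\boldsymbol v=\bbeta_{\pt}-\bbeta_{\ft}-\lambda_{\pt}\bP_{\pt}^{-1}\bbeta_{\pt}$ and $\bP_{\pt}^{-1}\bX_{\pt}^\top\bX_{\pt}\bP_{\pt}^{-1}$ are fixed. The target equivalents give $c_{\ft}\boldsymbol v^\top\bW_{\ft}\boldsymbol v$ and $c_{\ft}\Tr(\bW_{\ft}\bP_{\pt}^{-1}\bX_{\pt}^\top\bX_{\pt}\bP_{\pt}^{-1})$ with multiplicative error $\varepsilon_{\ft}$. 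Every remaining source-random functional now carries the deterministic weight $\bW_{\ft}$.

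The cross-term error then reduces, via the deterministic inequality $\bW_{\ft}(\bSigma_{\pt}+\lamonmu_{\pt}\bI_d)^{-1}\bW_{\ft}\preceq\|\bW_{\ft}^{1/2}(\bSigma_{\pt}+\lamonmu_{\pt}\bI_d)^{-1}\bW_{\ft}^{1/2}\|_{\mathrm{op}}\bW_{\ft}$, to $\varepsilon_{\pt}\sqrt{\mathcal E_{\pt}\,c_{\ft}(\bbeta_{\pt}-\bbeta_{\ft})^\top\bW_{\ft}(\bbeta_{\pt}-\bbeta_{\ft})}$. This is a geometric mean, not $\varepsilon_{\pt}$ times the scale. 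To close the argument you still need $c_{\ft}(\bbeta_{\pt}-\bbeta_{\ft})^\top\bW_{\ft}(\bbeta_{\pt}-\bbeta_{\ft})\le 2\overline{\cB}+2\mathcal E_{\pt}$. Get this by splitting $\bbeta_{\pt}-\bbeta_{\ft}$ into the vector defining $\overline{\cB}_1$ plus $\lamonmu_{\pt}(\bSigma_{\pt}+\lamonmu_{\pt}\bI_d)^{-1}\bbeta_{\pt}$. Young's inequality then gives the bound $C\varepsilon_{\pt}(\overline{\cB}+\mathcal E_{\pt})$ claimed in the theorem.
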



The proof of Theorem \ref{thm:total-risk} is deferred to Appendix \ref{app:deteq}, and it uses the deterministic equivalents for various functionals proved in  \citep[Theorem 6]{wu2026improved}. We note that, while \cite{misiakiewicz2024non} establish multiplicative error bounds, the error bound in \eqref{eq:main-decomposition} is  $O(\max(\eps_\pt, \eps_\ft))$ and it is additive, as in \citep{wu2026improved, wu2026data}, due to asymmetric terms in the risk expression. We expect that a multiplicative error bound can be established
in certain structured settings, such as the power-law model
considered in Section~\ref{sec:scaling}.

\section{SCALING LAWS for Transfer Learning}
\label{sec:scaling}

We start by describing in Section \ref{subsec:power} the power-law model under which scaling laws are derived. Then, in Section \ref{sec:minimax-sub}, we recall minimax results from \cite{wu2026data} using both source and target data; in Section \ref{sec:scalinglaw-sub}, we  establish scaling laws for the ridge estimator in \eqref{eq:finetune} obtained via transfer learning; finally, in Section \ref{sec:sgd-sub} we give scaling laws for one-pass SGD initialized from the source estimator in \eqref{eq:pretrain}.

\subsection{Power-law model}\label{subsec:power}

Throughout, we consider $d=\infty$ and assume that \emph{(i)} 
$\bSigma_{\pt}, \bSigma_{\ft}$ commute, so that w.l.o.g.\ we can take them to be diagonal, \emph{(ii)} the source--target tasks share the same regression weights $\bbeta_\pt = \bbeta_\ft = \bbeta_*,$ and \emph{(iii)} the noise variances $\sigma^2_{\eps_\pt}, \sigma^2_{\eps_\pt} = \Theta(1).$

More precisely, for \emph{(i)}, we consider power-law spectra
\begin{equation}
  (\bSigma_{\pt})_{k,k}\asymp k^{-a},
  \quad
  (\bSigma_{\ft})_{k,k}\asymp k^{-b},
  \quad a,b>1.
\label{eq:power-law-spectra}
\end{equation}

For \emph{(ii)}, we study two types of signals $\bbeta_*.$ For the minimax model in Section \ref{sec:minimax-sub}, we assume that the signal $\bbeta_*$ belongs to the augmented parameter
class $\bTheta(R)$, defined for fixed $r_{\pt},r_{\ft}>0$
and a radius $R>0$ independent of $n_{\ft},n_{\pt}$ by
\begin{equation}
  \bTheta(R):=
  \left\{\bbeta:
 \|\bSigma_{\pt}^{1/2-r_{\pt}}\bbeta\|_2\le R,\,\,
   \|\bSigma_{\ft}^{1/2-r_{\ft}}\bbeta\|_2\le R
  \right\}.
  \label{eq:minimax-class}
\end{equation}

The conditions in \eqref{eq:minimax-class} correspond to a source condition \citep{caponnetto2007optimal}. Defining 
\begin{equation}
  \delta:=b-a,
  \,\,
  s:=(br_{\ft})\vee
  \left(ar_{\pt}+\frac{\delta}{2}\right),
  \,\,
  \gamma_{\rm c}
  :=1-\frac{\delta}{1+2s},
\label{eq:minimax-parameters}
\end{equation}  \eqref{eq:minimax-class} is, up to multiplicative constants in the radius, equivalent to the ellipsoid $\{\bbeta: \sum_k k^{2s -b} (\bbeta)_k^2 \leq R^2\}$, see \citep[Appendix C]{wu2026data}.

When studying the scaling law of the ridge estimator (Section \ref{sec:scalinglaw-sub}) and one-pass SGD (Section \ref{sec:sgd-sub}), we consider $\bbeta_*$ satisfying 
\begin{equation}
  (\bSigma_{\pt}^{1/2}\bbeta_*)_k^2
  \asymp k^{-1-2ar_{\pt}},
  \;
  (\bSigma_{\ft}^{1/2}\bbeta_*)_k^2
  \asymp k^{-1-2br_{\ft}},
  \label{eq:source-conditions}
\end{equation}
in the regime where $br_{\ft}=ar_{\pt}+\delta/2.$ This also implies that $ s = br_{\ft}=ar_{\pt}+\delta/2$ by definition. Equivalently, we can write
$(\bbeta_*)_k^2\asymp k^{-1 +b-2s}.$ We further restrict to the regime where $2s > b$ so that $\bbeta_*$ is square-integrable.

Note that $\bbeta_*$ lies outside of $\bTheta(R).$ 
Nevertheless, the scaling law on $\bbeta_*$ reflects the worst case in the ellipsoid. To see this, for any $\bbeta \in \bTheta(R),$ letting
$\bz=\bSigma_{\ft}^{1/2}\bbeta$ and $\bz_{*,k}=\bSigma_{\ft}^{1/2}\bbeta_*,$  we have
\[
 \sum_{k=M}^{2M-1}\bz_{*,k}^2
 \asymp M^{-2s}
 \asymp
 \sup_{\bbeta\in\bTheta(R)}
 \sum_{k=M}^{2M-1}\bz_k^2.
\]
Thus, on each spectral band $M\le k<2M$, $\bz_*$ has the same
energy order as the largest allowed by the ellipsoid.
The bias term in the test error sums the contribution of each band and is governed
by the dominant band, up to logarithmic factors.
Thus, the worst-case bias is obtained by picking a signal in the ellipsoid that concentrates its energy on
the worst band, yielding the same bias exponent as $\bbeta_*.$
As the variance does not depend on $\bbeta_*,$ this gives
the same polynomial scaling exponent for the test error.

We set $n:=n_{\ft}$ and take
\begin{equation}
  n_{\pt}\asymp n^{\gamma_n},
  \qquad
  \lambda_{\pt}\asymp n^{-\gamma_{\lambda_{\pt}}},
  \qquad
  \lambda_{\ft}\asymp n^{-\gamma_{\lambda_{\ft}}}.
\label{eq:standing-sample-ridge-scaling}
\end{equation}
We consider $\gamma_n > 0$ and ridge penalties satisfying
$\gamma_{\lambda_{\pt}}>-\gamma_n$ and
$\gamma_{\lambda_{\ft}}>-1$, as well as the target-only
($\lambda_{\pt}=\infty$) and source-only
($\lambda_{\ft}=\infty$) cases.
This condition on ridge penalties simply ensures that  $n_z^{-1} \lambda_z = O(1)$ for $z \in \{\pt,\ft\}$, see also the remark on stable regularization in \cite{wu2026improved}. 

\subsection{Minimax rates}
\label{sec:minimax-sub}

 For a given dataset or pair of datasets
$\cD$, define the minimax risk as
\begin{equation}
 \cR^*(n_{\ft},n_{\pt}) =  \inf_{\hat\bbeta=\hat\bbeta(\cD)}
  \sup_{\bbeta\in\bTheta(R)}
  \E_{\bbeta}\|\bSigma_{\ft}^{1/2}(\hat\bbeta-\bbeta)\|_2^2,
\label{eq:minimax-risk-definition}
\end{equation} where $\E_{\beta}$ denotes the expectation w.r.t the data distribution generated by the signal $\beta.$ 

\begin{theorem}[Minimax scaling laws, Theorem 3 in \cite{wu2026data}] 
\label{thm:minimax} 
Under the linear model in \eqref{eq:two-tasks}, the power-law spectra
in \eqref{eq:power-law-spectra}, and the parameter class
\(\bTheta(R)\) in \eqref{eq:minimax-class},
the minimax risks for individual datasets satisfy
\begin{equation}
  \cR^*(n,0)\asymp n^{-\Gamma_{\ft}},
  \quad
  \cR^*(0,n_{\pt})
  \asymp
  \begin{cases}
    n^{-\Gamma_{\pt}}, & \delta<1,\\[1mm]
    n^{-\gamma_n}\log n, & \delta=1,\\[1mm]
    n^{-\gamma_n}, & \delta>1,
  \end{cases}
\label{eq:minimax-single-dataset-rates}
\end{equation}
and the minimax risk using both datasets satisfies
\begin{equation}
  \cR^*(n,n_{\pt})
  \asymp
  \begin{cases}
    \cR^*(n,0),
    & \gamma_n\leq\gamma_{\rm c},\\[1mm]
    \cR^*(0,n_{\pt}),
    & \delta\leq1,\ \gamma_n>\gamma_{\rm c},\\[1mm]
    \cR^*(0,n_{\pt}),
    & \delta>1,\ \gamma_n\geq1,\\[1mm]
    n^{-(\delta-1+\gamma_n)/\delta},
    & \delta>1,\ \gamma_{\rm c}<\gamma_n<1.
  \end{cases}
\label{eq:minimax-rates}
\end{equation}
The implicit constants may depend on constants $R,s, a,b$  and noise levels $\sigma^2_\pt, \sigma^2_\ft.$
\end{theorem}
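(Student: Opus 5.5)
The plan is the standard two-part minimax argument: upper bounds from explicit estimators, and lower bounds from Fano/Assouad over a hypercube inside $\bTheta(R)$. Everything is organized by spectral bands $M\le k<2M$. Both covariances are diagonal, so coordinate $k$ is observed through the source with per-sample signal-to-noise $\asymp k^{-a}$ and through the target with $\asymp k^{-b}$.

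Pooling the two datasets therefore gives, for coordinate $k$, Fisher information
\[
I_k\asymp n_{\pt}k^{-a}+n k^{-b}.
\]
The target-weighted error of estimating $(\bbeta)_k$ is $\asymp k^{-b}/I_k$. This equals $1/n$ when the target term dominates, and $k^{-\delta}n^{-\gamma_n}$ when the source term dominates. The source term dominates for $k\lesssim n^{(\gamma_n-1)/\delta}$ when $\delta>0$.

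\textbf{Upper bound.} I use a spectral cutoff estimator. Each coordinate $k\le N$ is estimated by the inverse-variance weighted combination of the two marginal least-squares estimates, and coordinates $k>N$ are set to zero. The bias on the tail is
\[
\sup_{\bTheta(R)}\sum_{k>N}k^{-b}(\bbeta)_k^2\asymp N^{-2s},
\]
using the ellipsoid $\{\sum_k k^{2s-b}(\bbeta)_k^2\le R^2\}$. The variance is $\sum_{k\le N}k^{-b}/I_k$.

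I then balance the two terms in each regime.
\begin{itemize}
\item Target-only: $N/n$ against $N^{-2s}$ gives $N=n^{1/(1+2s)}$ and rate $n^{-2s/(1+2s)}=n^{-\Gamma_{\ft}}$.
\item Source-only with $\delta<1$: $n^{-\gamma_n}N^{1-\delta}$ against $N^{-2s}$ gives $n^{-\Gamma_{\pt}}$.
\item Source-only with $\delta\ge1$: the variance sum converges to $n^{-\gamma_n}$, or picks up a $\log$ at $\delta=1$, so the cutoff can be taken large.
\item Joint, $\delta>1$, $\gamma_{\rm c}<\gamma_n<1$: coordinates up to $k_0=n^{(\gamma_n-1)/\delta}$ are source-dominated and contribute $O(n^{-\gamma_n})$. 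Coordinates between $k_0$ and $N$ contribute $(N-k_0)/n$. When $\gamma_n>\gamma_{\rm c}$ one has $k_0\gtrsim n^{1/(1+2s)}$, so taking $N=k_0$ gives bias $k_0^{-2s}$. The total is then
\[
\max\bigl(n^{-\gamma_n},\,k_0^{-2s}\bigr)\quad\text{versus}\quad k_0/n=n^{-(\delta-1+\gamma_n)/\delta}.
\]
I need to check exponents at this point: $k_0/n$ matches the stated rate, and I expect the $\gamma_n>\gamma_{\rm c}$ condition to be exactly what makes $k_0^{-2s}\lesssim k_0/n$.
\end{itemize}
The regime boundaries are read off as the points where the dominant terms change. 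With random designs I handle the cross terms of the least-squares estimates by conditioning on the design and using concentration on each band, or by sample splitting, which only costs constants.

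\textbf{Lower bound.} I apply Assouad's lemma on a band $[M,2M)$. The hypotheses are $\bbeta_\omega=\sum_{k\in[M,2M)}\omega_k\eta_k\be_k$ with $\omega\in\{\pm1\}^M$. The amplitude $\eta_k$ is chosen so that each hypothesis lies in $\bTheta(R)$ and the KL divergence between neighbours is $O(1)$. The Gaussian noise gives
\[
\mathrm{KL}\asymp \eta_k^2\bigl(n k^{-b}+n_{\pt}k^{-a}\bigr)\sigma^{-2},
\]
since design expectations add over samples. Assouad then yields a risk $\gtrsim\sum_k k^{-b}\eta_k^2$, which I optimize over $M$ to recover each case. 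The $\log n$ at $\delta=1$ needs hypotheses spread over all dyadic bands at once, that is, a multi-scale hypercube. The single-dataset rates are the specializations $n_{\pt}=0$ and $n=0$.

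\textbf{Main obstacle.} I expect the bookkeeping in the intermediate regime to be the hardest part: matching the upper and lower exponents $(\delta-1+\gamma_n)/\delta$ exactly, and verifying that the transition thresholds $\gamma_{\rm c}$ and $1$ come out of the same band optimization on both sides. Since this theorem is quoted as Theorem 3 of \cite{wu2026data}, I would also cross-check against their band-wise computation.
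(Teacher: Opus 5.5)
The paper does not prove this theorem; it is quoted from \cite{wu2026data}. So I assess your plan on its own. Your target-only and source-only balances and the Assouad construction are fine. The upper bound in the key regime $\delta>1$, $\gamma_{\rm c}<\gamma_n<1$, however, rests on an inverted picture of which dataset dominates which frequencies. Since $\delta=b-a>0$, the target eigenvalues decay faster, so $n_{\pt}k^{-a}\ge nk^{-b}$ if and only if $k^{\delta}\ge n^{1-\gamma_n}$. The source therefore dominates the \emph{high} frequencies $k\gtrsim k_0:=n^{(1-\gamma_n)/\delta}$, and the target dominates the low ones. Your threshold $n^{(\gamma_n-1)/\delta}$ is below $1$ when $\gamma_n<1$, so your source-dominated block is empty and everything downstream is reversed. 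In particular, $\gamma_n>\gamma_{\rm c}$ is equivalent to $k_0<n^{1/(1+2s)}$, not $k_0\gtrsim n^{1/(1+2s)}$. Truncating at $N=k_0$ therefore leaves worst-case bias $k_0^{-2s}\gg k_0/n$, the opposite of what you expect. Moreover, any $n^{-\gamma_n}$ term in your bound already exceeds the claimed rate, since $\frac{\delta-1+\gamma_n}{\delta}-\gamma_n=\frac{(\delta-1)(1-\gamma_n)}{\delta}>0$. So the estimator you describe, with cutoff at $k_0$, cannot reach $n^{-(\delta-1+\gamma_n)/\delta}$.

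The missing idea is that in this regime the rate is pure variance. Keep all coordinates up to a cutoff $N\gtrsim (n/k_0)^{1/(2s)}$, or drop the cutoff, so that the bias $N^{-2s}\lesssim k_0/n$ is negligible. Your inverse-variance weights then automatically use the target for $k\lesssim k_0$, at total cost $k_0/n$. For $k\gtrsim k_0$ they use the source, at cost $\sum_{k>k_0}k^{-\delta}/n_{\pt}\asymp k_0^{1-\delta}/n_{\pt}=k_0/n$; this tail is finite only because $\delta>1$. The threshold $\gamma_{\rm c}$ is exactly where $k_0/n$ drops below $n^{-2s/(1+2s)}$. For $\gamma_n\le\gamma_{\rm c}$, the target-only cutoff $n^{1/(1+2s)}$ lies inside the target-dominated block, and the source data cannot help. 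The paper's achievability results have the same structure. In \Cref{cor:minimax-optimal-ridge-transfer}, the target cutoff exponent is $v=(1-\gamma_n)/\delta$, i.e.\ the target cutoff is $k_0$. In the SGD proof, the scale $K$ equals $k_0$ up to logarithms, the source-propagated term is $\asymp K^{1-\delta}/n_{\pt}$, and the bias is of lower order. On the lower-bound side, placing the band at $M\asymp k_0$ in your Assouad bound gives $\min\{k_0^{-2s},\,k_0/(n+n_{\pt}k_0^{\delta})\}\asymp k_0/n$, precisely because $\gamma_n>\gamma_{\rm c}$. So that half goes through once the direction is fixed.
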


Theorem~\ref{thm:minimax} shows that, when
$\delta>1$ and $\gamma_c<\gamma_n<1$, using both datasets
yields a strictly faster minimax rate than using either dataset
alone. Furthermore, \cite{wu2026data} also show that this improved minimax rate can be achieved by a ridge estimator. However, both the minimax rate of Theorem~\ref{thm:minimax} and the ridge estimator of \cite{wu2026data} require access to both source and target data at the same time. In contrast, in the transfer learning model we are considering, the target estimator does not have access to the source data per se, but rather only to the pretrained estimator. We will next show that, despite this restriction, minimax rates remain achievable both by ridge regression and one-pass SGD. 

\subsection{Scaling laws for ridge estimator}
\label{sec:scalinglaw-sub}
We now study the estimator in \eqref{eq:finetune}, and derive the scaling law for the deterministic equivalent of its risk.
To express the resulting rates, we introduce the effective cutoff
exponents
\begin{equation}
  u:=\frac{(\gamma_{\lambda_{\pt}}+\gamma_n)
            \wedge(a\gamma_n)}{a},
  \quad
  v:=\frac{(1+\gamma_{\lambda_{\ft}})\wedge b}{b},
\label{eq:u-v-def}
\end{equation}
and 
\begin{equation}
  \Gamma_{\ft}
  :=\frac{2s}{1+2s},
  \qquad
  \Gamma_{\pt}
  :=\frac{2s\gamma_n}{1+2s-\delta}.
\label{eq:minimax-exponents}
\end{equation}

\begin{theorem}[Scaling laws for ridge estimator]
\label{thm:ridge-transfer-scaling}
Under the assumptions of Theorem \ref{thm:total-risk}, for any fixed $u,v,$  define the following quantities 
\begin{equation}
\begin{aligned}
  \Gamma_{\cV_{\ft}}&=1-v,
  \qquad
  &\Gamma_{\cB_1}&=2bv+\Gamma_{\rm sh},
  \\
  \Gamma_{\cV_{\pt}}&=\gamma_n+2bv+\Gamma_T,
  \qquad
  &\Gamma_{\cB_2}&=\Gamma_{\cV_{\pt}}+2au+\Gamma_S,
\end{aligned}
\label{eq:four-exponents}
\end{equation} where $\Gamma_S=-(a+b-2br_{\ft})_+u,$ \begin{equation*} 
\begin{split}
    \Gamma_T=
  \begin{cases}
    -(a{+}b{+}1)u-(\delta{+}1)_+(v-u), & u\le v,\\[1mm]
    -(a{+}b{+}1)v-(1{-}\delta)_+(u-v), & v\le u,
  \end{cases}
\end{split}
\end{equation*}
and 
\begin{equation*}
  \Gamma_{\rm sh}\hspace{-.2em}=\hspace{-.2em}
  \begin{cases}
\hspace{-.1em}      2au\hspace{-.2em}-\hspace{-.2em}2bu\big(1{+}\tfrac ab{-}r_{\ft}\big)_+\hspace{-.5em}-\hspace{-.2em}2b(v{-}u)(1{-}r_{\ft})_+, \,\, u\hspace{-.1em}\le\hspace{-.1em} v,\\
 \hspace{-.2em}     2au\hspace{-.2em}-\hspace{-.2em}2bv\big(1{+}\tfrac ab{-}r_{\ft}\big)_+\hspace{-.5em}-\hspace{-.2em}2b(u{-}v)\big(\tfrac ab{-}r_{\ft}\big)_+,\,\,
     v\hspace{-.1em}\le\hspace{-.1em} u.
  \end{cases}
\end{equation*}
Then, the deterministic equivalents of the bias and variance terms satisfy \begin{equation*}
      \begin{split}
          &\overline{\cB}_1 =  \widetilde\Theta\!\left(n^{-\Gamma_{\cB_1}}\right), \quad \overline{\cB}_2 = \widetilde\Theta\!\left(n^{-\Gamma_{\cB_2}}\right),\\
          & \overline{\cV}_\pt = \widetilde\Theta\!\left(n^{-\Gamma_{\cV_\pt}}\right), \quad \overline{\cV}_\ft = \widetilde\Theta\!\left(n^{-\Gamma_{\cV_\ft}}\right),
      \end{split}
  \end{equation*} and therefore $\overline{\cR}=\widetilde\Theta\!\left(n^{-\gammatrans}\right)$, with
\begin{equation}
  \gammatrans=\min \{\Gamma_{\cV_{\ft}}, \Gamma_{\cB_1}, \Gamma_{\cV_{\pt}}, \Gamma_{\cB_2}\}.
\label{eq:Gamma-trans-main}
\end{equation}
\end{theorem}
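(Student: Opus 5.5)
The plan is to evaluate each of the four deterministic terms of Theorem~\ref{thm:total-risk} directly under the diagonal power-law model, by turning every trace and quadratic form into a sum over $k$ and estimating it band by band.

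\textbf{Step 1: effective ridges and prefactors.} Solve the fixed-point equation \eqref{eq:effective-ridge-main} under \eqref{eq:power-law-spectra}. With $\tau_z$ polynomial in $n$, the trace $\Tr[\bSigma_z(\bSigma_z+\tau_z\bI)^{-1}]$ is of order $\tau_z^{-1/a_z}$, where $a_\pt=a$ and $a_\ft=b$. This gives $\tau_\pt\asymp n^{-au}$ and $\tau_\ft\asymp n^{-bv}$, with $u,v$ as in \eqref{eq:u-v-def}. When $\lambda_z$ is large relative to $n_z$ we get $\tau_z\asymp\lambda_z/n_z$; otherwise $\tau_z$ saturates at $n_z^{-a_z}$, which is the source of the minima in $u,v$. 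The same computation gives $\Tr[\bSigma_z^2(\bSigma_z+\tau_z)^{-2}]\asymp\tau_z^{-1/a_z}$, which is at most a constant fraction of $n_z$ under stable regularization. Hence $c_\pt\asymp n^{-\gamma_n}$ and $c_\ft\asymp\tau_\ft^2\asymp n^{-2bv}$. The variance $\overline{\cV}_\ft\asymp n^{-1}\tau_\ft^{-1/b}=n^{-(1-v)}$ then follows immediately.

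\textbf{Step 2: the remaining terms as diagonal sums.} Using $(\bW_z)_{kk}=\sigma_{z,k}/(\sigma_{z,k}+\tau_z)^2$, write each term as $\sum_k$ of a product of piecewise-monomial factors. For example,
\[
\Tr(\bW_\ft\bW_\pt)=\sum_k \frac{k^{-b}}{(k^{-b}+n^{-bv})^2}\cdot\frac{k^{-a}}{(k^{-a}+n^{-au})^2}.
\]
Each factor behaves as a monomial in $k$ on the ranges $k\lesssim n^{u}$ and $k\gtrsim n^{u}$ for the source, and $k\lesssim n^{v}$ and $k\gtrsim n^{v}$ for the target. The summand is therefore a monomial on three ranges, split at $n^{u\wedge v}$ and $n^{u\vee v}$. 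The sum is governed by the largest endpoint contribution, or picks up a $\log n$ when the exponent equals $-1$; this is why the result is stated up to $\widetilde\Theta$. Carrying out this bookkeeping gives $n^{\Gamma_T}$ for the trace, with the two cases $u\le v$ and $v\le u$, and the factors $(\delta+1)_+$ and $(1-\delta)_+$ arising from the middle range. Using $(\bbeta_*)_k^2\asymp k^{-1+b-2s}$, the same procedure gives $\bbeta_*^\top\bW_\pt\bbeta_*\asymp n^{2au+\Gamma_S}$. Combining with the prefactors $c_\ft c_\pt\tau_\pt^2$ and $\sigma_\pt^2c_\ft c_\pt$ yields $\Gamma_{\cB_2}$ and $\Gamma_{\cV_\pt}$ as stated.

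\textbf{Step 3: the shrinkage bias $\overline{\cB}_1$.} Because $\bbeta_\pt=\bbeta_\ft$, the vector inside the norm in $\overline{\cB}_1$ has coordinates
\[
-\frac{\tau_\pt}{\sigma_{\pt,k}+\tau_\pt}(\bbeta_*)_k .
\]
So $\overline{\cB}_1=c_\ft\sum_k (\bW_\ft)_{kk}\,\tau_\pt^2(\sigma_{\pt,k}+\tau_\pt)^{-2}(\bbeta_*)_k^2$. Again split the sum at $n^{u}$ and $n^{v}$, using $2s=2br_\ft$ to express the exponents through $r_\ft$. Each of the three ranges contributes a term of the form $(\cdot)_+$: when the exponent has the favourable sign, the corresponding endpoint of that range dominates. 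This produces $\Gamma_{\rm sh}$ and hence $\Gamma_{\cB_1}=2bv+\Gamma_{\rm sh}$. The final claim $\overline{\cR}=\widetilde\Theta(n^{-\gammatrans})$ follows because all four terms are nonnegative, so their sum is of the order of the largest one.

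\textbf{Main obstacle.} I expect the hardest part to be the careful case analysis in Steps 2--3. One must track which endpoint dominates each piecewise-monomial sum, checking signs to obtain the $(\cdot)_+$ expressions and handling the boundary cases that produce logarithms. The saturated regimes $u=\gamma_n$ or $v=1$ also need separate care, since there Step 1 requires verifying that the denominators $n_z-\Tr[\cdot]$ stay of order $n_z$. I would first prove a generic lemma: for $\sum_{k}k^{-p}$ split at $n^{\alpha}<n^{\beta}$, the sum is $\widetilde\Theta$ of the maximum of the endpoint contributions. I would then apply this lemma mechanically to each term.
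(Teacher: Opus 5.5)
Your route is the paper's: solve the fixed-point equation for $\tau_{\pt},\tau_{\ft}$, read off $c_{\pt}\asymp n^{-\gamma_n}$ and $c_{\ft}\asymp\tau_{\ft}^2$, and evaluate $\Tr(\bW_{\ft}\bW_{\pt})$, $\bbeta_*^\top\bW_{\pt}\bbeta_*$ and $B_{\rm sh}$ as piecewise-monomial sums split at $n^{u}$ and $n^{v}$. The paper takes this bookkeeping from \Cref{lem:trace_approx} and \Cref{lem:one-scale-polynomial-sums} rather than a new lemma, and your three-range computation of $B_{\rm sh}$ does reproduce $\Gamma_{\rm sh}$.

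\textbf{The denominator step is not justified.} You need $n_z-\Tr[\bSigma_z^2(\bSigma_z+\tau_z\bI)^{-2}]\asymp n_z$, and you attribute it to stable regularization. That condition only bounds $\lambda_z$ from above, while the difficulty is at small $\lambda_z$. The relevant threshold is $\lambda_z\asymp n_z^{1-a_z}$, not $\lambda_z\asymp n_z$: above it $\tau_z\asymp\lambda_z/n_z$, and at or below it $\tau_z\asymp n_z^{-a_z}$, i.e.\ $u=\gamma_n$ or $v=1$. In that saturated regime $\tau_z^{-1/a_z}\asymp n_z$. So the two-sided estimate $S_{2,a_z}(\tau_z)\asymp\tau_z^{-1/a_z}$ cannot tell you whether $n_z-S_{2,a_z}(\tau_z)$ is of order $n_z$ or much smaller. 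Moreover, $S_{1,a_z}(\tau_z)=n_z-\lambda_z/\tau_z=n_z(1-o(1))$ once $\lambda_z\ll n_z^{1-a_z}$. The missing idea is a lower bound on the gap between the two traces,
\[
S_{1,a_z}(\tau_z)-S_{2,a_z}(\tau_z)=\sum_{k}\frac{\tau_z\,\xi_{z,k}}{(\xi_{z,k}+\tau_z)^2}\asymp\tau_z^{-1/a_z}.
\]
Combined with the fixed-point identity $n_z-S_{2,a_z}(\tau_z)=\lambda_z/\tau_z+\bigl(S_{1,a_z}(\tau_z)-S_{2,a_z}(\tau_z)\bigr)$, this gives $n_z-S_{2,a_z}(\tau_z)\asymp n_z$ in every regime. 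This is exactly \Cref{lemm:effective-dimension-traces} together with \Cref{prop:uniform-denominator-scaling}.

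\textbf{The orders stated in Step 2 are wrong.} Taken literally, they do not reproduce $\Gamma_{\cV_{\pt}}$ or $\Gamma_{\cB_2}$.
\begin{itemize}
\item Since $\Gamma_T<0$, the cross-trace grows: $\Tr(\bW_{\ft}\bW_{\pt})=\widetilde\Theta(n^{-\Gamma_T})$, not $n^{\Gamma_T}$. The range $k\le n^{u\wedge v}$ alone already contributes $n^{(a+b+1)(u\wedge v)}$.
\item Likewise, $\bbeta_*^\top\bW_{\pt}\bbeta_*=\widetilde\Theta(n^{-\Gamma_S})=\widetilde\Theta(n^{(a+b-2s)_+u})$, not $n^{2au+\Gamma_S}$. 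The factor $\tau_{\pt}^2\asymp n^{-2au}$ should enter only once, through the prefactor $c_{\ft}c_{\pt}\tau_{\pt}^2$.
\end{itemize}
With the correct orders you get $\overline{\cV}_{\pt}=\widetilde\Theta(n^{-\gamma_n-2bv-\Gamma_T})$ and $\overline{\cB}_2=\widetilde\Theta(n^{-\gamma_n-2bv-\Gamma_T-2au-\Gamma_S})$, as claimed.

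\textbf{Your endpoint lemma needs a summability hypothesis on the last range.} For $\bbeta_*^\top\bW_{\pt}\bbeta_*$, the summand beyond $n^{u}$ is $\asymp\tau_{\pt}^{-2}k^{b-a-2s-1}$. It is summable only because $2s-b+a=2ar_{\pt}>0$, that is, because of the compatibility $s=ar_{\pt}+\delta/2$. State this hypothesis explicitly.
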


Theorem  \ref{thm:ridge-transfer-scaling} establishes the scaling law of the deterministic equivalent $\overline{\cR}$, up to a poly-logarithmic factor (included in the notation $\widetilde\Theta\!\left(\cdot\right)$), and its proof is deferred to Appendix \ref{app:scaling-ridge}. 

We next choose ridge parameters for which $\overline{\cR}$ achieves the same exponent in the scaling law as the minimax risk using both datasets given by \eqref{eq:minimax-rates}.

\begin{corollary}[Minimax-optimality of ridge deterministic equivalent]
\label{cor:minimax-optimal-ridge-transfer}
Under the assumptions of \Cref{thm:total-risk}, in the regime $\delta>1, \gamma_c < \gamma_n <1$, by
picking \begin{equation}
 \lambda_{\pt}\asymp n_{\pt}^{1-a},
 \qquad
 \lambda_{\ft}
 \asymp n^{1-b(1-\gamma_n)/\delta},
 \label{eq:optimal-lambda-choice}
\end{equation}
or equivalently,
$u=\gamma_n$ and $v=(1-\gamma_n)/\delta$, we have that $\Gamma_{\rm TL} = \frac{\delta - 1 +\gamma_n}{\delta}$.  
\end{corollary}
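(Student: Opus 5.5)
The plan is to substitute $u=\gamma_n$ and $v=(1-\gamma_n)/\delta$ into the four exponents of Theorem~\ref{thm:ridge-transfer-scaling}. I will show that $\Gamma_{\cV_{\ft}}$ and $\Gamma_{\cV_{\pt}}$ both equal $(\delta-1+\gamma_n)/\delta$, and that $\Gamma_{\cB_1}$ and $\Gamma_{\cB_2}$ are at least this value. By \eqref{eq:Gamma-trans-main} this gives $\gammatrans=(\delta-1+\gamma_n)/\delta$. First I check that the stated $\lambda$'s give these $u,v$ via \eqref{eq:u-v-def}:
\begin{itemize}
\item $\lambda_{\pt}\asymp n_{\pt}^{1-a}$ gives $\gamma_{\lambda_{\pt}}=(a-1)\gamma_n$, hence $u=\gamma_n$;
\item the choice of $\lambda_{\ft}$ gives $1+\gamma_{\lambda_{\ft}}=b(1-\gamma_n)/\delta<b$, hence $v=(1-\gamma_n)/\delta$.
\end{itemize}

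Next I determine which branch of the case formulas applies. The condition $v\le u$ is equivalent to $\gamma_n\ge 1/(1+\delta)$. Since $s=ar_{\pt}+\delta/2$, we have $1+2s-\delta=1+2ar_{\pt}$, and
\[
(1+2s-\delta)(1+\delta)=1+2s+\delta(2s-\delta)\ge 1+2s .
\]
Hence $\gamma_{\rm c}\ge 1/(1+\delta)$, and $\gamma_n>\gamma_{\rm c}$ gives $v\le u$. So the $v\le u$ branches are used throughout.

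The variance exponents follow directly.
\begin{itemize}
\item $\Gamma_{\cV_{\ft}}=1-v=(\delta-1+\gamma_n)/\delta$.
\item Since $\delta>1$, we get $(1-\delta)_+=0$, so $\Gamma_T=-(a+b+1)v$. Then
\[
\Gamma_{\cV_{\pt}}=\gamma_n+(b-a-1)v=\gamma_n+(\delta-1)\frac{1-\gamma_n}{\delta}=\frac{\delta-1+\gamma_n}{\delta}.
\]
\end{itemize}
For $\Gamma_{\cB_2}$ it suffices to show $2au+\Gamma_S\ge 0$. Suppose $a+b-2br_{\ft}\le0$; then $\Gamma_S=0$ and this is immediate. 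Otherwise $2au+\Gamma_S=(a-b+2s)u=(2s-\delta)u=2ar_{\pt}u>0$. In both cases $\Gamma_{\cB_2}\ge\Gamma_{\cV_{\pt}}$.

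The main obstacle is $\Gamma_{\cB_1}=2bv+\Gamma_{\rm sh}$, which needs a case split on $r_{\ft}$, with $br_{\ft}=s$. Each case should reduce to a linear inequality in $\gamma_n$.
\begin{itemize}
\item $r_{\ft}<a/b$: both positive parts are active, and the expression collapses to $\Gamma_{\cB_1}=2su=2s\gamma_n$.
\item $a/b\le r_{\ft}\le 1+a/b$: $\Gamma_{\cB_1}=2sv+2a(u-v)$.
\item $r_{\ft}>1+a/b$: $\Gamma_{\cB_1}=2bv+2au$.
\end{itemize}
In each case I will compare with $1-v$. The difference $\Gamma_{\cB_1}-(1-v)$ is affine in $\gamma_n$ and nondecreasing on the relevant range, so it suffices to check it at the endpoint $\gamma_n=\gamma_{\rm c}$. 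At that endpoint $1-v=2s/(1+2s)$ and $2s\gamma_{\rm c}=2s(1+2ar_{\pt})/(1+2s)\ge 1-v$, which settles the first case. The other cases should follow from the same endpoint evaluation, using $u\ge v$, $2s-\delta=2ar_{\pt}>0$ and $b>\delta$. If monotonicity fails in some case, the fallback is to bound $2a(u-v)\ge0$ and use $2sv$ together with $r_{\ft}\ge a/b$ directly.

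Finally, since $\Gamma_{\cV_{\ft}}$ already attains the value $(\delta-1+\gamma_n)/\delta$, the minimum equals it exactly, which proves $\gammatrans=(\delta-1+\gamma_n)/\delta$.
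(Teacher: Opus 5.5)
Your route of plugging $u=\gamma_n$, $v=(1-\gamma_n)/\delta$ directly into the four exponents of Theorem~\ref{thm:ridge-transfer-scaling} is more direct than the paper's. The paper instead obtains the corollary as a specialization of the full optimization in Corollary~\ref{cor:minimax-optimal-ridge-transfer-full}, and then replaces $u^\star$ by $\gamma_n$ using that $\Gamma_{\cB_1}$ is nondecreasing in $u$ while the variance exponents do not depend on $u$. The following parts of your proposal are correct:
\begin{itemize}
\item the translation of the $\lambda$'s into $(u,v)$;
\item the branch selection $v\le u$ via $\gamma_{\rm c}>1/(1+\delta)$;
\item both variance exponents and the bound $\Gamma_{\cB_2}\ge\Gamma_{\cV_{\pt}}$;
\item your three closed forms for $\Gamma_{\cB_1}$.
\end{itemize}
The gap is in the step you flagged as the main obstacle, namely $\Gamma_{\cB_1}\ge 1-v$.

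Your claim that $\Gamma_{\cB_1}-(1-v)$ is nondecreasing in $\gamma_n$ holds only in the case $r_{\ft}<a/b$, where the slope is $2s-1/\delta>0$. In the other two cases the slopes are $(2a(1+\delta)-2s-1)/\delta$ and $(2a\delta-2b-1)/\delta$. Both can be negative in the admissible regime. For example, take $a=3/2$, $b=7/2$ (so $\delta=2$): for any $s>a+b$ the third-case slope is $-1$, and the second-case slope is negative for $s$ close to $a+b$. So evaluating only at $\gamma_n=\gamma_{\rm c}$ proves nothing in these cases.

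Your fallback does not rescue this. The bound $\Gamma_{\cB_1}\ge 2sv$ is true, but $2sv\ge 1-v$ is equivalent to $(1+2s)v\ge 1$, i.e., $\gamma_n\le\gamma_{\rm c}$, which is exactly the excluded range.

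The repair is short, and either of two fixes works:
\begin{itemize}
\item Also check the endpoint $\gamma_n=1$, where the difference equals $2(s\wedge a)-1>0$, and conclude by affinity in $\gamma_n$.
\item More cleanly, write $1-v=u+(\delta-1)v$ and use $s\wedge a+((s-a)_+\wedge b)=s\wedge(a+b)$ to get
\[
\Gamma_{\cB_1}-(1-v)=\bigl(2(s\wedge a)-1\bigr)(u-v)+\bigl(2(s\wedge(a+b))-\delta\bigr)v>0 .
\]
This holds because $u\ge v>0$, $2(s\wedge a)>1$ (as $a>1$ and $2s>b>1$), $2s-\delta=2ar_{\pt}>0$, and $2(a+b)>\delta$.
\end{itemize}
This difference equals $2(s\wedge a)(\gamma_n-u^\star)$, with $u^\star$ from Corollary~\ref{cor:minimax-optimal-ridge-transfer-full}. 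So its positivity is the same fact the paper establishes when it verifies $\gamma_n-u^\star>0$.
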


Corollary~\ref{cor:minimax-optimal-ridge-transfer} shows that
the deterministic equivalent of the risk for the ridge estimator obtained via transfer learning matches the minimax rate using jointly source and target data. We note that the regularization is  chosen by optimizing the deterministic equivalent
$\overline{\cR}$ from Theorem~\ref{thm:total-risk}. For this choice, 
we expect
$|\cR-\overline{\cR}|=o(\overline{\cR})$
with high probability, which would imply the same scaling law
for the actual test risk $\cR$.
This expectation is motivated by the analogous approximation guarantee for mixed ridge regression in
\citet[Lemma~1]{wu2026data}
and is supported by our numerical results
(\Cref{fig:de-regimes}). This gives evidence that the two-step procedure composed of pretraining and fine-tuning is optimal and improves on using either dataset in isolation.
 We defer the full characterization of optimal rate and optimal regularization to Corollary \ref{cor:minimax-optimal-ridge-transfer-full} in Appendix \ref{app:optimal-ridge-rate}, from which Corollary~\ref{cor:minimax-optimal-ridge-transfer} directly follows.

We further note that the optimal source regularization depends only on source sample size and spectral decay, which makes the same tuning of the source estimator optimal across different targets.
This supports pretraining a single source model and reusing
it across downstream tasks, with only the target
regularization chosen according to the specific fine-tuning data.

We finally remark that Corollary \ref{cor:minimax-optimal-ridge-transfer} shows the existence of a target-independent optimal source estimator rather than its uniqueness. In particular, the optimal ridge pair in \eqref{eq:optimal-lambda-choice} may not be unique, and it is possible that there exist other optimal pairs in which $\lambda_\pt$ depends on the target.

\subsection{Scaling laws for one-pass SGD}

\label{sec:sgd-sub}

We now replace ridge fine-tuning with one-pass SGD.
We first obtain the source estimator
$\widehat\bbeta_{\pt}$ by ridge regression
\eqref{eq:pretrain}, and initialize
$\widehat\bbeta_0=\widehat\bbeta_{\pt}$.
Using each target sample once, SGD returns
$\widehat\bbeta_{\ft}=\widehat\bbeta_n$, where
\begin{equation}
 \widehat\bbeta_t
 =
 \widehat\bbeta_{t-1}
 -
 \eta_t\bx_t
 (\bx_t^\top\widehat\bbeta_{t-1}-y_t),
 \qquad t=1,\ldots,n.
 \label{eq:sgd-rd-iteration}
\end{equation}

For the target covariates, we additionally impose the
following fourth-moment condition.

\begin{assumption}
\label{ass:sgd-fourth-moment-main}
There exists a constant $C\ge1$ such that,
for every bounded positive semidefinite matrix  $\bA$,
\begin{equation}
 \E\big[
 (\bx^\top\bA\bx)\bx\bx^\top
 \big] \leq
 C\Tr(\bSigma_{\ft}\bA)\bSigma_{\ft},
 \quad \bx\sim P_{\ft}.
 \label{eq:sgd-fourth-moment-main}
\end{equation}
\end{assumption}

\begin{lemma}[Scaling laws for one-pass SGD]
\label{lem:sgd-scaling-main}
In the setting of Theorem \ref{thm:ridge-transfer-scaling} and under Assumption \ref{ass:sgd-fourth-moment-main}, consider the regime $\delta>1, \gamma_c < \gamma_n < 1$.
Let
\begin{equation}
 \lambda_{\pt}=\kappa n_{\pt}^{1-a},
 \qquad
 \eta_t=\eta_0\,2^{-\lfloor(t-1)/N\rfloor},
 \label{eq:sgd-optimal-tuning-main}
\end{equation}
where $\kappa>0$ is independent of the target task,
and
\begin{equation}
 N=\left\lfloor\frac{n}{\log_2 n}\right\rfloor,
 \qquad
 \eta_0=\frac1N
 \left(\frac{N}{n_{\pt}}\right)^{b/\delta}.
 \label{eq:sgd-optimal-step-main}
\end{equation}
Then, the last SGD iterate satisfies $$ \E\big[
 \|\widehat\bbeta_{\ft}-\bbeta_*\|_{\bSigma_{\ft}}^2
 \big]
 =
 \widetilde\Theta(n^{-\Gamma_{\rm TL}}),\quad \Gamma_{\rm TL}
 =
 \frac{\delta-1+\gamma_n}{\delta}.$$
\end{lemma}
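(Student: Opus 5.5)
We condition on the source data and treat one-pass SGD on the target as SGD started from the random initialization $\widehat\bbeta_0=\widehat\bbeta_{\pt}$. The standard bias--variance decomposition for last-iterate SGD with geometrically decaying step sizes (as in \cite{wu2022power}) applies under Assumption~\ref{ass:sgd-fourth-moment-main} and the stepsize condition $\eta_0\Tr(\bSigma_{\ft})\lesssim 1$. It gives, up to constants and logarithmic factors,
\[
\E\|\widehat\bbeta_{\ft}-\bbeta_*\|_{\bSigma_{\ft}}^2
\eqsim
\big\|\widehat\bbeta_{\pt}-\bbeta_*\big\|^2_{\bSigma_{\ft}\exp(-2N\eta_0\bSigma_{\ft})}
+\frac{\sigma_{\ft}^2+\text{(fourth-moment bias term)}}{N}\,\#\{k:\ (\bSigma_{\ft})_{k,k}\gtrsim (N\eta_0)^{-1}\}.
\]
There is also a tail term $N\eta_0^2\sum_{k:\,(\bSigma_{\ft})_{k,k}<(N\eta_0)^{-1}}(\bSigma_{\ft})_{k,k}^2$, of the same order. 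The effective target cutoff is therefore $k_{\ft}\asymp (N\eta_0)^{1/b}=(N/n_{\pt})^{1/\delta}\asymp n^{(1-\gamma_n)/\delta}$ up to logs. This matches $v=(1-\gamma_n)/\delta$ in Corollary~\ref{cor:minimax-optimal-ridge-transfer}: SGD plays the role of the target ridge with $\lambda_{\ft}\asymp n\,k_{\ft}^{-b}$. The target variance is then $k_{\ft}/n\asymp n^{-(1-(1-\gamma_n)/\delta)}=n^{-\Gamma_{\rm TL}}$, and the lower bound on this variance follows from the matching SGD lower bounds of \cite{wu2022power}.

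The core of the proof is the initialization term. Take expectation over the source noise and use the deterministic equivalents underlying Theorem~\ref{thm:total-risk} (from \citep[Theorem 6]{wu2026improved}), applied with the test matrix $\bA=\bSigma_{\ft}\exp(-2N\eta_0\bSigma_{\ft})$ in place of $\bW_{\ft}$. Since $\bA$ is diagonal and, like $c_{\ft}\bW_{\ft}$, behaves like a spectral filter $\bA_{kk}\asymp k^{-b}\mathbf 1\{k\ge k_{\ft}\}+(\text{exponentially small})$, the computation mirrors Theorem~\ref{thm:ridge-transfer-scaling}. It yields three terms. The source shrinkage bias is $\|\bA^{1/2}\lamonmu_{\pt}(\bSigma_{\pt}+\lamonmu_{\pt})^{-1}\bbeta_*\|^2$, with $\lamonmu_{\pt}\asymp n_{\pt}^{-a}$ given the choice $\lambda_{\pt}=\kappa n_{\pt}^{1-a}$, so that $u=\gamma_n$. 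The other two are a source-variance term $\sigma_{\pt}^2c_{\pt}\Tr(\bA\bW_{\pt})$ and a $\cB_2$-type term. The key point is that these coincide in exponent with $\overline{\cB}_1,\overline{\cV}_{\pt},\overline{\cB}_2$ evaluated at $u=\gamma_n$, $v=(1-\gamma_n)/\delta$. The exponential filter $e^{-2N\eta_0 k^{-b}}$ kills modes $k\lesssim k_{\ft}$ just as $\lamonmu_{\ft}^2(\bSigma_{\ft}+\lamonmu_{\ft})^{-2}$ suppresses them by $(k/k_{\ft})^{2b}$, and in the relevant regime the band sums are dominated by $k\approx k_{\ft}$ or by tails, not by small $k$. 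Plugging into Corollary~\ref{cor:minimax-optimal-ridge-transfer} then gives all these terms as $\widetilde O(n^{-\Gamma_{\rm TL}})$. Concretely, the source variance is about $n_{\pt}^{-1}\sum_{k_{\ft}\le k\le n_{\pt}^{1/a}}k^{-b}k^{a}\asymp n_{\pt}^{-1}k_{\ft}^{1-\delta}$, which equals $n^{-\Gamma_{\rm TL}}$ because $\delta>1$. The bias, with $\bbeta_k^2\asymp k^{-1+b-2s}$, is dominated by $k\gtrsim n_{\pt}^{1/a}$ and is small when $\gamma_n>\gamma_{\rm c}$. The fourth-moment coupling of the initialization error into the variance, of size $\|\widehat\bbeta_{\pt}-\bbeta_*\|^2_{\bSigma_{\ft}}\cdot k_{\ft}/N$, is lower order by the same estimates.

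The lower bound $\widetilde\Omega(n^{-\Gamma_{\rm TL}})$ follows from the target-variance term alone. The upper and lower bounds together give $\widetilde\Theta$. Since $\lambda_{\pt}$ involves only $n_{\pt}$ and $a$, the source estimator is target-independent.

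I expect the main obstacle to be transferring the ridge deterministic-equivalent computations to the exponential filter $\bA$. This requires (i) checking that Theorem 6 of \cite{wu2026improved} applies to this non-resolvent test matrix, with error small relative to $n^{-\Gamma_{\rm TL}}$, and (ii) controlling the high-probability versus expectation mismatch over the source data. For (i), I would bound $\bA\preceq C\,\bSigma_{\ft}\lamonmu_{\ft}^2(\bSigma_{\ft}+\lamonmu_{\ft})^{-2}\cdot\mathrm{polylog}$ for an effective $\lamonmu_{\ft}\asymp k_{\ft}^{-b}$, together with a matching lower bound on modes $k\ge k_{\ft}$. This sandwich reduces the upper and lower bounds directly to the already-computed ridge quantities of Theorem~\ref{thm:ridge-transfer-scaling}, since all those terms are monotone in the PSD test matrix. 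For (ii), I would use a truncation argument on the complement event, which has probability $n_{\pt}^{-D}$.
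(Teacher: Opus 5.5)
Your outline matches the paper's proof. You condition on the source sample and apply the last-iterate bound of \citet{wu2022power}, which yields $\Tr(\bC_P\bS_0)+\frac{K}{N}\bigl(\sigma_{\ft}^2+\Tr(\bSigma_{\ft}\bS_0)\bigr)$ with $\bC_P=\bP\bSigma_{\ft}\bP$, $\bS_0=\E[\be_0\be_0^\top\mid\bX_{\pt}]$ and $K=(N/n_{\pt})^{1/\delta}$ (your $k_{\ft}$). You then replace the source-side traces by deterministic equivalents, pass to expectations by truncating on the bad source event, and take the lower bound from the target-noise variance alone. For that last step the paper does not cite a lower bound; it proves it directly from $\E[(\bI-\eta\bx\bx^\top)\bA(\bI-\eta\bx\bx^\top)]\succeq(\bI-\eta\bSigma_{\ft})\bA(\bI-\eta\bSigma_{\ft})$, restricted to $t\le N$ and $K\le k\le 2K$.

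You deviate from the paper in how you evaluate the initialization term. There, your fix addresses the half of obstacle (i) that is not real and leaves the half that is. Theorem~5, eq.~(37) and Theorem~6, eq.~(40) of \citet{wu2026improved} accept arbitrary PSD test matrices, so the paper plugs in the diagonal $\bC_P$ directly; nothing resolvent-specific needs checking. Your sandwich $\bC_P\preceq C\,c_{\ft}\bW_{\ft}$ with $\lamonmu_{\ft}\asymp K^{-b}$ is true mode by mode. By monotonicity it does let you read off exponents from \Cref{thm:ridge-transfer-scaling} and \Cref{cor:minimax-optimal-ridge-transfer} at $u=\gamma_n$, $v=(1-\gamma_n)/\delta$. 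However, those results concern only deterministic equivalents. The paper never proves $|\cR-\overline{\cR}|=o(\overline{\cR})$ for ridge, precisely because of the additive term $\varepsilon_{\pt}\mathcal E_{\pt}$ in \Cref{thm:total-risk}. So the sandwich does not reduce you to anything already established for the random quantity $\Tr(\bC_P\bS_0)$.

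The step you still owe is that this additive error is $o(K/N)$. For a test matrix $\bC$ it reads $\varepsilon_m\chi_m\|\bC(\bSigma_{\pt}+\lamonmu_{\pt}\bI)^{-1}\|_{\op}$, with $\chi_m=\lamonmu_{\pt}^2\bbeta_*^\top(\bSigma_{\pt}+\lamonmu_{\pt}\bI)^{-1}\bbeta_*$. This is exactly where the choice $\lambda_{\pt}=\kappa n_{\pt}^{1-a}$ is used:
\begin{itemize}
\item it gives $\rho_{\lambda_{\pt}}(n_{\pt})=O(\log n_{\pt})$, hence $\varepsilon_m=o(1)$ and the hypotheses of \Cref{thm:total-risk} hold;
\item it gives $\chi_m\le\lamonmu_{\pt}\|\bbeta_*\|_2^2=O(n_{\pt}^{-a})$;
\item and $\|\bC_P(\bSigma_{\pt}+\lamonmu_{\pt}\bI)^{-1}\|_{\op}\lesssim\sup_k e^{-c(K/k)^b}k^{-\delta}\lesssim K^{-\delta}$.
\end{itemize}
Relative to $K/N=n_{\pt}^{-1}K^{1-\delta}$, the error is therefore at most of order $n_{\pt}^{1-a}/K\to0$. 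The same bound $A(\bC_P)\le\chi_m\|\bC_P(\bSigma_{\pt}+\lamonmu_{\pt}\bI)^{-1}\|_{\op}$ disposes of the shrinkage bias directly. Together with $\lamonmu_{\pt}^2\bbeta_*^\top\bW_{\pt}\bbeta_*\le\lamonmu_{\pt}\|\bbeta_*\|_2^2/4$ for the $\overline{\cB}_2$-type term, this means the detour through \Cref{cor:minimax-optimal-ridge-transfer} is unnecessary.

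Two smaller slips:
\begin{itemize}
\item The source cutoff is $k\asymp\lamonmu_{\pt}^{-1/a}\asymp n_{\pt}$, not $n_{\pt}^{1/a}$. This is harmless for the variance sum, which converges at $k\asymp K$, but your account of where the bias lives is off: for $s>a$ it sits near $k\asymp K$.
\item The step-size condition needs $\eta_0=N^{a/\delta}n_{\pt}^{-b/\delta}\to0$, i.e.\ $\gamma_n>a/b$. You should verify this explicitly; it follows from $\gamma_{\rm c}-a/b=\delta(1+2s-b)/(b(1+2s))>0$.
\end{itemize}
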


Lemma~\ref{lem:sgd-scaling-main} shows that one-pass SGD
matches the exponent of optimally regularized ridge, which is in turn minimax optimal. We note that both methods use the same rate-optimal source regularization, which does not depend on the target task.
Thus, the universality of the pretrained source estimator
across target tasks extends to both ridge
and SGD fine-tuning.

The proof is given in Appendix \ref{sec:gd-sgd-transfer}.
For the upper bound, we condition on the source estimator
and apply the SGD bound of \citet{wu2022power}.
We then evaluate the deterministic equivalent w.r.t the source estimator.
At the selected ridge
$\lambda_{\pt}\asymp n_{\pt}^{1-a}$,
we show that the related deterministic equivalents have multiplicative approximation error. For the lower bound, we compare the second-moment recursion of the target-noise component with its population counterpart. 

\begin{remark}[Saturation effect of ridge and comparison with one-pass SGD]
In linear regression under power-law spectra, ridge regression
suffers from saturation, which is instead avoided by one-pass SGD 
\citep{wu2025risk}.
In our transfer setting, ridge and SGD fine-tuning attain
the same polynomial scaling exponent in the regime
$\delta>1$ and $\gamma_{\rm c}<\gamma_n<1$ since the test error is dominated by
the variance terms, while the signal-dependent
bias is of lower order.
Consequently, ridge saturation does not affect the leading
scaling exponent in this regime.

\end{remark}

\begin{figure*}[t]
    \centering
    \includegraphics[width=\textwidth]{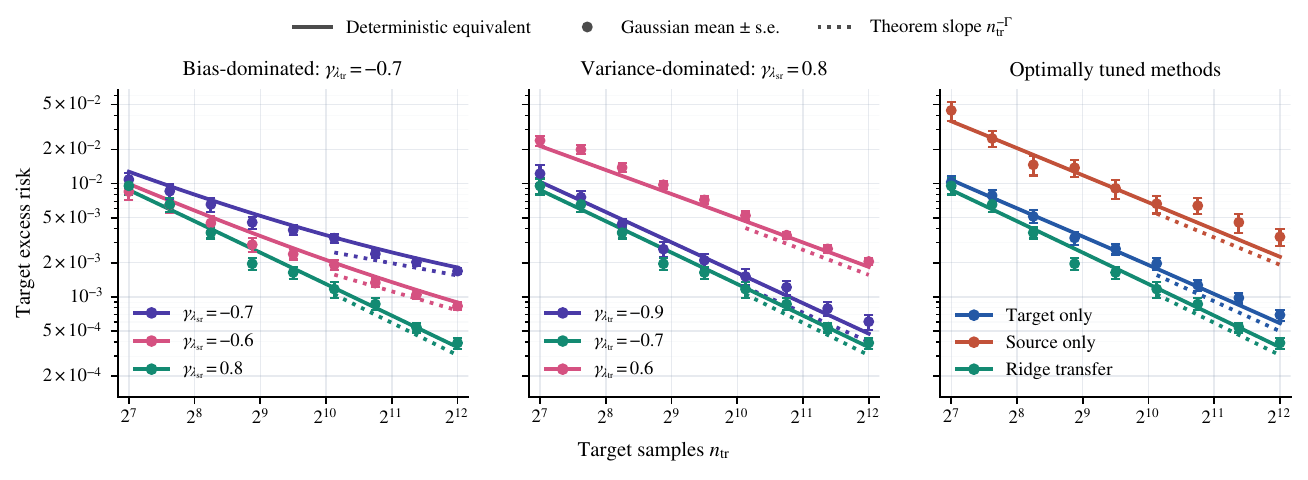}
    \caption{\textbf{Accuracy of theoretical predictions across various regimes} for $a=2$, $b=6$, $r_{\ft}=0.6$, $d=50{,}000$. Solid lines show the deterministic equivalent, markers show Gaussian simulations (mean $\pm$ s.e., $50$ seeds), and dotted segments show the predicted slopes. \textbf{Left:} Bias is the bottleneck in the risk, and the optimal scaling law is reached by choosing the correct $\lambda_{\pt}$. \textbf{Middle:} Variance is the bottleneck in the risk, and the optimal $\lambda_{\ft}$ balances $\overline{\cV}_{\ft}=\overline{\cV}_{\pt}$. \textbf{Right:} Comparison between optimally tuned target-only, source-only, and transfer learning risks.}
    \label{fig:de-regimes}
\vspace{-.5em}
\end{figure*}

\begin{figure}[t]
    \centering
    \includegraphics[width=0.5\textwidth]{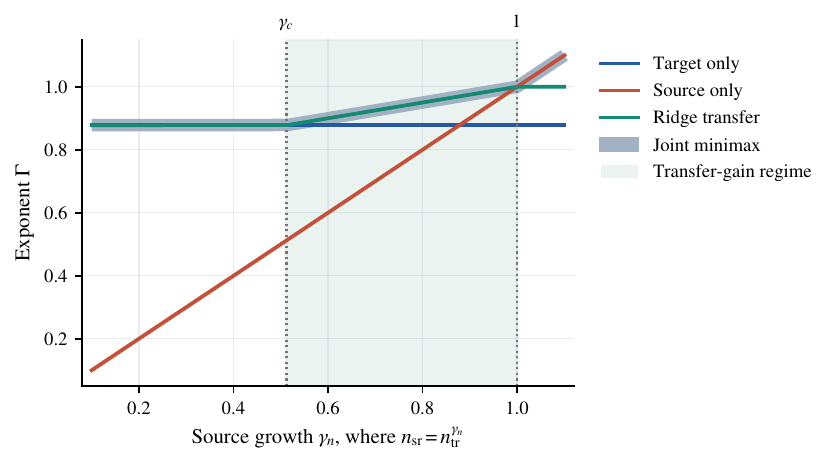}
    \caption{\textbf{Scaling law exponents of optimally regularized ridge and minimax rates, as a function of pretraining data size.} Solid lines show the exponents from \Cref{thm:ridge-transfer-scaling}. The region $\gamma_c<\gamma_n<1$ is where transfer learning attains the minimax rate obtained by employing both datasets, and it improves over estimators using either target-only or source-only data. The gray band is the minimax rate, i.e., the best achievable worst-case exponent any estimator can reach using both source and target data.}
    \label{fig:synthetic-rate-transition}
\end{figure}

\begin{figure*}[t]
    \centering
    \includegraphics[width=\linewidth]{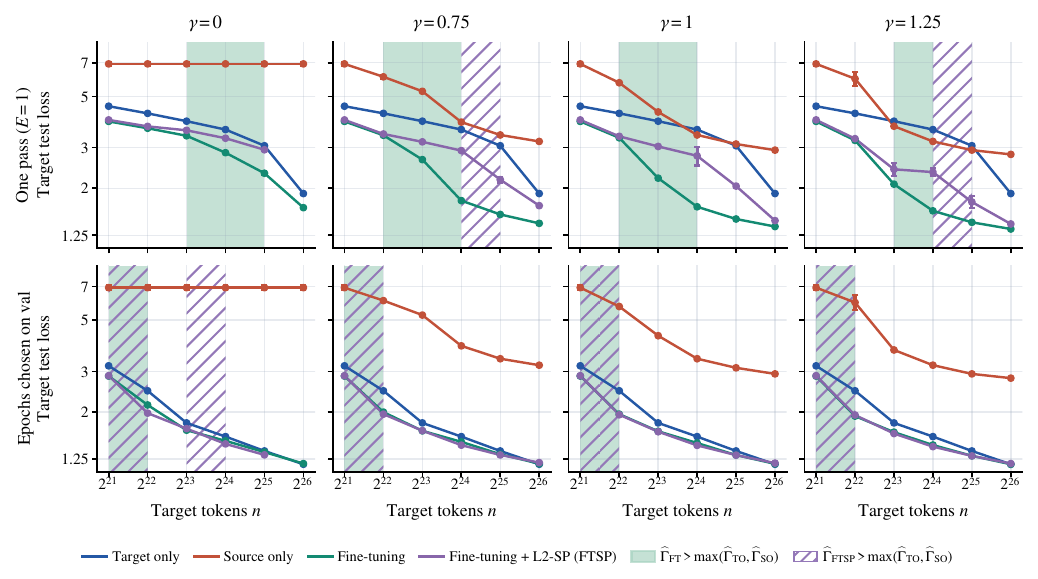}
    \caption{\textbf{Transfer learning in language models.} Target test loss as a function of target sample size. Different columns correspond to different values of the source sample size ($\gamma=0$ in the left-most plot and then $\gamma=0.75$, $\gamma=1$, and $\gamma=1.25$). 
    Different rows correspond to different number of passes on the data (single pass in the first row, and multiple passes with number of epochs chosen from validation in the second row). 
    Access to more source data produces a significant advantage of FT under short target training (one-pass), while the advantage becomes weaker with additional target optimization (multi-pass). We denote the local exponent by  $\widehat\Gamma=-\frac{\ln L(2n) - \ln L(n)}{\ln 2}$, see \eqref{eq:gamma-hat-llm} for details. Green shading shows the ranges where $\hat\Gamma_{FT}>\max(\widehat\Gamma_{TO},\widehat\Gamma_{SO})$, and purple hatching marks where $\widehat\Gamma_{FTSP}>\max(\widehat\Gamma_{TO},\widehat\Gamma_{SO})$. FTSP refers to our implementation of L2-SP \citep{li2018explicit}, see Appendix \ref{app:llm-exp} for details.
    }
    \label{fig:gpt2-regimes-all}
\end{figure*}

\section{Numerical Results}

We first showcase our theoretical predictions for synthetic data, and then examine whether a similar behavior occurs in the training of language models.

\paragraph{Experiments on synthetic data.}
We consider the power-law model of \Cref{sec:scaling}, with $(\bSigma_{\pt})_{k,k}\asymp k^{-a}$ and $(\bSigma_{\ft})_{k,k}\asymp k^{-b}$. We use $a=2$, $b=6$, $r_{\ft}=0.6$, $d=50{,}000$, with $n_{\pt}=n_{\ft}^{\gamma_n}$. The ridge parameters follow the scaling  in \eqref{eq:standing-sample-ridge-scaling}.

\Cref{fig:de-regimes} shows bias-limited, variance-limited, and optimally tuned ridge regression, and how the regularization choice affects the scaling law of the target risk. The empirical results are closely tracked by deterministic-equivalent predictions, and they recover the corresponding theoretical decay rates. In the left panel, the bias is limiting the scaling law, and choosing the correct source regularization $\lambda_{\pt}$ mitigates the issue. In the middle panel, the variance is limiting the scaling law, and choosing the correct target regularization $\lambda_{\ft}$ mitigates the issue. In the right panel, all estimators (target-only, source-only, and ridge transfer) are optimally tuned, and transfer learning improves the scaling law as compared to using a single dataset.  



While \Cref{fig:de-regimes} fixes $\gamma_n$ and shows the effect of regularization on performance, \Cref{fig:synthetic-rate-transition} varies $\gamma_n$ and examines how the optimal exponent $\Gamma$ of ridge regression changes as the size of the source dataset grows. The exponents are obtained from  \Cref{thm:ridge-transfer-scaling} and highlight the regime $\gamma_{\rm c}<\gamma_n<1$ in which combining source and target data improves the decay rate of the risk. 
Additionally, in Appendix \ref{app:de-empirical}, we show that the empirical risk is 
tracked by deterministic equivalents for PCA features from MNIST \citep{LeCun1998GradientbasedLA} and CIFAR-10 \citep{Krizhevsky09learningml}.

\paragraph{Experiments on language models.} We train 
GPT-2 small \citep{radford2019language}, using FineWeb-Edu \citep{penedo2024fineweb} as the source distribution and fine-tune it using Python \citep{codeparrot_clean} as the target distribution. We design these experiments to see whether fine-tuning (FT) from the source trained on $n_{\pt}\asymp n^{\gamma_n}$ tokens to a target with $n$ tokens can make the target loss decrease faster in $n$ than both target-only (TO) and source-only (SO) training. We also check whether this loss decrease depends on $\gamma_n$.


\Cref{fig:gpt2-regimes-all} shows that the improvement in the transfer learning performance depends on both the size of source data and the optimization method used for target data. With a fixed source budget, FT achieves lower loss at small target budgets, but TO gradually closes the gap. When the source budget grows with the target budget, FT can decrease faster over the early scaling range under short target training; this rate advantage becomes smaller after additional target optimization. Thus, beyond the linear setting, we observe a similar dependence of transfer scaling on how the source budget grows with the target budget. Overall,  we observe a higher FT rate under target training than under TO and SO separately, but this advantage depends on target optimization. Further details on the experimental setup are in Appendix \ref{app:llm-exp}.

\section{Conclusions}
\label{sec:discussion}

We study transfer learning in high-dimensional regression, where the source model is obtained by ridge regression and the target model is regularized toward the source estimator. Under rather general conditions, we establish a deterministic equivalent for the risk of the estimator implementing ridge regression with target data. Specializing to power-law covariance spectra and a shared regression function, we derive explicit scaling laws and compare them with the minimax rate of using both datasets. Notably, whenever combining the datasets yields a strictly better minimax rate than using either dataset alone, transfer learning via ridge regression attains this optimal exponent. Moreover, fine-tuning via one-pass SGD with a suitable geometric learning rate schedule attains the same exponent in the scaling law. Both ridge regression and SGD employ an optimal source estimator which does not depend on the target task and it is, therefore, universal across target distributions. Experiments on language models corroborate the generality of our findings beyond linear models.

{\bf Future directions.} A natural direction is to extend our analysis beyond regression
to classification, building on existing minimax theory \citep{cai2021transfer}.
In particular, it would be interesting to determine whether
the improved scaling laws and the universality of the source
estimator persist in this setting.
Another direction is to incorporate model size explicitly
through random feature models \citep{defilippis2024dimension}.
Allowing the number of random features to grow jointly with
the source and target sample sizes would enable us to
characterize how model capacity affects the benefits of transfer,
and to identify the model size required to retain the improved
scaling exponent.


\clearpage

\subsection*{AI use statement}

We used generative AI tools to polish the writing, implement straightforward parts of the code, fill in proof details after we had identified the key methods and arguments, check proofs for correctness, and improve the coverage of related work. We did not use generative AI tools to develop the paper's core ideas and define the problem setting. We take full responsibility for the final content of this work, including all text, claims, code, and other artifacts produced with the assistance of generative AI.


\bibliography{references}

@article{caponnetto2007optimal,
  title={Optimal rates for the regularized least-squares algorithm},
  author={Caponnetto, Andrea and De Vito, Ernesto},
  journal={Foundations of Computational Mathematics},
  volume={7},
  number={3},
  pages={331--368},
  year={2007},
  publisher={Springer},
  doi={10.1007/s10208-006-0196-8}
}

@inproceedings{
    wu2026improved,
    title={Improved Scaling Laws via Weak-to-Strong Generalization in Random Feature Ridge Regression},
    author={Diyuan Wu and Lehan Chen and Theodor Misiakiewicz and Marco Mondelli},
    booktitle={Forty-third International Conference on Machine Learning},
    year={2026},
    url={https://openreview.net/forum?id=yE3zbIOW3e}
}

@inproceedings{tripuraneni2020taskdiversity,
 author = {Tripuraneni, Nilesh and Jordan, Michael I. and Jin, Chi},
 booktitle = {Advances in Neural Information Processing Systems},
 editor = {H. Larochelle and M. Ranzato and R. Hadsell and M.F. Balcan and H. Lin},
 pages = {7852--7862},
 publisher = {Curran Associates, Inc.},
 title = {On the Theory of Transfer Learning: The Importance of Task Diversity},
 url = {https://proceedings.neurips.cc/paper_files/paper/2020/file/59587bffec1c7846f3e34230141556ae-Paper.pdf},
 volume = {33},
 year = {2020}
}

@InProceedings{collins2024multitask,
  title = 	 {Provable Multi-Task Representation Learning by Two-Layer {R}e{LU} Neural Networks},
  author =       {Collins, Liam and Hassani, Hamed and Soltanolkotabi, Mahdi and Mokhtari, Aryan and Shakkottai, Sanjay},
  booktitle = 	 {Proceedings of the 41st International Conference on Machine Learning},
  pages = 	 {9292--9345},
  year = 	 {2024},
  editor = 	 {Salakhutdinov, Ruslan and Kolter, Zico and Heller, Katherine and Weller, Adrian and Oliver, Nuria and Scarlett, Jonathan and Berkenkamp, Felix},
  volume = 	 {235},
  series = 	 {Proceedings of Machine Learning Research},
  month = 	 {21--27 Jul},
  publisher =    {PMLR},
  url = 	 {https://proceedings.mlr.press/v235/collins24a.html},
}

@article{hernandez2021scaling,
  title={Scaling laws for transfer},
  author={Hernandez, Danny and Kaplan, Jared and Henighan, Tom and McCandlish, Sam},
  journal={arXiv preprint arXiv:2102.01293},
  year={2021}
}

@inproceedings{
zhang2024scaling,
title={When Scaling Meets {LLM} Finetuning: The Effect of Data, Model and Finetuning Method},
author={Biao Zhang and Zhongtao Liu and Colin Cherry and Orhan Firat},
booktitle={The Twelfth International Conference on Learning Representations},
year={2024},
url={https://openreview.net/forum?id=5HCnKDeTws}
}

@InProceedings{han2026weight,
  title =          {Weight Decay Improves Language Model Plasticity},
  author =       {Han, Tessa and Bordt, Sebastian and Zhang, Hanlin and Kakade, Sham M.},
  booktitle =          {Proceedings of the 43rd International Conference on Machine Learning},
  pages =          {39571--39602},
  year =          {2026},
  editor =          {Zhang, Tong and Dudik, Miroslav and Jaggi, Martin and Agarwal, Alekh and Li, Sharon and Schuurmans, Dale and Zhu, Jerry and Berkenkamp, Felix and Dong, Hanze and Bietti, Alberto},
  volume =          {306},
  series =          {Proceedings of Machine Learning Research},
  month =          {06--11 Jul},
  publisher =    {PMLR},
  url =          {https://proceedings.mlr.press/v306/han26b.html},
}

@article{misiakiewicz2024non,
  title={A non-asymptotic theory of kernel ridge regression: deterministic equivalents, test error, and gcv estimator},
  author={Misiakiewicz, Theodor and Saeed, Basil},
  journal={arXiv preprint arXiv:2403.08938},
  year={2024}
}

@inproceedings{
defilippis2024dimension,
title={Dimension-free deterministic equivalents and scaling laws for random feature regression},
author={Leonardo Defilippis and Bruno Loureiro and Theodor Misiakiewicz},
booktitle={The Thirty-eighth Annual Conference on Neural Information Processing Systems},
year={2024},
url={https://openreview.net/forum?id=FBLJIfW64D}
}

@InProceedings{wu2025risk,
title = {Risk Comparisons in Linear Regression: Implicit Regularization Dominates Explicit Regularization (Extended Abstract)},
author = {Wu, Jingfeng and Bartlett, Peter L. and Kakade, Sham M. and Lee, Jason D. and Yu, Bin},
booktitle = {Proceedings of Thirty Ninth Conference on Learning Theory},
pages = {6849--6851},
year = {2026},
editor = {Hanneke, Steve and Lattimore, Tor},
volume = {336},
series = {Proceedings of Machine Learning Research},
month = {29 Jun--03 Jul},
publisher = {PMLR},
url = {https://proceedings.mlr.press/v336/wu26a.html},
}

@article{wu2022power,
  title={The power and limitation of pretraining-finetuning for linear regression under covariate shift},
  author={Wu, Jingfeng and Zou, Difan and Braverman, Vladimir and Gu, Quanquan and Kakade, Sham},
  journal={Advances in Neural Information Processing Systems},
  volume={35},
  pages={33041--33053},
  year={2022}
}

@article{li2022transfer,
  title={Transfer learning for high-dimensional linear regression: Prediction, estimation and minimax optimality},
  author={Li, Sai and Cai, T Tony and Li, Hongzhe},
  journal={Journal of the Royal Statistical Society Series B: Statistical Methodology},
  volume={84},
  number={1},
  pages={149--173},
  year={2022},
  publisher={Oxford University Press}
}

@InProceedings{he2024transfusion,
  title = 	 {TransFusion: Covariate-Shift Robust Transfer Learning for High-Dimensional Regression},
  author =       {He, Zelin and Sun, Ying and Li, Runze},
  booktitle = 	 {Proceedings of The 27th International Conference on Artificial Intelligence and Statistics},
  pages = 	 {703--711},
  year = 	 {2024},
  editor = 	 {Dasgupta, Sanjoy and Mandt, Stephan and Li, Yingzhen},
  volume = 	 {238},
  series = 	 {Proceedings of Machine Learning Research},
  month = 	 {02--04 May},
  publisher =    {PMLR},
  url = 	 {https://proceedings.mlr.press/v238/he24a.html},
}

@inproceedings{
    kim2025transfer,
    title={Transfer Learning for Benign Overfitting in High-Dimensional Linear Regression},
    author={Yeichan Kim and Ilmun Kim and Seyoung Park},
    booktitle={The Thirty-ninth Annual Conference on Neural Information Processing Systems},
    year={2025},
    url={https://openreview.net/forum?id=arXNS7T90z}
}

@article{
hedges2026source,
title={Source-Optimal Training is Transfer-Suboptimal},
author={C. Evans Hedges},
journal={Transactions on Machine Learning Research},
issn={2835-8856},
year={2026},
url={https://openreview.net/forum?id=CMlpokFXfA},
note={}
}

@InProceedings{li2018explicit,
  title = 	 {Explicit Inductive Bias for Transfer Learning with Convolutional Networks},
  author =       {Li, Xuhong and Grandvalet, Yves and Davoine, Franck},
  booktitle = 	 {Proceedings of the 35th International Conference on Machine Learning},
  pages = 	 {2825--2834},
  year = 	 {2018},
  editor = 	 {Dy, Jennifer and Krause, Andreas},
  volume = 	 {80},
  series = 	 {Proceedings of Machine Learning Research},
  month = 	 {10--15 Jul},
  publisher =    {PMLR},
  url = 	 {https://proceedings.mlr.press/v80/li18a.html},
}

@inproceedings{
lin2024functional,
title={On Hypothesis Transfer Learning of Functional Linear Models},
author={Haotian Lin and Matthew Reimherr},
booktitle={Forty-first International Conference on Machine Learning},
year={2024},
url={https://openreview.net/forum?id=mGsF8Q0fGZ}
}

@inproceedings{
lin2024smoothness,
title={Smoothness Adaptive Hypothesis Transfer Learning},
author={Haotian Lin and Matthew Reimherr},
booktitle={Forty-first International Conference on Machine Learning},
year={2024},
url={https://openreview.net/forum?id=v0VUsQI5yw}
}

@article{dar2024common,
  title={The common intuition to transfer learning can win or lose: Case studies for linear regression},
  author={Dar, Yehuda and LeJeune, Daniel and Baraniuk, Richard G},
  journal={SIAM Journal on Mathematics of Data Science},
  volume={6},
  number={2},
  pages={454--480},
  year={2024},
  publisher={SIAM}
}

@inproceedings{hanneke2019value,
 author = {Hanneke, Steve and Kpotufe, Samory},
 booktitle = {Advances in Neural Information Processing Systems},
 editor = {H. Wallach and H. Larochelle and A. Beygelzimer and F. d\textquotesingle Alch\'{e}-Buc and E. Fox and R. Garnett},
 pages = {},
 publisher = {Curran Associates, Inc.},
 title = {On the Value of Target Data in Transfer Learning},
 url = {https://proceedings.neurips.cc/paper_files/paper/2019/file/b91f4f4d36fa98a94ac5584af95594a0-Paper.pdf},
 volume = {32},
 year = {2019}
}

@InProceedings{karpel2026optimal,
  title = 	 {Optimal L2 Regularization in High-dimensional Continual Linear Regression},
  author =       {Karpel, Gilad and Moroshko, Edward and Levinstein, Ran and Meir, Ron and Soudry, Daniel and Evron, Itay},
  booktitle = 	 {Proceedings of The 37th International Conference on Algorithmic Learning Theory},
  pages = 	 {1--62},
  year = 	 {2026},
  editor = 	 {Telgarsky, Matus and Ullman, Jonathan},
  volume = 	 {313},
  series = 	 {Proceedings of Machine Learning Research},
  month = 	 {23--26 Feb},
  publisher =    {PMLR},
  url = 	 {https://proceedings.mlr.press/v313/karpel26a.html},

}

@techreport{radford2019language,
  title={Language Models are Unsupervised Multitask Learners},
  author={Radford, Alec and Wu, Jeffrey and Child, Rewon and Luan, David and Amodei, Dario and Sutskever, Ilya},
  institution={OpenAI},
  year={2019}
}

@misc{karpathy2022nanogpt,
author = {Andrej Karpathy},
title = {{nanoGPT}},
year = {2022},
publisher = {GitHub},
journal = {GitHub repository},
howpublished = {\url{https://github.com/karpathy/nanoGPT}}
}

@article{penedo2024fineweb,
  title={The {FineWeb} datasets: Decanting the web for the finest text data at scale},
  author={Penedo, Guilherme and Kydl{\'\i}{\v{c}}ek, Hynek and Ben Allal, Loubna and Lozhkov, Anton and Mitchell, Margaret and Raffel, Colin and Von Werra, Leandro and Wolf, Thomas},
  journal={Advances in Neural Information Processing Systems},
  volume={37},
  pages={30811--30849},
  year={2024}
}

@inproceedings{
loshchilov2018decoupled,
title={Decoupled Weight Decay Regularization},
author={Ilya Loshchilov and Frank Hutter},
booktitle={International Conference on Learning Representations},
year={2019},
url={https://openreview.net/forum?id=Bkg6RiCqY7},
}

@article{LeCun1998GradientbasedLA,
  author={Lecun, Y. and Bottou, L. and Bengio, Y. and Haffner, P.},
  journal={Proceedings of the IEEE}, 
  title={Gradient-based learning applied to document recognition}, 
  year={1998},
  volume={86},
  number={11},
  pages={2278-2324},
  doi={10.1109/5.726791}}

@techreport{Krizhevsky09learningml,
  author = {Alex Krizhevsky},
  title = {Learning multiple layers of features from tiny images},
  institution = {University of Toronto},
  year = {2009}
}

@inproceedings{yosinski2014transferable,
 author = {Yosinski, Jason and Clune, Jeff and Bengio, Yoshua and Lipson, Hod},
 booktitle = {Advances in Neural Information Processing Systems},
 editor = {Z. Ghahramani and M. Welling and C. Cortes and N. Lawrence and K. Weinberger},
 pages = {},
 publisher = {Curran Associates, Inc.},
 title = {How transferable are features in deep neural networks?},
 url = {https://proceedings.neurips.cc/paper_files/paper/2014/file/532a2f85b6977104bc93f8580abbb330-Paper.pdf},
 volume = {27},
 year = {2014}
}

@inproceedings{kornblith2019better,
  title={Do Better {ImageNet} Models Transfer Better?},
  author={Kornblith, Simon and Shlens, Jonathon and Le, Quoc V},
  booktitle={Proceedings of the IEEE/CVF conference on computer vision and pattern recognition},
  pages={2661--2671},
  year={2019}
}

@InProceedings{zhao2024statistical,
  title = 	 {A Statistical Theory of Regularization-Based Continual Learning},
  author =       {Zhao, Xuyang and Wang, Huiyuan and Huang, Weiran and Lin, Wei},
  booktitle = 	 {Proceedings of the 41st International Conference on Machine Learning},
  pages = 	 {61021--61039},
  year = 	 {2024},
  editor = 	 {Salakhutdinov, Ruslan and Kolter, Zico and Heller, Katherine and Weller, Adrian and Oliver, Nuria and Scarlett, Jonathan and Berkenkamp, Felix},
  volume = 	 {235},
  series = 	 {Proceedings of Machine Learning Research},
  month = 	 {21--27 Jul},
  publisher =    {PMLR},
  url = 	 {https://proceedings.mlr.press/v235/zhao24n.html},
}

@InProceedings{donahue2014decaf,
  title = 	 {DeCAF: A Deep Convolutional Activation Feature for Generic Visual Recognition},
  author = 	 {Donahue, Jeff and Jia, Yangqing and Vinyals, Oriol and Hoffman, Judy and Zhang, Ning and Tzeng, Eric and Darrell, Trevor},
  booktitle = 	 {Proceedings of the 31st International Conference on Machine Learning},
  pages = 	 {647--655},
  year = 	 {2014},
  editor = 	 {Xing, Eric P. and Jebara, Tony},
  volume = 	 {32},
  number =       {1},
  series = 	 {Proceedings of Machine Learning Research},
  address = 	 {Beijing, China},
  month = 	 {22--24 Jun},
  publisher =    {PMLR},
  url = 	 {https://proceedings.mlr.press/v32/donahue14.html},
}

@inproceedings{howard2018universal,
    title = "Universal Language Model Fine-tuning for Text Classification",
    author = "Howard, Jeremy  and
      Ruder, Sebastian",
    editor = "Gurevych, Iryna  and
      Miyao, Yusuke",
    booktitle = "Proceedings of the 56th Annual Meeting of the Association for Computational Linguistics (Volume 1: Long Papers)",
    month = jul,
    year = "2018",
    address = "Melbourne, Australia",
    publisher = "Association for Computational Linguistics",
    url = "https://aclanthology.org/P18-1031/",
    doi = "10.18653/v1/P18-1031",
    pages = "328--339",
}

@inproceedings{devlin2019bert,
    title = "{BERT}: Pre-training of Deep Bidirectional Transformers for Language Understanding",
    author = "Devlin, Jacob  and
      Chang, Ming-Wei  and
      Lee, Kenton  and
      Toutanova, Kristina",
    editor = "Burstein, Jill  and
      Doran, Christy  and
      Solorio, Thamar",
    booktitle = "Proceedings of the 2019 Conference of the North {A}merican Chapter of the Association for Computational Linguistics: Human Language Technologies, Volume 1 (Long and Short Papers)",
    month = jun,
    year = "2019",
    address = "Minneapolis, Minnesota",
    publisher = "Association for Computational Linguistics",
    url = "https://aclanthology.org/N19-1423/",
    doi = "10.18653/v1/N19-1423",
    pages = "4171--4186",
}

@InProceedings{houlsby2019parameter,
  title = 	 {Parameter-Efficient Transfer Learning for {NLP}},
  author =       {Houlsby, Neil and Giurgiu, Andrei and Jastrzebski, Stanislaw and Morrone, Bruna and De Laroussilhe, Quentin and Gesmundo, Andrea and Attariyan, Mona and Gelly, Sylvain},
  booktitle = 	 {Proceedings of the 36th International Conference on Machine Learning},
  pages = 	 {2790--2799},
  year = 	 {2019},
  editor = 	 {Chaudhuri, Kamalika and Salakhutdinov, Ruslan},
  volume = 	 {97},
  series = 	 {Proceedings of Machine Learning Research},
  month = 	 {09--15 Jun},
  publisher =    {PMLR},
  url = 	 {https://proceedings.mlr.press/v97/houlsby19a.html},
}

@inproceedings{
    hu2022lora,
    title={Lo{RA}: Low-Rank Adaptation of Large Language Models},
    author={Edward J Hu and Yelong Shen and Phillip Wallis and Zeyuan Allen-Zhu and Yuanzhi Li and Shean Wang and Lu Wang and Weizhu Chen},
    booktitle={International Conference on Learning Representations},
    year={2022},
    url={https://openreview.net/forum?id=nZeVKeeFYf9}
}

@inproceedings{
    springer2025overtrained,
    title={Overtrained Language Models Are Harder to Fine-Tune},
    author={Jacob Mitchell Springer and Sachin Goyal and Kaiyue Wen and Tanishq Kumar and Xiang Yue and Sadhika Malladi and Graham Neubig and Aditi Raghunathan},
    booktitle={Forty-second International Conference on Machine Learning},
    year={2025},
    url={https://openreview.net/forum?id=YW6edSufht}
}

@inproceedings{
    kumar2022finetuning,
    title={Fine-Tuning can Distort Pretrained Features and Underperform Out-of-Distribution},
    author={Ananya Kumar and Aditi Raghunathan and Robbie Matthew Jones and Tengyu Ma and Percy Liang},
    booktitle={International Conference on Learning Representations},
    year={2022},
    url={https://openreview.net/forum?id=UYneFzXSJWh}
}

@inproceedings{isik2025scaling,
  title={Scaling laws for downstream task performance in machine translation},
  author={Isik, Berivan and Ponomareva, Natalia and Hazimeh, Hussein and Paparas, Dimitris and Vassilvitskii, Sergei and Koyejo, Sanmi},
  booktitle={International Conference on Learning Representations},
  year={2025}
}

@article{bendavid2010theory,
  title={A theory of learning from different domains},
  author={Ben-David, Shai and Blitzer, John and Crammer, Koby and Kulesza, Alex and Pereira, Fernando and Vaughan, Jennifer Wortman},
  journal={Machine Learning},
  volume={79},
  number={1},
  pages={151--175},
  year={2010},
  publisher={Springer}
}

@article{cai2021transfer,
  title={Transfer Learning for Nonparametric Classification: Minimax Rate and Adaptive Classifier},
  author={Cai, T. Tony and Wei, Hongji},
  journal={The Annals of Statistics},
  volume={49},
  number={1},
  pages={100--128},
  year={2021},
}

@article{wu2026data,
  title={When do data mixtures improve scaling laws? Insights from high-dimensional regression},
  author={Wu, Diyuan and Chen, Lehan and Misiakiewicz, Theodor and Mondelli, Marco},
  journal={arXiv preprint arXiv:2609.38011},
  year={2026}
}

@misc{codeparrot_clean,
  author={{CodeParrot}},
  title={{CodeParrot-Clean} Dataset},
  year={2021},
  publisher={Hugging Face},
  howpublished={\url{https://huggingface.co/datasets/codeparrot/codeparrot-clean}},
}


\clearpage
\appendix
\section*{Appendix}
\addcontentsline{toc}{section}{Appendix}







\section{Proof of Theorem \ref{thm:total-risk}}
\label{app:deteq}
We first introduce a few quantities that appear in the approximation error bounds. Following \citep[Appendix B]{wu2026improved}, given a covariance matrix $\bSigma \in \R^{d \times d}$ with eigenvalues $\xi_1 \geq \xi_2 \geq \xi_3 \geq \cdots$, we define the \emph{intrinsic dimension} $r_{\bSigma} (k)$ at level $k\in \naturals$ of $\bSigma$ to be the intrinsic dimension of the covariance matrix $\bSigma_{\geq k} = \diag ( \xi_k^2, \xi_{k+1}^2, \ldots)$, which is
        \[
        r_{\bSigma} (k) := \frac{\Tr( \bSigma_{\geq k})}{\| \bSigma_{\geq k} \|_{\rm op}} = \frac{\sum_{j =k}^d \xi^2_j}{\xi^2_{k}}.
        \] 

Given any $\lambda > 0,$ define 
\begin{equation}
\label{eq:rho_lambda}
    \begin{split}
        \rho_\lambda(\bSigma,n) := 1 + \frac{n \cdot \xi^2_{ \lfloor \eta_*\cdot n \rfloor}}{\lambda} \left\{  1 + \frac{r_{\bSigma} ( \lfloor \eta_* \cdot n \rfloor) \vee n}{n} \log \left( r_{\bSigma} ( \lfloor \eta_* \cdot n \rfloor) \vee n \right) \right\},
    \end{split}
\end{equation}
where $\eta_* \in (0,1/2)$ is a constant that  only depends on the constants in Assumption \ref{asm:distr}. For fixed $\lambda$ and for $z \in \{\pt, \ft\}$, we define $\rho_z(\lambda): = \rho_\lambda(\bSigma_z, n_z)$. We drop the dependence on $\lambda$ when there is no confusion.

Define the source and target resolvents by
\begin{equation}
\bH_{\pt}
:=
(\bX_{\pt}^{\top}\bX_{\pt}+\lambda_{\pt}\bI_d)^{-1},
\qquad
\bH_{\ft}
:=
(\bX_{\ft}^{\top}\bX_{\ft}+\lambda_{\ft}\bI_d)^{-1}.
\label{eq:resolvent-def}
\end{equation}

We use throughout the effective regularization parameters
$\lamonmu_{\pt},\lamonmu_{\ft}$, the matrices
$\bW_{\pt},\bW_{\ft}$, and the prefactors
$c_{\pt},c_{\ft}$ defined in \eqref{eq:effective-ridge-main},
\eqref{eq:W-def-main}, and \eqref{eq:c-def-main}.

The proof of \Cref{thm:total-risk} has two steps.
First, we decompose the risk exactly into a bias term and two variance terms. We then control each term by its deterministic counterpart.

\begin{lemma}[Exact risk decomposition]
\label{lemm:exact-risk}
The conditional excess risk satisfies
\begin{equation}
\cR
=
\cB+\cV_{\pt}+\cV_{\ft},
\label{eq:exact-risk-decomposition}
\end{equation}
where
\begin{equation}
\cB
:=
\lambda_{\ft}^2
\left(
\bbeta_{\pt}-\bbeta_{\ft}
-\lambda_{\pt}\bH_{\pt}\bbeta_{\pt}
\right)^\top
\bH_{\ft}\bSigma_{\ft}\bH_{\ft}
\left(
\bbeta_{\pt}-\bbeta_{\ft}
-\lambda_{\pt}\bH_{\pt}\bbeta_{\pt}
\right),
\label{eq:exact-bias}
\end{equation}
\begin{equation}
\cV_{\pt}
:=
\lambda_{\ft}^2\sigma_{\pt}^2
\Tr\left(
\bH_{\ft}\bSigma_{\ft}\bH_{\ft}
\bH_{\pt}\bX_{\pt}^{\top}\bX_{\pt}\bH_{\pt}
\right),
\label{eq:exact-pretraining-variance}
\end{equation}
and
\begin{equation}
\cV_{\ft}
:=
\sigma_{\ft}^2
\Tr\left(
\bH_{\ft}\bSigma_{\ft}\bH_{\ft}
\bX_{\ft}^{\top}\bX_{\ft}
\right).
\label{eq:exact-finetuning-variance}
\end{equation}
\end{lemma}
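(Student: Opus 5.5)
We write $\hat\bbeta_{\ft}-\bbeta_{\ft}$ explicitly in terms of the two resolvents $\bH_{\pt},\bH_{\ft}$ and the two noise vectors. We then expand the quadratic form $(\hat\bbeta_{\ft}-\bbeta_{\ft})^\top\bSigma_{\ft}(\hat\bbeta_{\ft}-\bbeta_{\ft})$ and take the expectation over $\boldsymbol{\eps}_{\pt},\boldsymbol{\eps}_{\ft}$, conditional on the designs.

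\emph{Source error.} Substitute $\by_{\pt}=\bX_{\pt}\bbeta_{\pt}+\boldsymbol{\eps}_{\pt}$ into \eqref{eq:closed-form}. Using $\bH_{\pt}\bX_{\pt}^\top\bX_{\pt}=\bI_d-\lambda_{\pt}\bH_{\pt}$, we get
\[
\hat\bbeta_{\pt}=\bbeta_{\pt}-\lambda_{\pt}\bH_{\pt}\bbeta_{\pt}+\bH_{\pt}\bX_{\pt}^\top\boldsymbol{\eps}_{\pt}.
\]

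\emph{Target error.} Similarly, substitute $\by_{\ft}=\bX_{\ft}\bbeta_{\ft}+\boldsymbol{\eps}_{\ft}$ and write $\bX_{\ft}^\top\bX_{\ft}\bbeta_{\ft}=\bH_{\ft}^{-1}\bbeta_{\ft}-\lambda_{\ft}\bbeta_{\ft}$. This gives
\[
\hat\bbeta_{\ft}-\bbeta_{\ft}=\lambda_{\ft}\bH_{\ft}\big(\hat\bbeta_{\pt}-\bbeta_{\ft}\big)+\bH_{\ft}\bX_{\ft}^\top\boldsymbol{\eps}_{\ft}.
\]
Plugging in the source error,
\[
\hat\bbeta_{\ft}-\bbeta_{\ft}=\lambda_{\ft}\bH_{\ft}\bm{\mu}+\lambda_{\ft}\bH_{\ft}\bH_{\pt}\bX_{\pt}^\top\boldsymbol{\eps}_{\pt}+\bH_{\ft}\bX_{\ft}^\top\boldsymbol{\eps}_{\ft},
\qquad \bm{\mu}:=\bbeta_{\pt}-\bbeta_{\ft}-\lambda_{\pt}\bH_{\pt}\bbeta_{\pt}.
\]
This is a deterministic term (given the designs) plus two terms that are linear in independent centered noise vectors.

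\emph{Expanding the risk.} The noises are mean zero and independent of each other and of the designs, so all cross terms vanish in expectation. The deterministic part gives $\lambda_{\ft}^2\bm{\mu}^\top\bH_{\ft}\bSigma_{\ft}\bH_{\ft}\bm{\mu}=\cB$. For the noise terms we use $\E[\boldsymbol{\eps}_z\boldsymbol{\eps}_z^\top]=\sigma_z^2\bI$ and cyclicity of the trace, with $\bH_{\pt},\bH_{\ft}$ symmetric:
\begin{itemize}
\item the source noise term gives $\lambda_{\ft}^2\sigma_{\pt}^2\Tr(\bX_{\pt}\bH_{\pt}\bH_{\ft}\bSigma_{\ft}\bH_{\ft}\bH_{\pt}\bX_{\pt}^\top)$, which rearranges to $\cV_{\pt}$;
\item the target noise term gives $\sigma_{\ft}^2\Tr(\bX_{\ft}\bH_{\ft}\bSigma_{\ft}\bH_{\ft}\bX_{\ft}^\top)=\cV_{\ft}$.
\end{itemize}

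The argument is entirely algebraic, and the only point needing care is the resolvent identity in the target step, which produces the factor $\lambda_{\ft}\bH_{\ft}$ multiplying the source error. In the infinite-dimensional setting, the same steps hold with adjoints, since all operators involved are bounded and the trace-class property of $\bSigma_{\ft}$ makes the traces finite.
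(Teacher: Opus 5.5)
Your proposal is correct and follows the paper's proof essentially step for step. Both use the identities $\bH_z\bX_z^\top\bX_z=\bI_d-\lambda_z\bH_z$ to write $\hat\bbeta_{\ft}-\bbeta_{\ft}$ as a design-determined term plus two independent centered noise terms. The cross terms then vanish, and cyclicity of the trace yields $\cB$, $\cV_{\pt}$ and $\cV_{\ft}$.
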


\begin{proof}
Using $\bH_{\ft}\bX_{\ft}^{\top}\bX_{\ft}
=\bI_d-\lambda_{\ft}\bH_{\ft}$, the error can be written as
\begin{equation}
\hat{\bbeta}_{\ft}-\bbeta_{\ft}
=
\lambda_{\ft}\bH_{\ft}
(\hat{\bbeta}_{\pt}-\bbeta_{\ft})
+
\bH_{\ft}\bX_{\ft}^{\top}\varepsilon_{\ft}.
\label{eq:ft-error-first}
\end{equation}

Similarly, since
$\bH_{\pt}\bX_{\pt}^{\top}\bX_{\pt}
=\bI_d-\lambda_{\pt}\bH_{\pt}$,
\begin{equation}
\hat{\bbeta}_{\pt}
=
\bbeta_{\pt}
-\lambda_{\pt}\bH_{\pt}\bbeta_{\pt}
+
\bH_{\pt}\bX_{\pt}^{\top}\varepsilon_{\pt}.
\label{eq:pretraining-noise-decomposition}
\end{equation}
Substituting this into \eqref{eq:ft-error-first} gives
\begin{equation}
\begin{aligned}
\hat{\bbeta}_{\ft}-\bbeta_{\ft}
={}&
\lambda_{\ft}\bH_{\ft}
\left(
\bbeta_{\pt}-\bbeta_{\ft}
-\lambda_{\pt}\bH_{\pt}\bbeta_{\pt}
\right)
+
\lambda_{\ft}\bH_{\ft}\bH_{\pt}
\bX_{\pt}^{\top}\epsilon_{\pt}
+
\bH_{\ft}\bX_{\ft}^{\top}\epsilon_{\ft}.
\end{aligned}
\label{eq:error-decomposition}
\end{equation}
Conditional on $\bX_{\pt}$ and $\bX_{\ft}$, the first term is deterministic, while the two noise terms are independent and zero mean. Hence, all cross terms vanish in the conditional risk. The deterministic term is exactly $\cB$ by \eqref{eq:exact-bias}. For the source-noise term, using
$\E[\epsilon_{\pt}\epsilon_{\pt}^{\top}]
=\sigma_{\pt}^2\bI$ and
$\E[\epsilon^\top A\epsilon]
=\Tr(A\E[\epsilon\epsilon^\top])$,
$$
\begin{aligned}
&\lambda_{\ft}^2
\E\left[
\epsilon_{\pt}^{\top}
\bX_{\pt}\bH_{\pt}\bH_{\ft}
\bSigma_{\ft}
\bH_{\ft}\bH_{\pt}\bX_{\pt}^{\top}
\epsilon_{\pt}
\mid \bX_{\pt},\bX_{\ft}
\right]=
\lambda_{\ft}^2\sigma_{\pt}^2
\Tr\left(
\bH_{\ft}\bSigma_{\ft}\bH_{\ft}
\bH_{\pt}\bX_{\pt}^{\top}\bX_{\pt}\bH_{\pt}
\right)
=
\cV_{\pt},
\end{aligned}
$$
where we used the cyclicity of the trace.
Likewise, since
$\E[\epsilon_{\ft}\epsilon_{\ft}^{\top}]
=\sigma_{\ft}^2\bI$,
$$
\begin{aligned}
&\E\left[
\epsilon_{\ft}^{\top}
\bX_{\ft}\bH_{\ft}
\bSigma_{\ft}
\bH_{\ft}\bX_{\ft}^{\top}
\epsilon_{\ft}
\mid \bX_{\pt},\bX_{\ft}
\right]
=
\sigma_{\ft}^2
\Tr\left(
\bH_{\ft}\bSigma_{\ft}\bH_{\ft}
\bX_{\ft}^{\top}\bX_{\ft}
\right)
=
\cV_{\ft}.
\end{aligned}
$$
Combining the three terms gives \eqref{eq:exact-risk-decomposition}.
\end{proof}

\begin{proposition}[Fine-tuning variance]
\label{prop:finetuning-variance}
Let $\rho_{\ft}:=\rho_{\lambda_{\ft}}(n_{\ft})$ be defined
as in \cite[Appendix~B]{wu2026improved}.
Fix $D,K>0$ and assume
$$
\lambda_{\ft}\rho_{\ft}
\geq
\|\bSigma_{\ft}\|_{\mathrm{op}}n_{\ft}^{-K},
\qquad
\rho_{\ft}^{5/2}\log^{3/2}(n_{\ft})
\leq K\sqrt{n_{\ft}}.
$$
Define
\begin{equation}
\overline{\cV}_{\ft}
:=
\sigma_{\ft}^2
\frac{
\Tr[\bSigma_{\ft}^2
(\bSigma_{\ft}+\lamonmu_{\ft}\bI_d)^{-2}]
}{
n_{\ft}
-
\Tr[\bSigma_{\ft}^2
(\bSigma_{\ft}+\lamonmu_{\ft}\bI_d)^{-2}]
}.
\label{eq:finetuning-variance-deteq}
\end{equation}
Then, for sufficiently large $n_{\ft}$, with probability at least
$1-n_{\ft}^{-D}$,
$$
|\cV_{\ft}-\overline{\cV}_{\ft}|
\leq
C_{D,K}
\frac{\rho_{\ft}^{6}\log^{3/2}(n_{\ft})}{\sqrt{n_{\ft}}}
\overline{\cV}_{\ft}.
$$
\end{proposition}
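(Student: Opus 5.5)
The plan is to recognize $\cV_{\ft}$ as the standard ridge variance functional of a single dataset and reduce it to a known deterministic equivalent. Using $\bX_{\ft}^\top\bX_{\ft}=\bH_{\ft}^{-1}-\lambda_{\ft}\bI_d$, we can write
\[
\Tr\left(\bH_{\ft}\bSigma_{\ft}\bH_{\ft}\bX_{\ft}^\top\bX_{\ft}\right)
=\Tr\left(\bSigma_{\ft}\bX_{\ft}^\top\bX_{\ft}\bH_{\ft}^{2}\right).
\]
This is exactly the quantity $\Tr(\bSigma\bX^\top\bX(\bX^\top\bX+\lambda)^{-2})$ governing the variance of ordinary ridge regression on the target data alone. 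The first step is therefore to observe that $\cV_{\ft}$ depends only on $\bX_{\ft}$ and $\lambda_{\ft}$, and coincides with the variance of standard ridge regression with noise level $\sigma_{\ft}^2$. Neither the source estimator nor $\lambda_{\pt}$ enters.

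The second step is to invoke the deterministic equivalent for this functional from \citep[Theorem~6]{wu2026improved}, or equivalently the variance result of \cite{misiakiewicz2024non}. Under Assumption~\ref{asm:distr} and exactly the two stated conditions on $\lambda_{\ft}\rho_{\ft}$ and $\rho_{\ft}^{5/2}\log^{3/2}(n_{\ft})$, that result gives, with probability at least $1-n_{\ft}^{-D}$,
\[
\left|\Tr\left(\bSigma_{\ft}\bX_{\ft}^\top\bX_{\ft}\bH_{\ft}^2\right)-\frac{\mathrm{df}_2}{n_{\ft}-\mathrm{df}_2}\right|
\le C_{D,K}\frac{\rho_{\ft}^6\log^{3/2}(n_{\ft})}{\sqrt{n_{\ft}}}\cdot\frac{\mathrm{df}_2}{n_{\ft}-\mathrm{df}_2},
\qquad
\mathrm{df}_2:=\Tr[\bSigma_{\ft}^2(\bSigma_{\ft}+\lamonmu_{\ft}\bI_d)^{-2}].
\]
Here $\lamonmu_{\ft}$ solves \eqref{eq:effective-ridge-main}. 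Multiplying by $\sigma_{\ft}^2$ yields the claim with $\overline{\cV}_{\ft}$ as in \eqref{eq:finetuning-variance-deteq}.

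The main step to check is that the normalizations match. Some references state the result for $\bX^\top\bX/n$ with ridge $\lambda/n$, so I would rescale explicitly and verify that the fixed point equation and $\rho_{\ft}$ transform consistently with the definitions in this paper. If the cited theorem is stated for the pair of functionals $\Tr(\bSigma\bH)$ and $\Tr(\bSigma\bH\bSigma\bH)$ rather than for the variance itself, I would instead write the variance as $\Tr(\bSigma_{\ft}\bH_{\ft})-\lambda_{\ft}\Tr(\bSigma_{\ft}\bH_{\ft}^2)$. I would then combine the two multiplicative equivalents for these terms, paying attention to cancellation, since the difference could lose the relative error. This is the main obstacle. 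To handle it, I would prefer the direct variance-type statement, which is multiplicative because the quantity is a ratio with denominator $n_{\ft}-\mathrm{df}_2\gtrsim n_{\ft}/\rho_{\ft}$. The ``sufficiently large $n_{\ft}$'' qualifier absorbs the requirement that the error factor be below a constant, so that the denominator stays bounded away from zero.
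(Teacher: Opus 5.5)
Your proposal is correct and matches the paper's proof. The paper also treats $\cV_{\ft}/\sigma_{\ft}^2=\Tr(\bH_{\ft}\bSigma_{\ft}\bH_{\ft}\bX_{\ft}^\top\bX_{\ft})$ as the single-dataset ridge variance functional and reads off the multiplicative bound directly from \cite[Theorem~5, equation~(37)]{wu2026improved} with $\bA=\bI_d$; because that result already covers this variance-type functional, your fallback via $\Tr(\bSigma_{\ft}\bH_{\ft})-\lambda_{\ft}\Tr(\bSigma_{\ft}\bH_{\ft}^2)$ and its cancellation issue never arise.
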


\begin{proof}
By \eqref{eq:exact-finetuning-variance},
$$
\cV_{\ft}
=
\sigma_{\ft}^2
\Tr\left(
\bH_{\ft}\bSigma_{\ft}\bH_{\ft}
\bX_{\ft}^\top\bX_{\ft}
\right).
$$
The trace above is exactly the
functional in
\cite[Theorem~5, equation~(37)]{wu2026improved}
with $\bA=\bI_d$. Therefore, that equation gives
$$
\sigma_{\ft}^2
\frac{
\Tr\left[
\bSigma_{\ft}^2
(\bSigma_{\ft}+\lamonmu_{\ft}\bI_d)^{-2}
\right]
}{
n_{\ft}
-
\Tr\left[
\bSigma_{\ft}^2
(\bSigma_{\ft}+\lamonmu_{\ft}\bI_d)^{-2}
\right]
}=\overline{\cV}_{\ft}.
$$
More precisely, the finite-sample bound in the same theorem yields
$$
|\cV_{\ft}-\overline{\cV}_{\ft}|
\leq
C_{D,K}
\frac{\rho_{\ft}^{6}\log^{3/2}(n_{\ft})}{\sqrt{n_{\ft}}}
\overline{\cV}_{\ft}
$$
with probability at least $1-n_{\ft}^{-D}$.
\end{proof}

\begin{proposition}[Pretraining variance]
\label{prop:pretraining-variance}
Assume the two design matrices are independent and satisfy the regularization conditions of \Cref{prop:finetuning-variance}. Set
$$
\varepsilon_{\ft}
:=
\frac{\rho_{\ft}^{6}\log^{5/2}(n_{\ft})}{\sqrt{n_{\ft}}},
\qquad
\varepsilon_{\pt}
:=
\frac{\rho_{\pt}^{6}\log^{3/2}(n_{\pt})}{\sqrt{n_{\pt}}},
$$
and assume $\varepsilon_{\pt},\varepsilon_{\ft}\leq1$.
Define
\begin{equation}
\overline{\cV}_{\pt}
:=
\sigma_{\pt}^2c_{\ft}c_{\pt}
\Tr(\bW_{\ft}\bW_{\pt}).
\label{eq:pretraining-variance-deteq}
\end{equation}
Then, for sufficiently large sample sizes, with probability at least
$1-n_{\pt}^{-D}-n_{\ft}^{-D}$,
$$
|\cV_{\pt}-\overline{\cV}_{\pt}|
\leq
C_{D,K}
(\varepsilon_{\pt}+\varepsilon_{\ft})
\overline{\cV}_{\pt}.
$$
\end{proposition}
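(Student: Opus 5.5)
We condition on the target design and apply deterministic equivalents in two stages: first over the source design, then over the target design. By \eqref{eq:exact-pretraining-variance} and cyclicity of the trace,
$$
\cV_{\pt}=\lambda_{\ft}^2\sigma_{\pt}^2\Tr\left(\bA_{\ft}\,\bH_{\pt}\bX_{\pt}^\top\bX_{\pt}\bH_{\pt}\right),\qquad \bA_{\ft}:=\bH_{\ft}\bSigma_{\ft}\bH_{\ft}.
$$
The matrix $\bA_{\ft}$ is PSD and, by independence of the two designs, independent of $\bX_{\pt}$. Hence, conditionally on $\bX_{\ft}$, the trace is the functional $\Tr(\bA\bH\bX^\top\bX\bH)$ of \cite[Theorem~5, equation~(37)]{wu2026improved} with a general deterministic PSD $\bA=\bA_{\ft}$, applied to the source problem. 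That result should give, with probability at least $1-n_{\pt}^{-D}$,
$$
\Tr\left(\bA_{\ft}\bH_{\pt}\bX_{\pt}^\top\bX_{\pt}\bH_{\pt}\right)=(1+O(\varepsilon_{\pt}))\,\frac{\Tr(\bA_{\ft}\bSigma_{\pt}(\bSigma_{\pt}+\lamonmu_{\pt}\bI_d)^{-2})}{n_{\pt}-\Tr[\bSigma_{\pt}^2(\bSigma_{\pt}+\lamonmu_{\pt}\bI_d)^{-2}]}=(1+O(\varepsilon_{\pt}))\,c_{\pt}\Tr(\bA_{\ft}\bW_{\pt}).
$$
This step is multiplicative because $\bA_{\ft}$ enters linearly and the bound in that theorem is relative for PSD $\bA$. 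The event depends on $\bX_{\ft}$ only through $\bA_{\ft}$, but the constant is uniform, so the probability statement holds unconditionally after integrating.

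Next we handle the target side:
$$
\lambda_{\ft}^2\Tr(\bA_{\ft}\bW_{\pt})=\lambda_{\ft}^2\Tr\left(\bW_{\pt}^{1/2}\bH_{\ft}\bSigma_{\ft}\bH_{\ft}\bW_{\pt}^{1/2}\right),
$$
with $\bW_{\pt}$ a deterministic PSD matrix. This is the standard ridge-bias-type functional $\lambda^2\Tr(\bB\bH\bSigma\bH)$. Its deterministic equivalent, from \cite[Theorem~6]{wu2026improved} (the bias functional with a general PSD test matrix), is
$$
\frac{\lamonmu_{\ft}^2 n_{\ft}}{n_{\ft}-\Tr[\bSigma_{\ft}^2(\bSigma_{\ft}+\lamonmu_{\ft}\bI_d)^{-2}]}\Tr\left(\bW_{\pt}\bSigma_{\ft}(\bSigma_{\ft}+\lamonmu_{\ft}\bI_d)^{-2}\right)=c_{\ft}\Tr(\bW_{\ft}\bW_{\pt}).
$$
The relative error is $O(\varepsilon_{\ft})$ with probability $1-n_{\ft}^{-D}$. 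Here the extra $\log$ in $\varepsilon_{\ft}$ relative to Proposition~\ref{prop:finetuning-variance} comes from the bias-type functional and from needing the test matrix to be trace-class rather than a rank-one direction. Multiplying the two relative errors, $(1+O(\varepsilon_{\pt}))(1+O(\varepsilon_{\ft}))=1+O(\varepsilon_{\pt}+\varepsilon_{\ft})$ using $\varepsilon_z\le1$, and a union bound yields the claim with $\overline{\cV}_{\pt}=\sigma_{\pt}^2c_{\ft}c_{\pt}\Tr(\bW_{\ft}\bW_{\pt})$.

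The main obstacle is confirming that the cited theorem yields a \emph{multiplicative} bound for $\Tr(\bB\lambda^2\bH\bSigma\bH)$ with a general PSD $\bB$. Bias equivalents are often stated only for a vector $\bbeta\bbeta^\top$, with additive errors. If only the rank-one version is available, we would diagonalize $\bW_{\pt}=\sum_k w_k\be_k\be_k^\top$ and apply the rank-one bound to each $\bbeta=\sqrt{w_k}\be_k$. This needs a union bound over (effectively) polynomially many directions; the tail can be truncated using $\Tr(\bW_{\pt})<\infty$ together with the operator bound $\lambda_{\ft}^2\|\bH_{\ft}\bSigma_{\ft}\bH_{\ft}\|\lesssim\|\bSigma_{\ft}\|$. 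Summing then preserves the relative error because every term is nonnegative. A similar positivity argument justifies the first step if needed.
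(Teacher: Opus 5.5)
Your proposal is correct and is essentially the paper's argument. Both proofs remove the two designs one at a time: the variance-type equivalent \cite[Theorem~5, eq.~(37)]{wu2026improved} handles $\Tr(\,\cdot\,\bH_{\pt}\bX_{\pt}^\top\bX_{\pt}\bH_{\pt})$ with relative error $\varepsilon_{\pt}$, the equivalent for $\lambda_{\ft}^2\Tr(\,\cdot\,\bH_{\ft}\bSigma_{\ft}\bH_{\ft})$ gives relative error $\varepsilon_{\ft}$, and the relative errors are then multiplied and combined with a union bound. The only difference is the order of conditioning: you condition on $\bX_{\ft}$ first, with random test matrix $\bH_{\ft}\bSigma_{\ft}\bH_{\ft}$ and deterministic $\bW_{\pt}$ afterwards, while the paper conditions on $\bX_{\pt}$ first, with random test matrix $\bH_{\pt}\bX_{\pt}^\top\bX_{\pt}\bH_{\pt}$ and deterministic $\bW_{\ft}$ afterwards.

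The multiplicative bound for a general PSD test matrix, which you flag as the main obstacle, is exactly what the paper takes from \cite[Theorem~5, eq.~(36)]{wu2026improved} with $\bB=\bI_d$ (not Theorem~6). So your rank-one fallback is not needed.
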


\begin{proof}
By \eqref{eq:exact-pretraining-variance},
$$
\cV_{\pt}
=
\lambda_{\ft}^2\sigma_{\pt}^2
\Tr\left(
\bH_{\ft}\bSigma_{\ft}\bH_{\ft}
\bH_{\pt}\bX_{\pt}^{\top}\bX_{\pt}\bH_{\pt}
\right).
$$
We remove the two random designs one at a time. First, we condition on $\bX_{\pt}$, so that
$\bH_{\pt}\bX_{\pt}^{\top}\bX_{\pt}\bH_{\pt}$
is fixed and positive semidefinite. The trace in $\cV_{\pt}$ is the
functional in
\cite[Theorem~5, equation~(36)]{wu2026improved} with
$$
\bA_{\ft}
=
\bSigma_{\ft}^{-1/2}
\bH_{\pt}\bX_{\pt}^{\top}\bX_{\pt}\bH_{\pt}
\bSigma_{\ft}^{-1/2},
\qquad
\bB_{\ft}=\bI_d.
$$
For $\bB_{\ft}=\bI_d$, the deterministic term in \cite[equation~(36)]{wu2026improved} is
$$
\frac{\lamonmu_{\ft}^2}{\lambda_{\ft}^2}
\frac{
n_{\ft}
\Tr\left[
\bA_{\ft}\bSigma_{\ft}^2
(\bSigma_{\ft}+\lamonmu_{\ft}\bI_d)^{-2}
\right]
}{
n_{\ft}
-
\Tr\left[
\bSigma_{\ft}^2
(\bSigma_{\ft}+\lamonmu_{\ft}\bI_d)^{-2}
\right]
}.
$$
Multiplying by $\lambda_{\ft}^2\sigma_{\pt}^2$ and substituting
$\bA_{\ft}$ gives $\sigma_{\pt}^2c_{\ft}
\Tr\left(
\bW_{\ft}
\bH_{\pt}\bX_{\pt}^{\top}\bX_{\pt}\bH_{\pt}
\right).
$
Hence, with probability at least $1-n_{\ft}^{-D}$,
\begin{equation}
|\cV_{\pt}-\sigma_{\pt}^2c_{\ft}
\Tr(
\bW_{\ft}\bH_{\pt}\bX_{\pt}^{\top}\bX_{\pt}\bH_{\pt})|
\leq
C_{D,K}\varepsilon_{\ft}\sigma_{\pt}^2c_{\ft}
\Tr(
\bW_{\ft}\bH_{\pt}\bX_{\pt}^{\top}\bX_{\pt}\bH_{\pt}).
\label{eq:pretraining-first-stage}
\end{equation}

Next, note that
$\Tr(\bW_{\ft}\bH_{\pt}\bX_{\pt}^{\top}\bX_{\pt}\bH_{\pt})$
is the functional in
\cite[Theorem~5, equation~(37)]{wu2026improved}
with
$\bA_{\pt}
=
\bSigma_{\pt}^{-1/2}
\bW_{\ft}
\bSigma_{\pt}^{-1/2}$.
Its deterministic term is
$$
\begin{aligned}
&\sigma_{\pt}^2c_{\ft}
\frac{
\Tr\left[
\bW_{\ft}
\bSigma_{\pt}
(\bSigma_{\pt}+\lamonmu_{\pt}\bI_d)^{-2}
\right]
}{
n_{\pt}
-
\Tr\left[
\bSigma_{\pt}^2
(\bSigma_{\pt}+\lamonmu_{\pt}\bI_d)^{-2}
\right]
}=
\sigma_{\pt}^2c_{\ft}c_{\pt}
\Tr(\bW_{\ft}\bW_{\pt})
=
\overline{\cV}_{\pt}.
\end{aligned}
$$
Therefore, with probability at least $1-n_{\pt}^{-D}$,
\begin{equation}
|\sigma_{\pt}^2c_{\ft}
\Tr\left(
\bW_{\ft}
\bH_{\pt}\bX_{\pt}^{\top}\bX_{\pt}\bH_{\pt}
\right)-\overline{\cV}_{\pt}|
\leq
C_{D,K}\varepsilon_{\pt}\overline{\cV}_{\pt}.
\label{eq:pretraining-second-stage}
\end{equation}

Since $\varepsilon_{\pt}\leq1$, 
\eqref{eq:pretraining-second-stage} implies
$$
\sigma_{\pt}^2c_{\ft}
\Tr\left(
\bW_{\ft}
\bH_{\pt}\bX_{\pt}^{\top}\bX_{\pt}\bH_{\pt}
\right)
\leq
C_{D,K}\overline{\cV}_{\pt}.
$$
Thus, by the triangle inequality,
\eqref{eq:pretraining-first-stage} and
\eqref{eq:pretraining-second-stage} give
$$
|\cV_{\pt}-\overline{\cV}_{\pt}|
\leq
C_{D,K}
(\varepsilon_{\pt}+\varepsilon_{\ft})
\overline{\cV}_{\pt}.
$$

Since the first bound holds conditionally on $\bX_{\pt}$ with failure probability at most $n_{\ft}^{-D}$, averaging over $\bX_{\pt}$ gives the same unconditional failure probability. A union bound with the source event then gives probability at least $1-n_{\pt}^{-D}-n_{\ft}^{-D}$.
\end{proof}

\begin{proposition}[Bias]
\label{prop:bias}
Assume
$$
\varepsilon_{\pt}
:=
\frac{\rho_{\pt}^{6}\log^{5/2}(n_{\pt})}{\sqrt{n_{\pt}}},
\qquad
\varepsilon_{\ft}
:=
\frac{\rho_{\ft}^{6}\log^{5/2}(n_{\ft})}{\sqrt{n_{\ft}}}.
$$
Under the assumptions of
\Cref{prop:finetuning-variance}, let
$\bbeta_{\pt},\bbeta_{\ft}\in\R^d$ be fixed.
Define
\begin{equation}
\begin{aligned}
\overline{\cB}
:={}&
c_{\ft}
\left[
\bSigma_{\pt}
(\bSigma_{\pt}+\lamonmu_{\pt}\bI_d)^{-1}
\bbeta_{\pt}
-\bbeta_{\ft}
\right]^\top
\bW_{\ft}
\left[
\bSigma_{\pt}
(\bSigma_{\pt}+\lamonmu_{\pt}\bI_d)^{-1}
\bbeta_{\pt}
-\bbeta_{\ft}
\right]
\\
&+
c_{\ft}c_{\pt}\lamonmu_{\pt}^2
\Tr(\bW_{\ft}\bW_{\pt})
\bbeta_{\pt}^\top\bW_{\pt}\bbeta_{\pt},
\end{aligned}
\label{eq:bias-deteq}
\end{equation}
and
\begin{equation}
\mathcal E_{\pt}
:=
c_{\ft}\lamonmu_{\pt}^2
\left(
\bbeta_{\pt}^\top
(\bSigma_{\pt}+\lamonmu_{\pt}\bI_d)^{-1}
\bbeta_{\pt}
\right)
\left\|
\bW_{\ft}^{1/2}
(\bSigma_{\pt}+\lamonmu_{\pt}\bI_d)^{-1}
\bW_{\ft}^{1/2}
\right\|_{\mathrm{op}}.
\label{eq:bias-error-scale}
\end{equation}
Then, for sufficiently large sample sizes, with probability at least
$1-n_{\pt}^{-D}-n_{\ft}^{-D}$,
\begin{equation}
\left|\cB-\overline{\cB}\right|
\leq
C_{D,K}
\left[
(\varepsilon_{\pt}+\varepsilon_{\ft})\overline{\cB}
+
\varepsilon_{\pt}\mathcal E_{\pt}
\right].
\label{eq:bias-approximation}
\end{equation}
\end{proposition}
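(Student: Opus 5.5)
We peel off the two random designs one at a time, exactly as in the proof of \Cref{prop:pretraining-variance}. Write $\bm := \bbeta_{\pt}-\bbeta_{\ft}-\lambda_{\pt}\bH_{\pt}\bbeta_{\pt}$, so that $\cB=\lambda_{\ft}^2\bm^\top\bH_{\ft}\bSigma_{\ft}\bH_{\ft}\bm$. The vector $\bm$ depends only on $\bX_{\pt}$ and is independent of $\bX_{\ft}$.

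\textbf{Step 1: target design.} Conditionally on $\bX_{\pt}$, we view $\cB$ as the functional in \cite[Theorem~5, equation~(36)]{wu2026improved} with $\bA_{\ft}=\bSigma_{\ft}^{-1/2}\bm\bm^\top\bSigma_{\ft}^{-1/2}$ and $\bB_{\ft}=\bI_d$. The computation is the same as for $\cV_{\pt}$, and it gives
\[
\bigl|\cB-c_{\ft}\,\bm^\top\bW_{\ft}\bm\bigr|\le C_{D,K}\,\varepsilon_{\ft}\,c_{\ft}\,\bm^\top\bW_{\ft}\bm
\]
with probability at least $1-n_{\ft}^{-D}$. Integrating over $\bX_{\pt}$ keeps this failure probability.

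\textbf{Step 2: source design.} Set $\bar\bbeta:=\bSigma_{\pt}(\bSigma_{\pt}+\lamonmu_{\pt}\bI_d)^{-1}\bbeta_{\pt}-\bbeta_{\ft}$ and $\bDelta:=\lamonmu_{\pt}(\bSigma_{\pt}+\lamonmu_{\pt}\bI_d)^{-1}\bbeta_{\pt}-\lambda_{\pt}\bH_{\pt}\bbeta_{\pt}$, so that $\bm=\bar\bbeta+\bDelta$. Expanding the quadratic form gives
\[
\bm^\top\bW_{\ft}\bm=\bar\bbeta^\top\bW_{\ft}\bar\bbeta+2\bar\bbeta^\top\bW_{\ft}\bDelta+\bDelta^\top\bW_{\ft}\bDelta.
\]
We then control each of the three terms.
\begin{itemize}
\item The first term, multiplied by $c_{\ft}$, is exactly $\overline{\cB}_1$.
\item The quadratic term $\lambda_{\pt}^2\bbeta_{\pt}^\top\bH_{\pt}\bW_{\ft}\bH_{\pt}\bbeta_{\pt}$, minus its first-order deterministic part, is the standard ridge-bias functional of \cite[Theorem~5/6]{wu2026improved} with test matrix $\bW_{\ft}$. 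Its deterministic equivalent is $\lamonmu_{\pt}^2\bbeta_{\pt}^\top(\bSigma_{\pt}+\lamonmu_{\pt})^{-1}\bW_{\ft}(\bSigma_{\pt}+\lamonmu_{\pt})^{-1}\bbeta_{\pt}+c_{\pt}\lamonmu_{\pt}^2\Tr(\bW_{\ft}\bW_{\pt})\,\bbeta_{\pt}^\top\bW_{\pt}\bbeta_{\pt}$. After rearranging $\bm$ around $\bar\bbeta$, the first piece is absorbed into the $\overline{\cB}_1$ expansion, and the second piece yields $\overline{\cB}_2$.
\item The cross term is a linear functional $\lambda_{\pt}\bu^\top\bH_{\pt}\bbeta_{\pt}$ with deterministic vector $\bu=\bW_{\ft}\bar\bbeta$. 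We apply the first-order deterministic equivalent $\lambda_{\pt}\bH_{\pt}\approx\lamonmu_{\pt}(\bSigma_{\pt}+\lamonmu_{\pt})^{-1}$ from \cite[Theorem~6]{wu2026improved}, with additive error of order $\varepsilon_{\pt}$.
\end{itemize}
Each statement holds with probability $1-n_{\pt}^{-D}$; a union bound completes the probability accounting.

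\textbf{Main obstacle.} The hard part is that the cross and quadratic terms are not sign-definite relative to $\overline{\cB}$, so their error cannot be made multiplicative in $\overline{\cB}$. This is where $\mathcal E_{\pt}$ enters. We would bound the linear-functional error by Cauchy--Schwarz in the $(\bSigma_{\pt}+\lamonmu_{\pt})$ geometry:
\[
|\bu^\top(\lambda_{\pt}\bH_{\pt}-\lamonmu_{\pt}(\bSigma_{\pt}+\lamonmu_{\pt})^{-1})\bbeta_{\pt}|\lesssim\varepsilon_{\pt}\,\lamonmu_{\pt}\,\|(\bSigma_{\pt}+\lamonmu_{\pt})^{-1/2}\bu\|\,\|(\bSigma_{\pt}+\lamonmu_{\pt})^{-1/2}\bbeta_{\pt}\|.
\]
Applying $2xy\le x^2+y^2$ and $\bu=\bW_{\ft}^{1/2}(\bW_{\ft}^{1/2}\bar\bbeta)$, the product splits into $c_{\ft}\bar\bbeta^\top\bW_{\ft}\bar\bbeta\le\overline{\cB}$ plus
\[
c_{\ft}\lamonmu_{\pt}^2\,\|\bW_{\ft}^{1/2}(\bSigma_{\pt}+\lamonmu_{\pt})^{-1}\bW_{\ft}^{1/2}\|_{\op}\;\bbeta_{\pt}^\top(\bSigma_{\pt}+\lamonmu_{\pt})^{-1}\bbeta_{\pt}=\mathcal E_{\pt}.
\]
The error in the quadratic term is handled analogously, by the same operator-norm sandwich. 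Finally, we combine Steps 1 and 2 with the triangle inequality, using $\varepsilon\le1$ so that products of errors can be absorbed, to obtain \eqref{eq:bias-approximation}. The extra $\log$ factor in $\varepsilon_{\pt}$ comes from the cited linear-functional bound.
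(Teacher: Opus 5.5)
Your proposal is correct and is essentially the paper's proof. It uses the same two-stage conditioning: \cite[Theorem~5, eq.~(36)]{wu2026improved} removes the target design, Theorem~6, eqs.~(38) and (40) of the same reference handle the cross and quadratic source terms, and the operator-norm sandwich $\bW_{\ft}(\bSigma_{\pt}+\lamonmu_{\pt}\bI_d)^{-1}\bW_{\ft}\preceq\|\bW_{\ft}^{1/2}(\bSigma_{\pt}+\lamonmu_{\pt}\bI_d)^{-1}\bW_{\ft}^{1/2}\|_{\mathrm{op}}\,\bW_{\ft}$ produces $\mathcal E_{\pt}$.

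The only real difference is that you center the expansion at $\bar{\bbeta}=\bSigma_{\pt}(\bSigma_{\pt}+\lamonmu_{\pt}\bI_d)^{-1}\bbeta_{\pt}-\bbeta_{\ft}$ rather than at $\bbeta_{\pt}-\bbeta_{\ft}$. This bounds the cross-term error directly by $C\varepsilon_{\pt}\sqrt{\overline{\cB}_1\mathcal E_{\pt}}$ and spares the paper's auxiliary estimate $c_{\ft}(\bbeta_{\pt}-\bbeta_{\ft})^\top\bW_{\ft}(\bbeta_{\pt}-\bbeta_{\ft})\le 2\overline{\cB}+2\mathcal E_{\pt}$. The small cost is that $\boldsymbol{\Delta}^\top\bW_{\ft}\boldsymbol{\Delta}$ contains an extra term linear in $\bH_{\pt}$, which the same sandwich bounds by $\varepsilon_{\pt}\mathcal E_{\pt}$. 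Also, its $\lamonmu_{\pt}^2\bbeta_{\pt}^\top(\bSigma_{\pt}+\lamonmu_{\pt}\bI_d)^{-1}\bW_{\ft}(\bSigma_{\pt}+\lamonmu_{\pt}\bI_d)^{-1}\bbeta_{\pt}$ contributions cancel internally rather than being ``absorbed into $\overline{\cB}_1$'' as you wrote.
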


\begin{proof}
From \eqref{eq:exact-bias}, we have
$$
\cB
=
\lambda_{\ft}^2
\left(
\bbeta_{\pt}-\bbeta_{\ft}
-\lambda_{\pt}\bH_{\pt}\bbeta_{\pt}
\right)^\top
\bH_{\ft}\bSigma_{\ft}\bH_{\ft}
\left(
\bbeta_{\pt}-\bbeta_{\ft}
-\lambda_{\pt}\bH_{\pt}\bbeta_{\pt}
\right).
$$

We first remove the target design and condition on $\bX_{\pt}$, so that
$\bbeta_{\pt}-\bbeta_{\ft}
-\lambda_{\pt}\bH_{\pt}\bbeta_{\pt}$
is fixed. By cyclic property of trace, the quadratic form above
is the functional in
\cite[Theorem~5, equation~(36)]{wu2026improved} with
$$
\bA_{\ft}
=
\bSigma_{\ft}^{-1/2}
\left(
\bbeta_{\pt}-\bbeta_{\ft}
-\lambda_{\pt}\bH_{\pt}\bbeta_{\pt}
\right)
\left(
\bbeta_{\pt}-\bbeta_{\ft}
-\lambda_{\pt}\bH_{\pt}\bbeta_{\pt}
\right)^\top
\bSigma_{\ft}^{-1/2},
\qquad
\bB_{\ft}=\bI_d.
$$
The deterministic term in \cite[equation~(36)]{wu2026improved}, after multiplying by
$\lambda_{\ft}^2$, is
$$
c_{\ft}
\left(
\bbeta_{\pt}-\bbeta_{\ft}
-\lambda_{\pt}\bH_{\pt}\bbeta_{\pt}
\right)^\top
\bW_{\ft}
\left(
\bbeta_{\pt}-\bbeta_{\ft}
-\lambda_{\pt}\bH_{\pt}\bbeta_{\pt}
\right).
$$
Hence, with probability at least $1-n_{\ft}^{-D}$,
\begin{equation}
\begin{aligned}
\Big|
\cB
&-
c_{\ft}
\left(
\bbeta_{\pt}-\bbeta_{\ft}
-\lambda_{\pt}\bH_{\pt}\bbeta_{\pt}
\right)^\top
\bW_{\ft}
\left(
\bbeta_{\pt}-\bbeta_{\ft}
-\lambda_{\pt}\bH_{\pt}\bbeta_{\pt}
\right)
\Big|
\\
&\leq
C_{D,K}\varepsilon_{\ft}
c_{\ft}
\left(
\bbeta_{\pt}-\bbeta_{\ft}
-\lambda_{\pt}\bH_{\pt}\bbeta_{\pt}
\right)^\top
\bW_{\ft}
\left(
\bbeta_{\pt}-\bbeta_{\ft}
-\lambda_{\pt}\bH_{\pt}\bbeta_{\pt}
\right).
\end{aligned}
\label{eq:bias-target-step}
\end{equation}

We next remove the source design. It remains to analyze
$$
c_{\ft}
\left(
\bbeta_{\pt}-\bbeta_{\ft}
-\lambda_{\pt}\bH_{\pt}\bbeta_{\pt}
\right)^\top
\bW_{\ft}
\left(
\bbeta_{\pt}-\bbeta_{\ft}
-\lambda_{\pt}\bH_{\pt}\bbeta_{\pt}
\right).
$$ Expanding the remaining quadratic form gives
$$
\begin{aligned}
c_{\ft}\Big[
&(\bbeta_{\pt}-\bbeta_{\ft})^\top
\bW_{\ft}
(\bbeta_{\pt}-\bbeta_{\ft})
-2\lambda_{\pt}
(\bbeta_{\pt}-\bbeta_{\ft})^\top
\bW_{\ft}\bH_{\pt}\bbeta_{\pt}
+\lambda_{\pt}^2
\bbeta_{\pt}^\top
\bH_{\pt}\bW_{\ft}\bH_{\pt}\bbeta_{\pt}
\Big].
\end{aligned}
$$
The first term is already deterministic, so only the cross and
quadratic terms remain. For the cross term, we apply
\cite[Theorem~6, equation~(38)]{wu2026improved}
with
$$
\bA
=
\bSigma_{\pt}^{-1/2}
\bbeta_{\pt}
(\bbeta_{\pt}-\bbeta_{\ft})^\top
\bW_{\ft}
\bSigma_{\pt}^{-1/2}.
$$
Its deterministic term is
$$
2c_{\ft}\lamonmu_{\pt}
(\bbeta_{\pt}-\bbeta_{\ft})^\top
\bW_{\ft}
(\bSigma_{\pt}+\lamonmu_{\pt}\bI_d)^{-1}
\bbeta_{\pt}.
$$

Substituting the above choice of $\bA$ into the error term in
\cite[Theorem~6, equation~(38)]{wu2026improved} yields a quadratic form $(\bSigma_{\pt}+\lamonmu_{\pt}\bI_d)^{-1}$. To reduce it to
$(\bbeta_{\pt}-\bbeta_{\ft})^\top\bW_{\ft}(\bbeta_{\pt}-\bbeta_{\ft})$,
we use
$$
\bW_{\ft}
(\bSigma_{\pt}+\lamonmu_{\pt}\bI_d)^{-1}
\bW_{\ft}
\preceq
\left\|
\bW_{\ft}^{1/2}
(\bSigma_{\pt}+\lamonmu_{\pt}\bI_d)^{-1}
\bW_{\ft}^{1/2}
\right\|_{\mathrm{op}}
\bW_{\ft},
$$
which gives
\begin{equation}
\begin{aligned}
&\Big|
2c_{\ft}\lambda_{\pt}
(\bbeta_{\pt}-\bbeta_{\ft})^\top
\bW_{\ft}\bH_{\pt}\bbeta_{\pt}
-
2c_{\ft}\lamonmu_{\pt}
(\bbeta_{\pt}-\bbeta_{\ft})^\top
\bW_{\ft}
(\bSigma_{\pt}+\lamonmu_{\pt}\bI_d)^{-1}
\bbeta_{\pt}
\Big|
\\
&\hspace{16em}\leq
C_{D,K}\varepsilon_{\pt}
\sqrt{
\mathcal E_{\pt}\,
c_{\ft}
(\bbeta_{\pt}-\bbeta_{\ft})^\top
\bW_{\ft}
(\bbeta_{\pt}-\bbeta_{\ft})
}.
\end{aligned}
\label{eq:bias-source-cross}
\end{equation}
For the quadratic term, we apply
\cite[Theorem~6, equation~(40)]{wu2026improved}
with
$$
\bA
=
\bSigma_{\pt}^{-1/2}
\bbeta_{\pt}\bbeta_{\pt}^\top
\bSigma_{\pt}^{-1/2},
\qquad
\bB
=
\bSigma_{\pt}^{-1/2}
\bW_{\ft}
\bSigma_{\pt}^{-1/2}.
$$
Its deterministic term is
$$
\begin{aligned}
c_{\ft}\lamonmu_{\pt}^2
\bbeta_{\pt}^\top
(\bSigma_{\pt}+\lamonmu_{\pt}\bI_d)^{-1}
\bW_{\ft}
(\bSigma_{\pt}+\lamonmu_{\pt}\bI_d)^{-1}
\bbeta_{\pt}+c_{\ft}c_{\pt}\lamonmu_{\pt}^2
\Tr(\bW_{\ft}\bW_{\pt})
\bbeta_{\pt}^\top\bW_{\pt}\bbeta_{\pt}.
\end{aligned}
$$
Therefore,
\begin{equation}
\begin{aligned}
\Big|
&c_{\ft}\lambda_{\pt}^2
\bbeta_{\pt}^\top
\bH_{\pt}\bW_{\ft}\bH_{\pt}\bbeta_{\pt}
-
c_{\ft}\lamonmu_{\pt}^2
\bbeta_{\pt}^\top
(\bSigma_{\pt}+\lamonmu_{\pt}\bI_d)^{-1}
\bW_{\ft}
(\bSigma_{\pt}+\lamonmu_{\pt}\bI_d)^{-1}
\bbeta_{\pt}
-
c_{\ft}c_{\pt}\lamonmu_{\pt}^2
\Tr(\bW_{\ft}\bW_{\pt})
\bbeta_{\pt}^\top\bW_{\pt}\bbeta_{\pt}
\Big|
\\
&\hspace{16em}\leq
C_{D,K}\varepsilon_{\pt}
\left(
\mathcal E_{\pt}
+
c_{\ft}c_{\pt}\lamonmu_{\pt}^2
\Tr(\bW_{\ft}\bW_{\pt})
\bbeta_{\pt}^\top\bW_{\pt}\bbeta_{\pt}
\right).
\end{aligned}
\label{eq:bias-source-quadratic}
\end{equation}

We now identify the deterministic terms produced by these two
calculations. Combining them with the deterministic first term in the
expansion gives
$$
\begin{aligned}
&c_{\ft}
(\bbeta_{\pt}-\bbeta_{\ft})^\top
\bW_{\ft}
(\bbeta_{\pt}-\bbeta_{\ft})
-2c_{\ft}\lamonmu_{\pt}
(\bbeta_{\pt}-\bbeta_{\ft})^\top
\bW_{\ft}
(\bSigma_{\pt}+\lamonmu_{\pt}\bI_d)^{-1}
\bbeta_{\pt}
\\
&\quad
+c_{\ft}\lamonmu_{\pt}^2
\bbeta_{\pt}^\top
(\bSigma_{\pt}+\lamonmu_{\pt}\bI_d)^{-1}
\bW_{\ft}
(\bSigma_{\pt}+\lamonmu_{\pt}\bI_d)^{-1}
\bbeta_{\pt}
+c_{\ft}c_{\pt}\lamonmu_{\pt}^2
\Tr(\bW_{\ft}\bW_{\pt})
\bbeta_{\pt}^\top\bW_{\pt}\bbeta_{\pt}=\overline{\cB}.
\end{aligned}
$$
The first three terms form the quadratic
$$
\begin{aligned}
c_{\ft}
\left(
\bbeta_{\pt}-\bbeta_{\ft}
-\lamonmu_{\pt}
(\bSigma_{\pt}+\lamonmu_{\pt}\bI_d)^{-1}
\bbeta_{\pt}
\right)^\top
\bW_{\ft}
\left(
\bbeta_{\pt}-\bbeta_{\ft}
-\lamonmu_{\pt}
(\bSigma_{\pt}+\lamonmu_{\pt}\bI_d)^{-1}
\bbeta_{\pt}
\right).
\end{aligned}
$$
Using the resolvent identity
$$
\bI_d-\lamonmu_{\pt}
(\bSigma_{\pt}+\lamonmu_{\pt}\bI_d)^{-1}
=
\bSigma_{\pt}
(\bSigma_{\pt}+\lamonmu_{\pt}\bI_d)^{-1},
$$
this equals
$$
\begin{aligned}
c_{\ft}
\left[
\bSigma_{\pt}
(\bSigma_{\pt}+\lamonmu_{\pt}\bI_d)^{-1}
\bbeta_{\pt}
-\bbeta_{\ft}
\right]^\top
\bW_{\ft}
\left[
\bSigma_{\pt}
(\bSigma_{\pt}+\lamonmu_{\pt}\bI_d)^{-1}
\bbeta_{\pt}
-\bbeta_{\ft}
\right]=\overline{\cB}_1.
\end{aligned}
$$
Together with the remaining trace term ($c_{\ft}c_{\pt}\lamonmu_{\pt}^2
\Tr(\bW_{\ft}\bW_{\pt})
\bbeta_{\pt}^\top\bW_{\pt}\bbeta_{\pt}=\overline{\cB}_2$), this is exactly
$\overline{\cB}$ in \eqref{eq:bias-deteq}.

It remains to control the errors. By the definition of
$\mathcal E_{\pt}$,
$$
c_{\ft}\lamonmu_{\pt}^2
\bbeta_{\pt}^\top
(\bSigma_{\pt}+\lamonmu_{\pt}\bI_d)^{-1}
\bW_{\ft}
(\bSigma_{\pt}+\lamonmu_{\pt}\bI_d)^{-1}
\bbeta_{\pt}
\leq
\mathcal E_{\pt}.
$$
Moreover,
$$
c_{\ft}
(\bbeta_{\pt}-\bbeta_{\ft})^\top
\bW_{\ft}
(\bbeta_{\pt}-\bbeta_{\ft})
\leq
2\overline{\cB}+2\mathcal E_{\pt}.
$$
Hence Young's inequality gives
$$
\sqrt{
\mathcal E_{\pt}\,
c_{\ft}
(\bbeta_{\pt}-\bbeta_{\ft})^\top
\bW_{\ft}
(\bbeta_{\pt}-\bbeta_{\ft})
}
\leq
C(\overline{\cB}+\mathcal E_{\pt}).
$$
Since the trace correction in
\eqref{eq:bias-source-quadratic} is a nonnegative term in
$\overline{\cB}$, combining
\eqref{eq:bias-source-cross} and
\eqref{eq:bias-source-quadratic} gives
$$
\begin{aligned}
\Big|
&c_{\ft}
\left(
\bbeta_{\pt}-\bbeta_{\ft}
-\lambda_{\pt}\bH_{\pt}\bbeta_{\pt}
\right)^\top
\bW_{\ft}
\left(
\bbeta_{\pt}-\bbeta_{\ft}
-\lambda_{\pt}\bH_{\pt}\bbeta_{\pt}
\right)
-\overline{\cB}
\Big|
\leq
C_{D,K}\varepsilon_{\pt}
(\overline{\cB}+\mathcal E_{\pt}),
\end{aligned}
$$
Applying both source bounds with exponent $D+1$ makes their
joint failure probability at most $n_{\pt}^{-D}$, after adjusting
$C_{D,K}$. On this source event and the target event in
\eqref{eq:bias-target-step},
$$
\left|\cB-\overline{\cB}\right|
\leq
C_{D,K}
\left[
(\varepsilon_{\pt}+\varepsilon_{\ft})\overline{\cB}
+
\varepsilon_{\pt}\mathcal E_{\pt}
\right],
$$
where $\varepsilon_{\ft}\leq1$ is used to absorb the product of the
target error with the source error.
Finally, the target bound holds conditionally on $\bX_{\pt}$ with
failure probability at most $n_{\ft}^{-D}$, so averaging over
$\bX_{\pt}$ gives the same unconditional failure probability. A union
bound with the source event gives probability at least
$1-n_{\pt}^{-D}-n_{\ft}^{-D}$.
\end{proof}

\begin{proof}[Proof of \Cref{thm:total-risk}]
\label{app:total-risk-proof}
By \Cref{lemm:exact-risk} and the definition of
$\overline{\cR}$,
\[
\cR=\cB+\cV_{\pt}+\cV_{\ft},
\qquad
\overline{\cR}
=
\overline{\cB}
+\overline{\cV}_{\pt}
+\overline{\cV}_{\ft}.
\]
Define
$$
\varepsilon_{\pt}
:=
\frac{\rho_{\pt}^{6}\log^{5/2}(n_{\pt})}{\sqrt{n_{\pt}}},
\qquad
\varepsilon_{\ft}
:=
\frac{\rho_{\ft}^{6}\log^{5/2}(n_{\ft})}{\sqrt{n_{\ft}}}.
$$
Since the above quantities dominate all error factors appearing in
\Cref{prop:bias,prop:pretraining-variance,prop:finetuning-variance}, by combining them and taking a union bound
we obtain, with probability at least
$1-C_D(n_{\pt}^{-D}+n_{\ft}^{-D})$,
\[
\begin{aligned}
|\cR-\overline{\cR}|
&\leq
|\cB-\overline{\cB}|
+
|\cV_{\pt}-\overline{\cV}_{\pt}|
+
|\cV_{\ft}-\overline{\cV}_{\ft}|
\leq
C_{D,K}
\left[
(\varepsilon_{\pt}+\varepsilon_{\ft})
\overline{\cR}
+
\varepsilon_{\pt}\mathcal E_{\pt}
\right].
\end{aligned}
\]
Since $\varepsilon_{\pt},\varepsilon_{\ft}\leq1$,
this also implies
\[
\cR
\leq
C_{D,K}
\left(
\overline{\cR}
+
\varepsilon_{\pt}\mathcal E_{\pt}
\right).
\]
\end{proof}

\section{Proof of Theorem \ref{thm:ridge-transfer-scaling}}
\label{app:scaling-ridge}
Throughout this section, we use \(f_n\asymp g_n\) for two-sided constant-order equivalence,
\(f_n=\widetilde\Theta(g_n)\) for equivalence up to fixed powers of \(\log n\),
and \(x_+:=\max\{x,0\}\). 
The proof has three main steps. First, we determine the scaling of the effective source and target regularization parameters. Next, we derive the scaling of the spectral quantities appearing in the bias and variance terms. Finally, we combine these results to obtain the scaling of the total risk. The required power-law sum estimates are given at the end of the section.

We first derive the scaling of the effective regularizers. For eigenvalues $\xi_k\asymp k^{-\alpha}$, define
\begin{equation}
 S_{1,\alpha}(\lambda):=\sum_{k\ge1}\frac{\xi_k}{\xi_k+\lambda},
 \qquad
 S_{2,\alpha}(\lambda):=\sum_{k\ge1}\frac{\xi_k^2}{(\xi_k+\lambda)^2}.
\end{equation}

\begin{lemma}[Effective-dimension traces]
\label{lemm:effective-dimension-traces}
Let $\alpha>1$. As $\lambda\to0^+$,
\begin{equation}
 S_{1,\alpha}(\lambda)
 \asymp
 S_{2,\alpha}(\lambda)
 \asymp
 S_{1,\alpha}(\lambda)-S_{2,\alpha}(\lambda)
 \asymp
 \lambda^{-1/\alpha}.
\label{eq:effective-dimension-traces}
\end{equation}
\end{lemma}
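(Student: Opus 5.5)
We prove \Cref{lemm:effective-dimension-traces} by comparing each sum with an integral after splitting the index range at the cutoff $k_\lambda:=\lceil\lambda^{-1/\alpha}\rceil$. This is the scale at which $\xi_k\asymp\lambda$. Since $\xi_k\asymp k^{-\alpha}$, there are constants $0<c\le C$ with $ck^{-\alpha}\le\xi_k\le Ck^{-\alpha}$ for all $k$. All three summands are monotone in $\xi_k$: the map $x\mapsto x/(x+\lambda)$ is increasing, so is its square, and so is their difference $x\lambda/(x+\lambda)^2$ on $x\le\lambda$, while it is decreasing on $x\ge\lambda$. It therefore suffices to bound each sum using the two-sided estimates $ck^{-\alpha}\le \xi_k\le Ck^{-\alpha}$, with care taken for the difference term below.

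For the upper bounds, the terms with $k\le k_\lambda$ satisfy $\xi_k/(\xi_k+\lambda)\le1$, which contributes at most $k_\lambda\lesssim\lambda^{-1/\alpha}$. For $k>k_\lambda$ we use $\xi_k/(\xi_k+\lambda)\le \xi_k/\lambda\le Ck^{-\alpha}/\lambda$. Then
$$\sum_{k>k_\lambda}k^{-\alpha}\lesssim k_\lambda^{1-\alpha}\asymp\lambda^{(\alpha-1)/\alpha},$$
where we use $\alpha>1$. Dividing by $\lambda$ gives $\lambda^{-1/\alpha}$. Since $0\le S_{1,\alpha}-S_{2,\alpha}\le S_{2,\alpha}\le S_{1,\alpha}$ (using $x^2\le x$ for $x\in[0,1]$), all three quantities are $\lesssim\lambda^{-1/\alpha}$.

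For the lower bounds it suffices to treat the smallest quantity. Note that $S_{1,\alpha}-S_{2,\alpha}=\sum_k \xi_k\lambda/(\xi_k+\lambda)^2$, and $S_{2,\alpha}\ge S_{1,\alpha}-S_{2,\alpha}$ only when the bulk of the terms has $\xi_k\gtrsim\lambda$. To avoid this case distinction, I would lower bound $S_{1,\alpha}-S_{2,\alpha}$ directly and also lower bound $S_{2,\alpha}$ directly.
\begin{itemize}
\item \emph{Lower bound for $S_{2,\alpha}$.} Restrict to $k\le c'k_\lambda$ with $c'$ small enough that $\xi_k\ge\lambda$. Each such term is then $\ge1/4$, and there are $\gtrsim\lambda^{-1/\alpha}$ of them once $\lambda$ is small.
\item \emph{Lower bound for $S_{1,\alpha}-S_{2,\alpha}$.} Restrict to the window $k\in[A k_\lambda,2Ak_\lambda]$ with $A$ large enough that $\xi_k\le\lambda$ there. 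In this window $\xi_k\asymp\lambda$, up to constants depending on $A$, $c$ and $C$, so each summand $\xi_k\lambda/(\xi_k+\lambda)^2\ge \xi_k/(4\lambda)\gtrsim1$. The window has $\asymp\lambda^{-1/\alpha}$ indices, which gives the bound.
\end{itemize}
Combining this with $S_{1,\alpha}\ge S_{2,\alpha}$ yields all three two-sided estimates.

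The only delicate point is choosing the windows so that the constants $c,C$ in $\xi_k\asymp k^{-\alpha}$ do not spoil the comparison $\xi_k$ versus $\lambda$. Taking $c'=(c/2)^{1/\alpha}$ and $A=(2C)^{1/\alpha}$ handles this. We also need $\lambda$ small enough that $k_\lambda\ge1$, so that the windows contain $\asymp\lambda^{-1/\alpha}$ integers. This is the sense of the limit $\lambda\to0^+$ in the statement.
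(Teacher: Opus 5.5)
Your argument is correct and is essentially the paper's: both split at the cutoff $k\approx\lambda^{-1/\alpha}$ and compare the head and tail with power sums. The paper first reduces termwise to $(1+\lambda k^\alpha)^{-\gamma}$ and to $\lambda k^\alpha/(1+\lambda k^\alpha)^2$. You instead bound only $S_{1,\alpha}$ from above and get the lower bounds from windows where each summand is of order one.

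One slip to fix: the chain $S_{1,\alpha}-S_{2,\alpha}\le S_{2,\alpha}$ does not follow from $x^2\le x$ and is false in general. Termwise it needs $\xi_k\ge\lambda$, and for $\xi_k=k^{-\alpha}$ the ratio $(S_{1,\alpha}-S_{2,\alpha})/S_{2,\alpha}\to1/(\alpha-1)>1$ when $\alpha<2$. This does not affect your conclusion, since you only need $S_{1,\alpha}-S_{2,\alpha}\le S_{1,\alpha}$, which is immediate from $S_{2,\alpha}\ge0$.
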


\begin{proof}
Since $\xi_k\asymp k^{-\alpha}$,
$$
\frac{\xi_k}{\xi_k+\lambda}
\asymp
\frac{1}{1+\lambda k^\alpha},
\qquad
\frac{\xi_k^2}{(\xi_k+\lambda)^2}
\asymp
\frac{1}{(1+\lambda k^\alpha)^2}.
$$
Therefore, \Cref{lem:one-scale-polynomial-sums} gives
$$
S_{1,\alpha}(\lambda)
\asymp
S_{2,\alpha}(\lambda)
\asymp
\lambda^{-1/\alpha}.
$$

It remains to control their difference. Let $K:=\lambda^{-1/\alpha}$, so that $\lambda K^\alpha=1$, and split the sum at $K$. Then,
$$
\begin{aligned}
S_{1,\alpha}(\lambda)-S_{2,\alpha}(\lambda)
&=
\sum_{k\ge1}
\frac{\lambda\xi_k}{(\xi_k+\lambda)^2}\\
&\asymp
\sum_{k\ge1}
\frac{\lambda k^\alpha}
{(1+\lambda k^\alpha)^2}\\
&=
\sum_{k\le K}
\frac{\lambda k^\alpha}
{(1+\lambda k^\alpha)^2}
+
\sum_{k>K}
\frac{\lambda k^\alpha}
{(1+\lambda k^\alpha)^2}.
\end{aligned}
$$
For $k\le K$,
we have $\lambda k^\alpha\le1$, and hence
$$
\frac{\lambda k^\alpha}
{(1+\lambda k^\alpha)^2}
\asymp
\lambda k^\alpha.
$$
Thus
$$
\sum_{k\le K}
\frac{\lambda k^\alpha}
{(1+\lambda k^\alpha)^2}
\asymp
\lambda\sum_{k\le K}k^\alpha
\asymp
\lambda K^{\alpha+1}
=
\lambda^{-1/\alpha}.
$$

For $k>K$, we have $\lambda k^\alpha>1$, so
$$
\frac{\lambda k^\alpha}
{(1+\lambda k^\alpha)^2}
\asymp
\frac{1}{\lambda k^\alpha}.
$$
Since $\alpha>1$,
$$
\sum_{k>K}
\frac{\lambda k^\alpha}
{(1+\lambda k^\alpha)^2}
\asymp
\lambda^{-1}\sum_{k>K}k^{-\alpha}
\asymp
\lambda^{-1}K^{1-\alpha}
=
\lambda^{-1/\alpha}.
$$
Both parts are therefore of order $\lambda^{-1/\alpha}$, which proves
\eqref{eq:effective-dimension-traces}.
\end{proof}

We next use these trace estimates to determine the scaling of the effective regularization parameter defined by the self-consistency equation.
\begin{proposition}[Generic effective-ridge scaling]
\label{prop:generic-effective-ridge}
Let \(\alpha>1\), \(N\to\infty\), \(q>-1\),
\(\lambda_N\asymp N^{-q}\), and let
\(\mu_N>0\) solve
\begin{equation}
 \mu_N=\frac{N}{1+\Tr\!\left(\bSigma(\mu_N\bSigma+\lambda_NI)^{-1}\right)},
\label{eq:generic-self-consistency-original}
\end{equation}
where \(\xi_k(\bSigma)\asymp k^{-\alpha}\). Define
\(\lamonmu_{N}:=\lambda_N/\mu_N\). Then the solution is unique and
\begin{equation}
 \mu_N+S_{1,\alpha}(\lamonmu_{N})=N.
\label{eq:generic-self-consistency-reduced}
\end{equation}
Furthermore,
\begin{equation}
 \lamonmu_{N}\asymp N^{-((1+q)\wedge\alpha)},
\label{eq:generic-lambda-star-unified}
\end{equation}
and
\begin{equation}
 \mu_N\asymp
 \begin{cases}
  N, & q\le\alpha-1,\\
  N^{\alpha-q}, & q>\alpha-1.
 \end{cases}
\label{eq:generic-mu-unified}
\end{equation}
More precisely,
$$
N-\mu_N\asymp
\begin{cases}
N^{(1+q)/\alpha}, & q<\alpha-1,\\
N, & q\ge\alpha-1.
\end{cases}
$$
\end{proposition}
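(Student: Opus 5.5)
First I rewrite the fixed-point equation \eqref{eq:generic-self-consistency-original} in terms of $\lamonmu_N=\lambda_N/\mu_N$. Since $\bSigma(\mu_N\bSigma+\lambda_N\bI)^{-1}=\mu_N^{-1}\bSigma(\bSigma+\lamonmu_N\bI)^{-1}$, the trace equals $\mu_N^{-1}S_{1,\alpha}(\lamonmu_N)$. Multiplying \eqref{eq:generic-self-consistency-original} by its denominator then gives $\mu_N+S_{1,\alpha}(\lamonmu_N)=N$, which is \eqref{eq:generic-self-consistency-reduced}. Equivalently, $N-\lambda_N/\lamonmu_N=S_{1,\alpha}(\lamonmu_N)$, which matches \eqref{eq:effective-ridge-main}.

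\textbf{Uniqueness.} Consider $F(\tau):=\lambda_N/\tau+S_{1,\alpha}(\tau)-N$ on $(0,\infty)$. Both $\lambda_N/\tau$ and $S_{1,\alpha}(\tau)$ are strictly decreasing, and $S_{1,\alpha}$ is finite for $\tau>0$ because $\alpha>1$. Moreover $F\to+\infty$ as $\tau\to0^+$ and $F\to-N<0$ as $\tau\to\infty$. Hence $F$ has exactly one root, and $\mu_N=\lambda_N/\tau>0$ is uniquely determined.

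\textbf{Scaling.} The reduced equation has two nonnegative terms, $\mu_N=\lambda_N/\lamonmu_N$ and $S_{1,\alpha}(\lamonmu_N)$. By \Cref{lemm:effective-dimension-traces}, $S_{1,\alpha}(\tau)\asymp\tau^{-1/\alpha}$ once $\tau$ is small; in general $S_{1,\alpha}(\tau)\le C\tau^{-1/\alpha}$ for small $\tau$ and $S_{1,\alpha}(\tau)\asymp\tau^{-1}$ for large $\tau$. Since the two terms sum to $N$, at least one of them is $\ge N/2$. I split into cases.

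\emph{Case $q<\alpha-1$.} Suppose $\mu_N\ge N/2$. Then $\mu_N\asymp N$ and $\lamonmu_N\asymp\lambda_N/N\asymp N^{-(1+q)}$. The remaining term is $S_{1,\alpha}\asymp N^{(1+q)/\alpha}=o(N)$, which is consistent. Now suppose instead $S_{1,\alpha}(\lamonmu_N)\ge N/2$. Then $\lamonmu_N\lesssim N^{-\alpha}$, so $\mu_N=\lambda_N/\lamonmu_N\gtrsim N^{\alpha-q}\gg N$, a contradiction. So $\mu_N\asymp N$ and $N-\mu_N=S_{1,\alpha}\asymp N^{(1+q)/\alpha}$.

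\emph{Case $q>\alpha-1$.} Suppose $\mu_N\ge N/2$. Then $\lamonmu_N\asymp N^{-(1+q)}$, so $S_{1,\alpha}\asymp N^{(1+q)/\alpha}\gg N$, a contradiction. So $S_{1,\alpha}\ge N/2$, and $S_{1,\alpha}\le N$ gives $\lamonmu_N\asymp N^{-\alpha}$. Then $\mu_N\asymp N^{\alpha-q}=o(N)$ and $N-\mu_N\asymp N$.

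\emph{Boundary case $q=\alpha-1$.} Both candidate scalings give $\lamonmu_N\asymp N^{-\alpha}$. I still need to show that both terms are $\asymp N$, i.e.\ that neither is $o(N)$. Write $\lamonmu_N=tN^{-\alpha}$. Then $\mu_N\asymp t^{-1}N$ and $S_{1,\alpha}\asymp t^{-1/\alpha}N$. From $\mu_N\le N$ we get $t\gtrsim1$, and from $S_{1,\alpha}\le N$ we get $t\gtrsim1$. If $t\to\infty$, both terms would be $o(N)$, contradicting that they sum to $N$. So $t\asymp1$, both terms are $\asymp N$, and in particular $N-\mu_N\asymp N$.

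Combining the cases gives $\lamonmu_N\asymp N^{-((1+q)\wedge\alpha)}$, which is \eqref{eq:generic-lambda-star-unified}, and also \eqref{eq:generic-mu-unified}.

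\textbf{Main obstacle.} The delicate step is the boundary case and, more generally, making the two-sided bounds on $S_{1,\alpha}$ uniform in $\tau$. I would handle this with the explicit form $S_{1,\alpha}(\tau)\asymp\sum_k(1+\tau k^\alpha)^{-1}$, which is monotone and continuous. The assumption $q>-1$ is used to guarantee $\lamonmu_N\to0$, so that \Cref{lemm:effective-dimension-traces} applies.
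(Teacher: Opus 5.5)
Your proposal is correct and follows essentially the same route as the paper: the same reduction to $\mu_N+S_{1,\alpha}(\lamonmu_N)=N$, uniqueness by monotonicity (you parametrize by $\lamonmu$, the paper by $\mu$), and a case split on $q$ versus $\alpha-1$ driven by $S_{1,\alpha}(\lamonmu)\asymp\lamonmu^{-1/\alpha}$. The difference is only organizational: the paper first shows $\lamonmu_N\to0$ and then reads all three regimes off the single normalized balance $1-\mu_N/N\asymp N^{(1+q-\alpha)/\alpha}(\mu_N/N)^{1/\alpha}$, whereas you determine which of the two nonnegative terms is at least $N/2$ and rule out the other by contradiction in each regime.
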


\begin{proof}
Factoring $\mu_N$ in \eqref{eq:generic-self-consistency-original} gives \eqref{eq:generic-self-consistency-reduced}. The map $\mu\mapsto\mu+S_{1,\alpha}(\lambda_N/\mu)$ is continuous and strictly increasing from $0$ to $\infty$, so the solution is unique.

We first show $\lamonmu_N\to0$. Otherwise, along some subsequence, $\lamonmu_N$ is bounded away from zero, so $S_{1,\alpha}(\lamonmu_N)=O(1)$ and $\mu_N=N-O(1)$. Hence $\lamonmu_N=\lambda_N/\mu_N\asymp N^{-q-1}\to0$, since $q>-1$, a contradiction.

By \Cref{lemm:effective-dimension-traces},
$$
N-\mu_N
=
S_{1,\alpha}(\lamonmu_N)
\asymp
\lamonmu_N^{-1/\alpha}
=
\left(\frac{\mu_N}{\lambda_N}\right)^{1/\alpha}.
$$
Using $\lambda_N\asymp N^{-q}$ and dividing by $N$ gives
\begin{equation}
1-\frac{\mu_N}{N}
\asymp
N^{(1+q-\alpha)/\alpha}
\left(\frac{\mu_N}{N}\right)^{1/\alpha}.
\label{eq:generic-normalized-balance}
\end{equation}

If $q<\alpha-1$, then $N^{(1+q-\alpha)/\alpha}\to0$, so \eqref{eq:generic-normalized-balance} gives $\mu_N/N\to1$. Thus $\mu_N\asymp N$, and
$$
N-\mu_N
\asymp
\left(\frac{N}{N^{-q}}\right)^{1/\alpha}
=
N^{(1+q)/\alpha},
\qquad
\lamonmu_N
=
\frac{\lambda_N}{\mu_N}
\asymp
N^{-(1+q)}.
$$

If $q=\alpha-1$, then \eqref{eq:generic-normalized-balance} becomes
$$
1-\frac{\mu_N}{N}
\asymp
\left(\frac{\mu_N}{N}\right)^{1/\alpha},
$$
so $\mu_N/N\asymp1$ and $1-\mu_N/N\asymp1$. Hence $\mu_N\asymp N$, $N-\mu_N\asymp N$, and $\lamonmu_N\asymp N^{-\alpha}$.

If $q>\alpha-1$, then $N^{(1+q-\alpha)/\alpha}\to\infty$, so \eqref{eq:generic-normalized-balance} gives $\mu_N/N\to0$. Hence $1-\mu_N/N\asymp1$ and
$$
1
\asymp
N^{(1+q-\alpha)/\alpha}
\left(\frac{\mu_N}{N}\right)^{1/\alpha},
\qquad
\frac{\mu_N}{N}
\asymp
N^{-(1+q-\alpha)}.
$$
Therefore $\mu_N\asymp N^{\alpha-q}$ and $\lamonmu_N=\lambda_N/\mu_N\asymp N^{-\alpha}$.

Combining the three regimes gives
$$
\lamonmu_N\asymp N^{-((1+q)\wedge\alpha)},
\qquad
\mu_N\asymp
\begin{cases}
N, & q\le\alpha-1,\\
N^{\alpha-q}, & q>\alpha-1,
\end{cases}
$$
which proves \eqref{eq:generic-lambda-star-unified} and \eqref{eq:generic-mu-unified}.
\end{proof}

We now specialize \Cref{prop:generic-effective-ridge} to the source and target problems. This gives the effective regularization scales that will be used throughout the remainder of the proof.

We apply \Cref{prop:generic-effective-ridge} with
\((N,\alpha,q)=(n,b,\gamma_{\lambda_{\ft}})\) for the source object:
\begin{equation}
\lamonmu_{\ft}
\asymp
n^{-\lamonmupowerft},
\qquad
\lamonmupowerft
:=
(1+\gamma_{\lambda_{\ft}})\wedge b.
\label{eq:lambda-ft-star-corrected}
\end{equation}
Moreover,
\begin{equation}
\mu_{\ft}
\asymp
\begin{cases}
n, & \gamma_{\lambda_{\ft}}\le b-1,\\
n^{b-\gamma_{\lambda_{\ft}}}, & \gamma_{\lambda_{\ft}}>b-1.
\end{cases}
\label{eq:mu-ft-star-corrected}
\end{equation}

For the target objects, we use \(N=n_{\pt}\asymp n^{\gamma_{n}}\), \(\alpha=a\), and
\(q=\gamma_{\lambda_{\pt}}/\gamma_{n}\):
\begin{equation}
\lamonmu_{\pt}
\asymp
n^{-\lamonmupowerpt},
\qquad
\lamonmupowerpt
:=
(\gamma_{\lambda_{\pt}}+\gamma_n)\wedge(a\gamma_n).
\label{eq:lambda-pt-star-corrected}
\end{equation}
Moreover,
\begin{equation}
\mu_{\pt}
\asymp
\begin{cases}
n^{\gamma_n},
& \gamma_{\lambda_{\pt}}\le(a-1)\gamma_n,\\
n^{a\gamma_n-\gamma_{\lambda_{\pt}}},
& \gamma_{\lambda_{\pt}}>(a-1)\gamma_n.
\end{cases}
\label{eq:mu-pt-star-corrected}
\end{equation}

At either threshold, both \(\mu\) and the corresponding gap
\(N-\mu\) are of order \(N\).

Now for the next step, we define the corresponding effective cutoffs
\begin{equation}
K_{\pt}:=\lamonmu_{\pt}^{-1/a},
\qquad
K_{\ft}:=\lamonmu_{\ft}^{-1/b}.
\label{eq:effective-spectral-cutoffs}
\end{equation}

\begin{proposition}[Uniform denominator scaling]
\label{prop:uniform-denominator-scaling}
In the setting of \Cref{prop:generic-effective-ridge}, we have
\begin{equation}
 N-S_{2,\alpha}(\lamonmu_{N})\asymp N.
\label{eq:uniform-denominator}
\end{equation}
\end{proposition}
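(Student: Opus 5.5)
The upper bound $N-S_{2,\alpha}(\lamonmu_N)\le N$ is immediate, since $S_{2,\alpha}\ge0$. For the lower bound, the plan is to write
\[
N-S_{2,\alpha}(\lamonmu_N)=\bigl(N-S_{1,\alpha}(\lamonmu_N)\bigr)+\bigl(S_{1,\alpha}(\lamonmu_N)-S_{2,\alpha}(\lamonmu_N)\bigr).
\]
Both summands are nonnegative, because $\xi_k/(\xi_k+\lambda)\ge \xi_k^2/(\xi_k+\lambda)^2$ termwise. By \eqref{eq:generic-self-consistency-reduced}, the first summand equals $\mu_N$. The lower bound therefore reduces to showing $\mu_N+(S_{1,\alpha}-S_{2,\alpha})(\lamonmu_N)\gtrsim N$. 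We split into the same regimes as in \Cref{prop:generic-effective-ridge}.

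In the regime $q\le\alpha-1$, \eqref{eq:generic-mu-unified} gives $\mu_N\asymp N$. The first summand alone then suffices, and $N-S_{2,\alpha}\ge\mu_N\gtrsim N$.

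In the regime $q>\alpha-1$, $\mu_N\asymp N^{\alpha-q}=o(N)$, so the first summand is too small and we must use the second. By the proof of \Cref{prop:generic-effective-ridge}, $\lamonmu_N\to0$, and in this regime $\lamonmu_N\asymp N^{-\alpha}$. \Cref{lemm:effective-dimension-traces} then gives
\[
S_{1,\alpha}(\lamonmu_N)-S_{2,\alpha}(\lamonmu_N)\asymp\lamonmu_N^{-1/\alpha}\asymp N.
\]
Hence $N-S_{2,\alpha}\gtrsim N$ again. The same argument also covers the threshold $q=\alpha-1$, where both summands are of order $N$.

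The main point needing care is that \Cref{lemm:effective-dimension-traces} is an asymptotic statement as $\lambda\to0^+$. Its implicit constants must therefore be applied along the sequence $\lamonmu_N\to0$, and the lower constant must be uniform, which holds for $N$ large enough. One can also give a regime-free version. Since $S_1-S_2\ge c\,S_1$ by \Cref{lemm:effective-dimension-traces}, we get
\[
N-S_2=\mu_N+(S_1-S_2)\ge \mu_N+cS_1\ge \min(1,c)(\mu_N+S_1)=\min(1,c)N.
\]
This proves \eqref{eq:uniform-denominator} uniformly in all regimes. I would present this one-line form as the proof, with the case discussion as a consistency check.
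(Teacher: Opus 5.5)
Your proof is correct and follows the paper's argument. You use the same decomposition $N-S_{2,\alpha}=\mu_N+(S_{1,\alpha}-S_{2,\alpha})$ from \eqref{eq:generic-self-consistency-reduced}, with $\mu_N\asymp N$ when $q\le\alpha-1$ and $S_{1,\alpha}-S_{2,\alpha}\asymp N$ from \Cref{lemm:effective-dimension-traces} when $q>\alpha-1$; your regime-free bound $N-S_{2,\alpha}\ge\min(1,c)(\mu_N+S_{1,\alpha})=\min(1,c)N$ is a mild streamlining that needs only $\lamonmu_N\to0$, not the regime-by-regime asymptotics of $\mu_N$.
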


\begin{proof}
By \eqref{eq:generic-self-consistency-reduced},
$$
 N-S_{2,\alpha}(\lamonmu_{N})
 =\mu_N+S_{1,\alpha}(\lamonmu_{N})-S_{2,\alpha}(\lamonmu_{N}).
$$
If $q\le\alpha-1$, then $\mu_N\asymp N$. If $q>\alpha-1$, then
$S_{1,\alpha}=N-\mu_N\asymp N$, and
\Cref{lemm:effective-dimension-traces} gives
$S_{1,\alpha}-S_{2,\alpha}\asymp S_{1,\alpha}\asymp N$.
The upper bound is immediate because $S_{2,\alpha}\ge0$.
\end{proof}

Recall the prefactors $c_{\pt}$ and $c_{\ft}$ in
\eqref{eq:c-def-main}. By \Cref{prop:uniform-denominator-scaling},
$$
n_\pt-
\Tr\left[
\bSigma_\pt^2
(\bSigma_\pt+\lamonmu_\pt I)^{-2}
\right]
\asymp n_\pt,
\qquad n_\ft-
\Tr\left[
\bSigma_\ft^2
(\bSigma_\ft+\lamonmu_\ft I)^{-2}
\right]
\asymp n_\ft.
$$
Substituting this into the definitions of $c_{\pt}$ and $c_{\ft}$ gives
\begin{equation}
\label{eq:source-target-prefactors}
c_{\pt}\asymp n_{\pt}^{-1}\asymp n^{-\gamma_n},
\qquad
c_{\ft}\asymp\lamonmu_{\ft}^2
\asymp n^{-2\lamonmupowerft}.
\end{equation}

Now, with the effective regularization parameters and prefactors established, we determine the scaling of the two variance terms.

For $\cV_\ft,$ recall that \(\sigma_{\pt}^2\asymp\sigma_{\ft}^2\asymp1\). \Cref{lemm:effective-dimension-traces}, applied with
$\alpha=b$, gives
$$
 \Tr\!\left[
 \bSigma_{\ft}^2
 (\bSigma_{\ft}+\lamonmu_{\ft}I)^{-2}
 \right]
 =
 S_{2,b}(\lamonmu_{\ft})
 \asymp
 \lamonmu_{\ft}^{-1/b}
 \asymp
 n^{\lamonmupowerft/b}.
$$
\Cref{prop:uniform-denominator-scaling}, applied with
$N=n$ and $\alpha=b$, gives $n-S_{2,b}(\lamonmu_{\ft})\asymp n$. Hence the fine-tuning variance satisfies
$$
 \overline{\cV}_{\ft}
 \asymp
 \frac{S_{2,b}(\lamonmu_{\ft})}
 {n-S_{2,b}(\lamonmu_{\ft})}
 \asymp
 n^{-1+\lamonmupowerft/b}.
$$
Finally, using $\lamonmupowerft=(1+\gamma_{\lambda_{\ft}})\wedge b$ gives
\begin{equation}
 \Gamma_{\cV_{\ft}}
 =
 1-\frac{\lamonmupowerft}{b}
 =
 1-\frac{(1+\gamma_{\lambda_{\ft}})\wedge b}{b}.
\label{eq:gamma-Vft-corrected}
\end{equation}

The fine-tuning variance depends only on the target spectrum. The pretraining variance and the second bias term also involve the interaction between the source and target spectra, through the cross-trace $\Tr(\bW_{\pt}\bW_{\ft})$. We therefore determine its scaling next.

Set $u:=\frac{\lamonmupowerpt}{a}$ and $v:=\frac{\lamonmupowerft}{b}$. Recalling the matrices $\bW_{\pt}$ and $\bW_{\ft}$ from \eqref{eq:W-def-main} and using
$(\bSigma_{\pt})_{k,k}\asymp k^{-a}$, $(\bSigma_{\ft})_{k,k}\asymp k^{-b}$, we have
$$
\Tr(\bW_{\pt}\bW_{\ft})
\asymp
\sum_{k\ge1}
\frac{k^{-a-b}}
{(k^{-a}+\lamonmu_{\pt})^2
 (k^{-b}+\lamonmu_{\ft})^2}.
$$
Applying  \Cref{lem:trace_approx} with
$\mu=\lamonmu_{\pt}$, $\nu=\lamonmu_{\ft}$,
$\gamma_1=\gamma_2=2$,
$s=(a+b)/2$, and
$\delta_a=\delta_b=1/2$. Here note that
$s+a\delta_a+b\delta_b=a+b$, and the exponent in
\Cref{lem:trace_approx} is
$$
\eta=1-(a+b)+2a+2b=a+b+1.
$$

Since $\lamonmu_{\pt}\asymp n^{-au}$ and
$\lamonmu_{\ft}\asymp n^{-bv}$, we have
$\lamonmu_{\pt}^{1/a}\asymp n^{-u}$ and
$\lamonmu_{\ft}^{1/b}\asymp n^{-v}$. So, the relative size of the two effective scales is determined by the ordering of $u$ and $v$.

Suppose first that $u\le v$. If $\eta\le2a$, equivalently
$b-a+1\le0$, case~(1) of \Cref{lem:trace_approx} gives
$$
\Tr(\bW_{\pt}\bW_{\ft})
=
\widetilde\Theta\!\left(\lamonmu_{\pt}^{-\eta/a}\right)
=
\widetilde\Theta\!\left(n^{(a+b+1)u}\right).
$$
If instead $\eta>2a$, equivalently $b-a+1>0$, then
$$
\Tr(\bW_{\pt}\bW_{\ft})
=
\Theta\!\left(
\lamonmu_{\pt}^{-2}
\lamonmu_{\ft}^{-(\eta-2a)/b}
\right)
=
\Theta\!\left(
n^{(a+b+1)u+(b-a+1)(v-u)}
\right),
$$
where we used $\eta-2a=b-a+1$. Therefore, for $u\le v$,
$$
-\Gamma_T
=
(a+b+1)u+(b-a+1)_+(v-u).
$$

The case $v\le u$ follows symmetrically from case~(2) of
\Cref{lem:trace_approx}. Since $\eta-2b=a-b+1$,
$$
-\Gamma_T
=
(a+b+1)v+(a-b+1)_+(u-v).
$$
Combining the two cases gives
\begin{equation}
\Tr(\bW_{\pt}\bW_{\ft})
=
\widetilde\Theta(n^{-\Gamma_T}),
\label{eq:cross-trace-corrected-reference}
\end{equation}
where
\begin{equation}
  \Gamma_T=
  \begin{cases}
    -(a{+}b{+}1)u-(\delta{+}1)_+(v-u), & u\le v,\\[1mm]
    -(a{+}b{+}1)v-(1{-}\delta)_+(u-v), & v\le u,
  \end{cases}
\label{eq:GammaT-main}
\end{equation}

For $\cV_{\pt}$, combining \eqref{eq:source-target-prefactors} with
\eqref{eq:cross-trace-corrected-reference} gives

\begin{equation}
 \overline{\cV}_{\pt}
 =\sigma_{\pt}^2c_{\pt}c_{\ft}\Tr(\bW_{\pt}\bW_{\ft})
 =\widetilde\Theta\!\left(n^{-\gamma_{n}-2\lamonmupowerft-\crosstracescale}\right),
\label{eq:Vpt-corrected}
\end{equation}
so
\begin{equation}
\Gamma_{\cV_{\pt}}=\gamma_{n}+2\lamonmupowerft+\crosstracescale.
\label{eq:gamma-Vpt-corrected}
\end{equation}

 We now start with the bias terms. The cross-trace also appears in $\overline{\cB}_2$, while the remaining bias scaling is determined by the source quadratic form $\bbeta^\top\bW_{\pt}\bbeta$ and the shrinkage contribution in $\overline{\cB}_1$. Recall that
\begin{equation}
 (\bbeta_{\pt})_k=(\bbeta_{\ft})_k=:(\bbeta)_k,
 \qquad
 (\bbeta)_k^2\asymp k^{b-1-2br_{\ft}}.
\label{eq:aligned-beta-assumption}
\end{equation}
Under this signal scaling, we first determine the source quadratic form appearing in $\overline{\cB}_2$, and then the shrinkage contribution appearing in $\overline{\cB}_1$.
Recall that in this case
\begin{equation}
 a-1-2ar_{\pt}=b-1-2br_{\ft}.
\label{eq:smoothness-compatibility}
\end{equation}

We now establish the scaling law for the bias terms. Define \begin{equation}
  \shrinkagebias
  :=\sum_{k\ge1}
  \frac{(\bSigma_{\ft})_{k, k}}{((\bSigma_{\ft})_{k, k}+\lamonmu_{\ft})^2}
  \left(
    \frac{\lamonmu_{\pt}}{(\bSigma_{\pt})_{k, k}+\lamonmu_{\pt}}
  \right)^{2}
  (\bbeta)_k^2,
\label{eq:Bsh-def-main}
\end{equation} since under
\eqref{eq:power-law-spectra} and \eqref{eq:source-conditions} the two covariances
are both diagonal,  \eqref{eq:main-B1} factors as
$\overline{\cB}_1=c_{\ft}\shrinkagebias.$ Further, by \eqref{eq:main-B2}, we remain to characterize the scaling of $\bbeta^\top\bW_{\pt}\bbeta.$ We compute these scalings in the Proposition below.

\begin{proposition}\label{prop:three-spectral-sums}
 Under the same assumptions in Theorem \ref{thm:ridge-transfer-scaling},
\begin{equation}
\begin{gathered}
  \bbeta^\top\bW_{\pt}\bbeta=\widetilde\Theta(n^{-\Gamma_S}),
  \qquad
  \shrinkagebias=\widetilde\Theta(n^{-\Gamma_{\rm sh}}).
\end{gathered}
\label{eq:three-sums-exponents}
\end{equation}
where
\begin{equation}
  \Gamma_S=-(a+b-2br_{\ft})_+u,
\label{eq:GammaS-main}
\end{equation}
and
\begin{equation}
  \Gamma_{\rm sh}=
  \begin{cases}
    \begin{aligned}
      &2au-2bu\big(1{+}\tfrac ab{-}r_{\ft}\big)_+ -2b(v{-}u)(1{-}r_{\ft})_+,
    \end{aligned} & u\le v,\\[5mm]
    \begin{aligned}
      &2au-2bv\big(1{+}\tfrac ab{-}r_{\ft}\big)_+ -2b(u{-}v)\big(\tfrac ab{-}r_{\ft}\big)_+,
    \end{aligned} & v\le u.
  \end{cases}
\label{eq:Gammash-main}
\end{equation}
\end{proposition}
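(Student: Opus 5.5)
The plan is to reduce both sums to one-dimensional power-law sums in $k$ and split the index range at the effective cutoffs $K_{\pt}=\lamonmu_{\pt}^{-1/a}\asymp n^{u}$ and $K_{\ft}=\lamonmu_{\ft}^{-1/b}\asymp n^{v}$ from \eqref{eq:effective-spectral-cutoffs}. On each band the summand is, up to constants, a pure power of $k$. The elementary estimates are the following: $\sum_{k\le M}k^{e-1}$ is $\asymp M^{e}$ if $e>0$, $\asymp\log M$ if $e=0$, and $\asymp 1$ if $e<0$; and $\sum_{k>M}k^{e-1}\asymp M^{e}$ when $e<0$. I would organise the argument as a small general lemma. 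If the summand is comparable to a piecewise power profile whose logarithm, as a function of $\log k$, is continuous, then the sum is determined by the largest value of the profile at the endpoints and breakpoints. This holds up to a logarithm when an exponent vanishes, which is why the statement is $\widetilde\Theta$. Everything then reduces to evaluating exponents at $k=1$, $K_{\pt}$, $K_{\ft}$ and checking the tail.

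\emph{The quadratic form $\bbeta^\top\bW_{\pt}\bbeta$.} Here $(\bW_{\pt})_{kk}\asymp k^{-a}/(k^{-a}+\lamonmu_{\pt})^2$, and by \eqref{eq:aligned-beta-assumption} together with \eqref{eq:smoothness-compatibility} we have $(\bbeta)_k^2\asymp k^{a-1-2ar_{\pt}}$. The two bands are handled as follows.
\begin{itemize}
\item For $k\le K_{\pt}$ the summand is $\asymp k^{(a+b-2br_{\ft})-1}$, using $a+b-2br_{\ft}=2a-2ar_{\pt}$. This band contributes $\widetilde\Theta(n^{u(a+b-2br_{\ft})_+})$.
\item For $k>K_{\pt}$ the summand is $\asymp\lamonmu_{\pt}^{-2}k^{-1-2ar_{\pt}}$, which is summable since $r_{\pt}>0$. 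This band contributes $\lamonmu_{\pt}^{-2}K_{\pt}^{-2ar_{\pt}}\asymp n^{u(2a-2ar_{\pt})}$.
\end{itemize}
The tail contribution is never larger than the first band. This gives $\Gamma_S=-(a+b-2br_{\ft})_+u$.

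\emph{The shrinkage sum $\shrinkagebias$.} The summand is
$$\frac{k^{-b}}{(k^{-b}+\lamonmu_{\ft})^2}\Big(\frac{\lamonmu_{\pt}}{k^{-a}+\lamonmu_{\pt}}\Big)^2 k^{b-1-2br_{\ft}}.$$
Consider first $u\le v$, so that $K_{\pt}\le K_{\ft}$. There are three bands.
\begin{itemize}
\item For $k\le K_{\pt}$ the summand is $\asymp\lamonmu_{\pt}^2 k^{2b(1+a/b-r_{\ft})-1}$.
\item For $K_{\pt}<k\le K_{\ft}$ the shrinkage factor is $\asymp1$ and the summand is $\asymp k^{2b(1-r_{\ft})-1}$.
\item For $k>K_{\ft}$ the summand is $\asymp\lamonmu_{\ft}^{-2}k^{-1-2br_{\ft}}$, which is summable. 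Its sum is $\asymp n^{2b(1-r_{\ft})v}$, and this coincides with the middle-band profile at $K_{\ft}$.
\end{itemize}
The log-profile is continuous at $K_{\pt}$ because $\lamonmu_{\pt}^2K_{\pt}^{2a}\asymp1$, so the general lemma applies. Its value at $k=1$ is $n^{-2au}$. Adding the increments along the first band (at most $2bu(1+a/b-r_{\ft})_+$) and the second band ($2b(v-u)(1-r_{\ft})_+$) reproduces the first line of \eqref{eq:Gammash-main}.

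The case $v\le u$ is analogous, with bands $k\le K_{\ft}$, $K_{\ft}<k\le K_{\pt}$ and $k>K_{\pt}$.
\begin{itemize}
\item For $k\le K_{\ft}$ the summand is again $\asymp\lamonmu_{\pt}^2 k^{2b(1+a/b-r_{\ft})-1}$.
\item For $K_{\ft}<k\le K_{\pt}$ the summand is $\asymp\lamonmu_{\ft}^{-2}\lamonmu_{\pt}^2k^{2a-2br_{\ft}-1}$, so the exponent is $2b(a/b-r_{\ft})$.
\item For $k>K_{\pt}$ the summand is $\asymp\lamonmu_{\ft}^{-2}k^{-1-2br_{\ft}}$.
\end{itemize}
This yields the second line of \eqref{eq:Gammash-main}.

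\emph{Main obstacle.} The delicate step is justifying that the ``accumulate the positive parts'' formula equals the true maximum over bands. This can fail if a band with negative exponent sits between two bands with positive exponents: the profile then dips, and the cumulative sum of $(\cdot)_+$ increments could overshoot the true maximum. So I would verify, using $2s>b$ (i.e.\ $r_{\ft}>1/2$), $r_{\ft}<$ the relevant saturation thresholds and $a,b>1$, that the band exponents appear in an order making the profile unimodal. In that case the maximum is attained at the last breakpoint where the slope is still positive. If this ordering check fails in some parameter subregime, I would instead state the exponent as a maximum over the three breakpoint values and show that it agrees with \eqref{eq:Gammash-main} there. The boundary cases where an exponent vanishes produce the logarithmic factors absorbed in $\widetilde\Theta$.
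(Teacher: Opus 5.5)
Your proposal is correct and is essentially the paper's argument. The paper also splits both sums at $K_{\pt}\asymp n^{u}$ and $K_{\ft}\asymp n^{v}$: it uses the one-scale estimate (\Cref{lem:one-scale-polynomial-sums}) for $\bbeta^\top\bW_{\pt}\bbeta$ and the two-scale summation lemma (\Cref{lem:trace_approx}) for $\shrinkagebias$, and that lemma is exactly your three-band computation in packaged form. The unimodality concern you flag holds automatically and needs none of the side conditions you list: crossing $K_{\pt}$ lowers the log-slope of $k\cdot(\text{summand})$ by $2a$ and crossing $K_{\ft}$ lowers it by $2b$ (slopes $2a+2b-2br_{\ft}$, then $2b-2br_{\ft}$ or $2a-2br_{\ft}$, then $-2br_{\ft}$), so the profile is concave in $\log k$ for every $r_{\ft}>0$ and your sum-of-positive-parts formula is exact.
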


\begin{proof}
We already established cross-scaling. Now for the other two, we establish them separately.

Using $(\bSigma_{\pt})_{k, k}\asymp k^{-a}$ and \eqref{eq:aligned-beta-assumption},
$$
\frac{(\bSigma_{\pt})_{k, k}(\bbeta)_k^2}
{((\bSigma_{\pt})_{k, k}+\lamonmu_{\pt})^2}
\asymp
\frac{k^{a+b-2br_{\ft}-1}}
{(1+\lamonmu_{\pt}k^a)^2}.
$$
Hence
\begin{equation}
\bbeta^\top \bW_{\pt}\bbeta
\asymp
\sum_{k\ge1}
\frac{k^{a+b-2br_{\ft}-1}}
{(1+\lamonmu_{\pt}k^a)^2}.
\label{eq:Spt-sum-corrected}
\end{equation}

If $a+b-2br_{\ft}<2a$, \Cref{lem:one-scale-polynomial-sums} with
$(\alpha,\gamma,\eta)=(a,2,a+b-2br_{\ft})$ gives
\begin{equation}
\bbeta^\top \bW_{\pt}\bbeta
\asymp
\begin{cases}
1, & a+b-2br_{\ft}<0,\\[1mm]
\log(1/\lamonmu_{\pt}), & a+b-2br_{\ft}=0,\\[1mm]
\lamonmu_{\pt}^{-(a+b-2br_{\ft})/a},
& 0<a+b-2br_{\ft}<2a.
\end{cases}
\label{eq:Spt-scaling-corrected}
\end{equation}

If $a+b-2br_{\ft}\ge2a$, then for large $k$,
$$
\frac{k^{a+b-2br_{\ft}-1}}
{(1+\lamonmu_{\pt}k^a)^2}
\asymp
\lamonmu_{\pt}^{-2}k^{b-a-2br_{\ft}-1},
$$
The tail is summable if and only if $b-a-2br_{\ft}<0$. Under the exact alignment condition
\eqref{eq:smoothness-compatibility},
$$
a-b+2br_{\ft}=2ar_{\pt}>0,
$$
since $r_{\pt}>0$. Hence, the quadratic form is always finite under our assumptions.

Using $\lamonmu_{\pt}\asymp n^{-au}$, we therefore obtain
$$
\bbeta^\top \bW_{\pt}\bbeta
=
\widetilde{\Theta}\!\left(
n^{(a+b-2br_{\ft})_+u}
\right).
$$
Equivalently,
\begin{equation}
\bbeta^\top \bW_{\pt}\bbeta
=
\widetilde{\Theta}(n^{-\Gamma_S}),
\qquad
\Gamma_S:=-(a+b-2br_{\ft})_+u,
\label{eq:tau-S-corrected}
\end{equation}
where the logarithmic boundary case is absorbed by $\widetilde{\Theta}$.

Recall  $\shrinkagebias$ in \eqref{eq:Bsh-def-main}.
Under \eqref{eq:aligned-beta-assumption},
\begin{equation}
\shrinkagebias
\asymp
\lamonmu_{\pt}^2
\sum_{k\ge1}
\frac{k^{2a+2b-2br_{\ft}-1}}
{(1+\lamonmu_{\pt}k^a)^2
 (1+\lamonmu_{\ft}k^b)^2}.
\label{eq:Ysh-powerlaw}
\end{equation}
Rewriting
$(1+\lamonmu_{\pt}k^a)^2
=k^{2a}(k^{-a}+\lamonmu_{\pt})^2$ and
$(1+\lamonmu_{\ft}k^b)^2
=k^{2b}(k^{-b}+\lamonmu_{\ft})^2$, we obtain
\begin{equation}
\shrinkagebias
\asymp
\lamonmu_{\pt}^2
\sum_{k\ge1}
\frac{k^{-2br_{\ft}-1}}
{(k^{-a}+\lamonmu_{\pt})^2
 (k^{-b}+\lamonmu_{\ft})^2}.
\label{eq:Ysh-trace-lemma-form}
\end{equation}
\Cref{lem:trace_approx} applies with
$\mu=\lamonmu_{\pt}$, $\nu=\lamonmu_{\ft}$,
$\gamma_1=\gamma_2=2$, $s=1$,
$\delta_a=br_{\ft}/a$, and $\delta_b=r_{\ft}$, so its relevant
exponent is $2a+2b-2br_{\ft}$.

Recall the effective cutoffs $K_{\pt}$ and $K_{\ft}$ from
\eqref{eq:effective-spectral-cutoffs}.
Suppose first that $K_{\pt}\le K_{\ft}$, equivalently
$\lamonmu_{\pt}^{1/a}\ge\lamonmu_{\ft}^{1/b}$, so case~(1) of
\Cref{lem:trace_approx} applies.

If $2a+2b-2br_{\ft}\le0$, equivalently
$r_{\ft}\ge1+a/b$, the sum in
\eqref{eq:Ysh-trace-lemma-form} is
$\widetilde{\Theta}(1)$, hence
$\shrinkagebias=\widetilde{\Theta}(\lamonmu_{\pt}^2)$.

If $0<2a+2b-2br_{\ft}\le2a$, equivalently
$1\le r_{\ft}<1+a/b$, then
$$
\shrinkagebias
=
\widetilde{\Theta}\!\left(
\lamonmu_{\pt}^{
2-(2a+2b-2br_{\ft})/a}
\right)
=
\widetilde{\Theta}\!\left(
\lamonmu_{\pt}^{\frac{2b}{a}(r_{\ft}-1)}
\right).
$$

If $2a+2b-2br_{\ft}>2a$, equivalently $r_{\ft}<1$, then
$$
\shrinkagebias
=
\Theta\!\left(
\lamonmu_{\ft}^{-(2b-2br_{\ft})/b}
\right)
=
\Theta\!\left(
\lamonmu_{\ft}^{\,2(r_{\ft}-1)}
\right).
$$

Thus, if $K_{\pt}\le K_{\ft}$,
\begin{equation}
\shrinkagebias
=
\begin{cases}
\widetilde{\Theta}\!\left(\lamonmu_{\pt}^{2}\right),
& r_{\ft}\ge1+\frac{a}{b},\\[1mm]
\widetilde{\Theta}\!\left(
\lamonmu_{\pt}^{\frac{2b}{a}(r_{\ft}-1)}
\right),
& 1\le r_{\ft}<1+\frac{a}{b},\\[1mm]
\Theta\!\left(
\lamonmu_{\ft}^{\,2(r_{\ft}-1)}
\right),
& r_{\ft}<1.
\end{cases}
\label{eq:Ysh-Kpt-smaller-corrected}
\end{equation}

Now suppose $K_{\ft}\le K_{\pt}$, equivalently
$\lamonmu_{\pt}^{1/a}\le\lamonmu_{\ft}^{1/b}$, so case~(2) of
\Cref{lem:trace_approx} applies.

If $2a+2b-2br_{\ft}\le0$, equivalently
$r_{\ft}\ge1+a/b$, we again obtain
$\shrinkagebias=\widetilde{\Theta}(\lamonmu_{\pt}^2)$.

If $0<2a+2b-2br_{\ft}\le2b$, equivalently
$a/b\le r_{\ft}<1+a/b$, then
$$
\shrinkagebias
=
\widetilde{\Theta}\!\left(
\lamonmu_{\pt}^2
\lamonmu_{\ft}^{-(2a+2b-2br_{\ft})/b}
\right)
=
\widetilde{\Theta}\!\left(
\lamonmu_{\pt}^2
\lamonmu_{\ft}^{-2(1+a/b-r_{\ft})}
\right).
$$

Finally, if $2a+2b-2br_{\ft}>2b$, equivalently
$r_{\ft}<a/b$, then
$$
\shrinkagebias
=
\Theta\!\left(
\lamonmu_{\pt}^{
2-(2a-2br_{\ft})/a}
\lamonmu_{\ft}^{-2}
\right)
=
\Theta\!\left(
\lamonmu_{\pt}^{2br_{\ft}/a}
\lamonmu_{\ft}^{-2}
\right).
$$

Thus, if $K_{\ft}\le K_{\pt}$,
\begin{equation}
\shrinkagebias
=
\begin{cases}
\widetilde{\Theta}\!\left(\lamonmu_{\pt}^{2}\right),
& r_{\ft}\ge1+\frac{a}{b},\\[1mm]
\widetilde{\Theta}\!\left(
\lamonmu_{\pt}^{2}
\lamonmu_{\ft}^{-2(1+a/b-r_{\ft})}
\right),
& \frac{a}{b}\le r_{\ft}<1+\frac{a}{b},\\[1mm]
\Theta\!\left(
\lamonmu_{\pt}^{2br_{\ft}/a}
\lamonmu_{\ft}^{-2}
\right),
& r_{\ft}<\frac{a}{b}.
\end{cases}
\label{eq:Ysh-Kft-smaller-corrected}
\end{equation}
At $K_{\pt}=K_{\ft}$, either representation gives the same order.

Now suppose
$$
\lamonmu_{\pt}\asymp n^{-au},
\qquad
\lamonmu_{\ft}\asymp n^{-bv}.
$$
Writing
$$
\shrinkagebias
=
\widetilde{\Theta}(n^{-\Gamma_{\mathrm{sh}}}),
$$
we can directly read off the shrinkage exponent.

If $u\le v$, equivalently $K_{\pt}\le K_{\ft}$, then
\begin{equation}
\Gamma_{\mathrm{sh}}
=
\begin{cases}
2au,
& r_{\ft}\ge 1+\frac{a}{b},\\[1mm]
2bu(r_{\ft}-1),
& 1\le r_{\ft}<1+\frac{a}{b},\\[1mm]
2bv(r_{\ft}-1),
& r_{\ft}<1.
\end{cases}
\label{eq:Gamma-sh-u-le-v}
\end{equation}
Equivalently,
\begin{equation}
\Gamma_{\mathrm{sh}}
=
2au
-2bu\left(1+\frac{a}{b}-r_{\ft}\right)_+
-2b(v-u)(1-r_{\ft})_+.
\label{eq:Gamma-sh-u-le-v-compact}
\end{equation}

If $v\le u$, equivalently $K_{\ft}\le K_{\pt}$, then
\begin{equation}
\Gamma_{\mathrm{sh}}
=
\begin{cases}
2au,
& r_{\ft}\ge 1+\frac{a}{b},\\[1mm]
2au-2bv\left(1+\frac{a}{b}-r_{\ft}\right),
& \frac{a}{b}\le r_{\ft}<1+\frac{a}{b},\\[1mm]
2br_{\ft}u-2bv,
& r_{\ft}<\frac{a}{b}.
\end{cases}
\label{eq:Gamma-sh-v-le-u}
\end{equation}
Equivalently,
\begin{equation}
\Gamma_{\mathrm{sh}}
=
2au
-2bv\left(1+\frac{a}{b}-r_{\ft}\right)_+
-2b(u-v)\left(\frac{a}{b}-r_{\ft}\right)_+.
\label{eq:Gamma-sh-v-le-u-compact}
\end{equation}

When $u=v$, both expressions reduce to
$$
\Gamma_{\mathrm{sh}}
=
2au
-2bu\left(1+\frac{a}{b}-r_{\ft}\right)_+,
$$
so the two cutoff orderings agree.
\end{proof}

\begin{proof}[Proof of \Cref{thm:ridge-transfer-scaling}]
For the first bias term, the resolvent identity
$$
\bSigma_{\pt}(\bSigma_{\pt}+\lamonmu_{\pt}I)^{-1}-I
=
-\lamonmu_{\pt}(\bSigma_{\pt}+\lamonmu_{\pt}I)^{-1}
$$
gives
$\overline{\cB}_1=c_{\ft}\shrinkagebias$. Since
$c_{\ft}\asymp n^{-2\lamonmupowerft}$ and
$\shrinkagebias
=\widetilde{\Theta}(n^{-\Gamma_{\mathrm{sh}}})$,
\begin{equation}
\overline{\cB}_1
=
\widetilde{\Theta}\!\left(
n^{-\Gamma_{\cB_1}}
\right),
\qquad
\Gamma_{\cB_1}
=
2\lamonmupowerft+\Gamma_{\mathrm{sh}}.
\label{eq:B1-exponent-corrected}
\end{equation}

For the second bias term,
$$
\overline{\cB}_2
=
c_{\ft}c_{\pt}\lamonmu_{\pt}^2
\Tr(\bW_{\ft}\bW_{\pt})
\bbeta^\top\bW_{\pt}\bbeta.
$$
Using
$c_{\ft}\asymp n^{-2\lamonmupowerft}$,
$c_{\pt}\asymp n^{-\gamma_n}$,
$\lamonmu_{\pt}^2\asymp n^{-2au}$,
$\Tr(\bW_{\ft}\bW_{\pt})
=\widetilde{\Theta}(n^{-\Gamma_T})$, and
$\bbeta^\top\bW_{\pt}\bbeta
=\widetilde{\Theta}(n^{-\Gamma_S})$ gives
\begin{equation}
\overline{\cB}_2
=
\widetilde{\Theta}\!\left(
n^{-\Gamma_{\cB_2}}
\right),
\qquad
\Gamma_{\cB_2}
=
2\lamonmupowerft+\gamma_n+2au+\Gamma_T+\Gamma_S.
\label{eq:B2-exponent-corrected}
\end{equation}
Hence
\begin{equation}
\overline{\cB}
=
\widetilde{\Theta}(n^{-\Gamma_{\cB}}),
\qquad
\Gamma_{\cB}
=
\min\left\{
\Gamma_{\cB_1},
\Gamma_{\cB_2}
\right\}.
\label{eq:gamma-bias-final-corrected}
\end{equation}

For the variance terms, \eqref{eq:gamma-Vft-corrected} gives
$$
\overline{\cV}_{\ft}
=
\widetilde{\Theta}\!\left(
n^{-\Gamma_{\cV_{\ft}}}
\right),
\qquad
\Gamma_{\overline{\cV}_{\ft}}
=
1-v,
$$
while \eqref{eq:gamma-Vpt-corrected} gives
$$
\overline{\cV}_{\pt}
=
\widetilde{\Theta}\!\left(
n^{-\Gamma_{\cV_{\pt}}}
\right),
\qquad
\Gamma_{\cV_{\pt}}
=
\gamma_n+2bv+\Gamma_T.
$$
\end{proof}

\subsection{Technical lemmas}
\label{app:summation-lemmas}
\begin{lemma}
\label{lem:one-scale-polynomial-sums}
Let $\alpha,\gamma>0$, $\eta<\alpha\gamma$, and
$$
 S_{\alpha,\gamma,\eta}(\lambda)
 :=\sum_{k\ge1}\frac{k^{\eta-1}}{(1+\lambda k^\alpha)^\gamma}.
$$
As $\lambda\to0^+$,
\begin{equation}
 S_{\alpha,\gamma,\eta}(\lambda)\asymp
 \begin{cases}
  1, & \eta<0,\\
  \log(1/\lambda), & \eta=0,\\
  \lambda^{-\eta/\alpha}, & 0<\eta<\alpha\gamma.
 \end{cases}
\label{eq:one-scale-polynomial-sum}
\end{equation}
\end{lemma}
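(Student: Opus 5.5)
The plan is to compare the sum with an integral and split it at the natural cutoff $K:=\lambda^{-1/\alpha}$, where $\lambda K^\alpha=1$, exactly as in the proof of \Cref{lemm:effective-dimension-traces}. Assume $\lambda\le 1$, so $K\ge 1$. On the head $k\le K$ we have $1\le 1+\lambda k^\alpha\le 2$. Hence the summand is comparable to $k^{\eta-1}$, with constants depending only on $\gamma$. On the tail $k>K$ we have $\lambda k^\alpha\le 1+\lambda k^\alpha\le 2\lambda k^\alpha$. Hence the summand is comparable to $\lambda^{-\gamma}k^{\eta-1-\alpha\gamma}$.

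For the head, I would use the elementary estimates for $\sum_{k\le K}k^{\eta-1}$. When $\eta<0$ the sum is bounded above by the convergent series $\sum_k k^{\eta-1}$ and below by its first term, so it is $\asymp 1$. When $\eta=0$ it is the harmonic sum, which is $\asymp \log K=\alpha^{-1}\log(1/\lambda)$ once $\lambda$ is small enough that $K\ge 2$. When $\eta>0$, comparison with $\int_1^K x^{\eta-1}\,dx$ gives $\asymp K^\eta=\lambda^{-\eta/\alpha}$.

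For the tail, the hypothesis $\eta<\alpha\gamma$ makes the exponent $\eta-1-\alpha\gamma$ strictly less than $-1$. Comparison with $\int_K^\infty x^{\eta-1-\alpha\gamma}\,dx$ then gives
\[
\lambda^{-\gamma}\sum_{k>K}k^{\eta-1-\alpha\gamma}\asymp \lambda^{-\gamma}K^{\eta-\alpha\gamma}=\lambda^{-\eta/\alpha}.
\]
This uses $\lambda^{-\gamma}=K^{\alpha\gamma}$. The upper bound on the tail sum needs care when $K$ is not an integer; summing from $\lceil K\rceil$ and using that $x^{\eta-1-\alpha\gamma}$ is monotone handles it.

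Adding the two pieces gives the result. For $\eta>0$, both pieces are of order $\lambda^{-\eta/\alpha}$. For $\eta\le 0$, the tail is $\lambda^{-\eta/\alpha}=K^{\eta}$, which is $O(1)$. It is therefore dominated by the head, which is $\asymp 1$ or $\asymp\log(1/\lambda)$.

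There is no real obstacle. The only point needing attention is boundary bookkeeping: the non-integer cutoff, and the requirement that $\lambda$ be small enough that $K\ge 2$, so that the $\log$ case is nondegenerate. All implicit constants depend only on $(\alpha,\gamma,\eta)$.
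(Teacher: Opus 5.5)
Your proposal is correct and follows the paper's argument. Like the paper, you split at $K=\lambda^{-1/\alpha}$, compare the head to $\sum_{k\le K}k^{\eta-1}$, and show the tail satisfies $\lambda^{-\gamma}\sum_{k>K}k^{\eta-1-\alpha\gamma}\asymp K^{\eta}$ using $\eta<\alpha\gamma$; your extra steps (explicit $2^{-\gamma}$ constants, the non-integer cutoff, $K\ge 2$ in the logarithmic case, and the tail being $O(1)$ when $\eta\le 0$) only spell out details the paper leaves implicit.
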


\begin{proof}
Set $K=\lambda^{-1/\alpha}$. For $k\le K$, the denominator is
$\asymp1$, so the contribution is $\asymp\sum_{k\le K}k^{\eta-1}$,
which is respectively $1$, $\log K$, or $K^\eta$. For $k>K$,
$$
 \sum_{k>K}\frac{k^{\eta-1}}{(1+\lambda k^\alpha)^\gamma}
 \asymp \lambda^{-\gamma}\sum_{k>K}k^{\eta-\alpha\gamma-1}
 \asymp K^\eta,
$$
where $\eta<\alpha\gamma$ ensures summability. Substituting
$K=\lambda^{-1/\alpha}$ proves the claim.
\end{proof}

\begin{lemma}
\label{lem:trace_approx}
\citep[Appendix F.1]{wu2026improved}
     For any $a,b > 1$ and $s, \delta_a, \delta_b, \gamma_1, \gamma_2 > 0,$ such that $s + a \delta_a + b \delta_b > 1$, let \begin{equation*}
        \begin{split}
            T^s_{(\delta_a, \delta_b ), (\gamma_1, \gamma_2)}(\mu, \nu; a,b) = \sum_{k=1}^\infty \frac{k^{-s- \delta_a a - \delta_b b }}{ (k^{-a} + \mu)^{\gamma_1} (k^{-b} + \nu)^{\gamma_2}  }.
        \end{split}
    \end{equation*}
    We define $\eta = 1-s - a \delta_a - b \delta_b + a \gamma_1 + b \gamma_2.$ Then, the order of the summation above is as follows: \begin{itemize}
        \item[(1)] If $\mu^{\frac{1}{a}} \geq \nu^{\frac{1}{b}},$ then \begin{equation*}
        T^s_{(\delta_a, \delta_b ), (\gamma_1, \gamma_2)}(\mu, \nu; a,b) = \begin{cases}
            \widetilde{\Theta}(1), &\quad \eta \leq 0 \\
            \widetilde{\Theta}(\mu^{- \eta/a}), &\quad 0 < \eta \leq a \gamma_1\\
            \Theta(\mu^{- \gamma_1} \nu^{-(\eta - a \gamma_1)/b} ),& \quad \eta > a \gamma_1
        \end{cases}
    \end{equation*}

    \item[(2)] If $\mu^{\frac{1}{a}} < \nu^{\frac{1}{b}},$ then \begin{equation*}
        T^s_{(\delta_a, \delta_b ), (\gamma_1, \gamma_2)}(\mu, \nu; a,b) = \begin{cases}
            \widetilde{\Theta}(1), &\quad \eta \leq 0 \\
            \widetilde{\Theta}(\nu^{- \eta/b}), &\quad 0 < \eta \leq b \gamma_2\\
            \Theta(\nu^{- \gamma_2} \mu^{-(\eta - b \gamma_2)/a} ),& \quad \eta > b \gamma_2
        \end{cases}
    \end{equation*}
    \end{itemize}
    
\end{lemma}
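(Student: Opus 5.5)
The statement immediately above is Lemma~\ref{lem:trace_approx}, the two-scale power-law sum estimate. I would prove it by a direct dyadic/threshold splitting of the sum, in the same way as Lemma~\ref{lem:one-scale-polynomial-sums}. First rewrite each summand as
\[
\frac{k^{-s-a\delta_a-b\delta_b}}{(k^{-a}+\mu)^{\gamma_1}(k^{-b}+\nu)^{\gamma_2}}
=\frac{k^{\eta-1}}{(1+\mu k^{a})^{\gamma_1}(1+\nu k^{b})^{\gamma_2}},
\]
using $(k^{-a}+\mu)^{\gamma_1}=k^{-a\gamma_1}(1+\mu k^a)^{\gamma_1}$ and likewise for $b$, together with the definition of $\eta$. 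Introduce the cutoffs $K_1:=\mu^{-1/a}$ and $K_2:=\nu^{-1/b}$. Then $(1+\mu k^a)\asymp 1$ for $k\le K_1$ and $\asymp \mu k^a$ for $k>K_1$, and similarly for $K_2$.

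In case (1), $\mu^{1/a}\ge\nu^{1/b}$ means $K_1\le K_2$, so I split the sum into three ranges. For $k\le K_1$ both denominators are $\asymp1$. The contribution is $\sum_{k\le K_1}k^{\eta-1}$, which is $O(1)$, $\log K_1$, or $K_1^{\eta}=\mu^{-\eta/a}$ according to the sign of $\eta$. For $K_1<k\le K_2$ the contribution is
\[
\mu^{-\gamma_1}\sum_{K_1<k\le K_2}k^{\eta-a\gamma_1-1}.
\]
If $\eta<a\gamma_1$ this is $\asymp \mu^{-\gamma_1}K_1^{\eta-a\gamma_1}=\mu^{-\eta/a}$. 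If $\eta=a\gamma_1$ it equals $\mu^{-\gamma_1}\log(K_2/K_1)$, which is a logarithmic factor absorbed in $\widetilde\Theta$. If $\eta>a\gamma_1$ it is $\asymp\mu^{-\gamma_1}K_2^{\eta-a\gamma_1}=\mu^{-\gamma_1}\nu^{-(\eta-a\gamma_1)/b}$. For $k>K_2$ the contribution is
\[
\mu^{-\gamma_1}\nu^{-\gamma_2}\sum_{k>K_2}k^{\eta-a\gamma_1-b\gamma_2-1}\asymp\mu^{-\gamma_1}\nu^{-\gamma_2}K_2^{\eta-a\gamma_1-b\gamma_2},
\]
which is the same order as the boundary term of the middle range at $K_2$. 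Comparing the three pieces shows that in each regime of $\eta$ the maximum is the claimed order.

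Case (2) is identical with the roles of $(a,\mu,\gamma_1)$ and $(b,\nu,\gamma_2)$ exchanged, since $K_2<K_1$.

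The main obstacle is the summability of the tail $k>K_2$, which needs $\eta<a\gamma_1+b\gamma_2$. This is equivalent to $s+a\delta_a+b\delta_b>1$, exactly the hypothesis. The rest is to track that the implicit constants are uniform, since $\mu,\nu\to0$ at arbitrary relative rates. I would handle this by comparing each range with integrals of $x^{p}$, with $p\ne-1$ giving constant factors and $p=-1$ giving the logarithms absorbed into $\widetilde\Theta$. The lower bounds follow because each claimed order is attained by a single dyadic block $[K,2K]$ inside the relevant range.
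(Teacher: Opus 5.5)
Your argument is correct. The paper gives no proof of this lemma and simply imports it from \citep[Appendix F.1]{wu2026improved}, so there is no in-paper argument to compare against. Your route is the natural two-cutoff extension of the paper's own proof of \Cref{lem:one-scale-polynomial-sums}: you rewrite the summand as $k^{\eta-1}(1+\mu k^a)^{-\gamma_1}(1+\nu k^b)^{-\gamma_2}$ and split at $K_1=\mu^{-1/a}$ and $K_2=\nu^{-1/b}$. This makes the statement self-contained. It shows that the hypothesis $s+a\delta_a+b\delta_b>1$ is exactly tail summability. It also locates the logarithms hidden in $\widetilde\Theta$: $\log K_1$ at $\eta=0$, $\log(K_2/K_1)$ at $\eta=a\gamma_1$, and symmetrically in case (2).

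There are two points to tighten.

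First, your middle-range estimates $\asymp\mu^{-\eta/a}$ and $\asymp\mu^{-\gamma_1}K_2^{\eta-a\gamma_1}$ are only upper bounds when $K_2/K_1$ is bounded; the range is even empty if $\mu^{1/a}=\nu^{1/b}$. This is harmless, because the matching lower bounds come from elsewhere, provided $\mu,\nu$ are small enough that $K_1,K_2\ge 2$:
\begin{itemize}
\item for $0<\eta\le a\gamma_1$, from the block $[K_1/2,K_1]$ of the first range;
\item for $\eta>a\gamma_1$, from the block $[K_2,2K_2]$ in the tail.
\end{itemize}

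Second, at $\eta=a\gamma_1$ the factor $\log(K_2/K_1)$ is not bounded by a power of $\log(1/\mu)$ for arbitrary $\nu$. So $\widetilde\Theta$ must be read as in the paper's applications. There $\mu\asymp n^{-au}$ and $\nu\asymp n^{-bv}$ are both polynomial in $n$, and this factor is $O(\log n)$.
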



\section{Proof of Corollary \ref{cor:minimax-optimal-ridge-transfer}}
\label{app:optimal-ridge-rate}

\begin{corollary}[Optimal scaling of the deterministic equivalent]
\label{cor:minimax-optimal-ridge-transfer-full}
Under the assumptions of
\Cref{thm:ridge-transfer-scaling}, 
define
\begin{align}
 \Gamma_{\le}
 &:=
 1-
 \frac{
 a+b+1-(\delta+1)_+
 +2((s-b)_+\wedge a)(1-\gamma_n)
 }{
 [a+b+1-(\delta+1)_+]
 [1+2(s\wedge(a+b))]
 +2\delta((s-b)_+\wedge a)
 },
 \label{eq:ridge-candidate-le}\\
 \Gamma_{\ge}
 &:=
 1-
 \frac{
 (1-\delta)_+
 +2(s\wedge a)(1-\gamma_n)
 }{
 (1-\delta)_+[1+2(s\wedge(a+b))]
 +2\delta(s\wedge a)
 }.
 \label{eq:ridge-candidate-ge}
\end{align}
Then, the optimal exponent of the deterministic
equivalent 
is given by
\begin{equation}
 \Gamma_{\rm ridge}^{\star}
 =
 \max\left\{
 \frac{2(s\wedge b)}{1+2(s\wedge b)},
 \frac{2(s\wedge a)\gamma_n}
      {2(s\wedge a)+(1-\delta)_+},
 \Gamma_{\le},
 \Gamma_{\ge}
 \right\}.
 \label{eq:global-optimal-ridge-exponent}
\end{equation}
The following choices attain
\begin{equation}
 \overline{\cR}
 =
 \widetilde\Theta
 \bigl(n^{-\Gamma_{\rm ridge}^{\star}}\bigr).
 \label{eq:global-optimal-ridge-rate}
\end{equation}

\begin{enumerate}
 \item If $\frac{2(s\wedge b)}{1+2(s\wedge b)}$ in 
 \eqref{eq:global-optimal-ridge-exponent} attains the maximum, using only target data is optimal, and the corresponding optimal ridge is
 \[
  \lambda_{\pt}=\infty,
  \qquad
  \lambda_{\ft}
  \asymp n^{1-b/[1+2(s\wedge b)]}.
 \]

 \item If $\frac{2(s\wedge a)\gamma_n}
      {2(s\wedge a)+(1-\delta)_+}$ in 
 \eqref{eq:global-optimal-ridge-exponent} attains the maximum, using only source data is optimal, and the corresponding optimal ridge is  \[
  \lambda_{\ft}=\infty,
  \qquad
  \lambda_{\pt}
  \asymp
  n^{\gamma_n-
  a\gamma_n/[2(s\wedge a)+(1-\delta)_+]}.
 \]

 \item Otherwise, the optimal ridge is chosen by
 \[
  v^\star=1-\Gamma_{\rm ridge}^{\star},
 \]
 and 
 \[
  u^\star=
  \begin{cases}
   \displaystyle
   \frac{
   \Gamma_{\rm ridge}^{\star}-2b v^\star
   }{2((s-b)\wedge a)},
   &\Gamma_{\rm ridge}^{\star}=\Gamma_{\le},\\[3mm]
   \displaystyle
   \frac{
   \Gamma_{\rm ridge}^{\star}
   -2((s-a)_+\wedge b)v^\star
   }{2(s\wedge a)},
   &\Gamma_{\rm ridge}^{\star}=\Gamma_{\ge},
  \end{cases}
 \]
 which implies
 \[
  \lambda_{\pt}\asymp n^{\gamma_n-au^\star},
  \qquad
  \lambda_{\ft}\asymp n^{1-bv^\star}.
 \]
\end{enumerate}

\end{corollary}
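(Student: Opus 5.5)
The plan is to treat \Cref{thm:ridge-transfer-scaling} as giving $\overline{\cR}=\widetilde\Theta(n^{-\gammatrans(u,v)})$, where $\gammatrans(u,v)=\min\{\Gamma_{\cV_{\ft}},\Gamma_{\cB_1},\Gamma_{\cV_{\pt}},\Gamma_{\cB_2}\}$ is a minimum of piecewise-linear functions of $(u,v)$. We then maximize it over the admissible set. The stable-regularization constraints $\gamma_{\lambda_{\pt}}>-\gamma_n$ and $\gamma_{\lambda_{\ft}}>-1$ translate, via \eqref{eq:u-v-def}, into $u\in(0,\gamma_n]$ and $v\in(0,1]$. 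We add the two degenerate endpoints $\lambda_{\pt}=\infty$ (target only) and $\lambda_{\ft}=\infty$ (source only).

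\textbf{Endpoints.} For the endpoints, the deterministic equivalent reduces to that of a single ridge problem. With $\lambda_{\pt}=\infty$ we have $\hat\bbeta_{\pt}=\bzero$, so $\overline{\cR}$ is the classical ridge equivalent on target data with spectrum $k^{-b}$ and source exponent $s$. Optimizing $v$ there, balancing $1-v$ against the saturated bias exponent $2(s\wedge b)v$, gives $2(s\wedge b)/(1+2(s\wedge b))$ at $v=1/(1+2(s\wedge b))$. This is the stated $\lambda_{\ft}$. Symmetrically, with $\lambda_{\ft}=\infty$ the estimator is source ridge evaluated in the $\bSigma_{\ft}$ norm. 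The variance exponent there is $\gamma_n-(1-\delta)_+u$ (this is $\Gamma_{\cV_{\pt}}$ with the $\bW_{\ft}$ factor removed), and the bias exponent is $2(s\wedge a)u$. Balancing them gives the second candidate and the stated $\lambda_{\pt}$.

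\textbf{Interior.} For the interior, split into the two regions $u\le v$ and $v\le u$, on which $\Gamma_T$ and $\Gamma_{\rm sh}$ are each a single linear expression once the $(\cdot)_+$ cases are resolved. Using $s=br_{\ft}$, rewrite them in terms of $(s-b)_+\wedge a$ and $s\wedge a$, $(s-a)_+\wedge b$. We then record monotonicities.
\begin{itemize}
\item $\Gamma_{\cV_{\ft}}=1-v$ is decreasing in $v$ and independent of $u$.
\item $\Gamma_{\cV_{\pt}}=\gamma_n+2bv+\Gamma_T$ is increasing in $v$ and nonincreasing in $u$, with slope $-(a+b+1)+(\delta+1)_+$ on the first region and $-(1-\delta)_+$ on the second.
\item $\Gamma_{\cB_1}$ is increasing in $u$, with slope $2((s-b)_+\wedge a)$ on the first region and $2(s\wedge a)$ on the second.
\item $\Gamma_{\cB_2}\ge\Gamma_{\cV_{\pt}}$ whenever $2au+\Gamma_S\ge0$, which is to be checked from $\Gamma_S=-(a+b-2s)_+u$. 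If this fails, one uses that $\overline{\cB}_2$ is then dominated by $\overline{\cB}_1$ or is inactive at the optimum.
\end{itemize}
Hence the max-min is attained where $\Gamma_{\cV_{\ft}}=\Gamma_{\cV_{\pt}}=\Gamma_{\cB_1}$. The first equality fixes $v^\star=1-\Gamma^\star$. The equality $\Gamma_{\cB_1}=\Gamma^\star$ solved for $u$ gives the stated $u^\star$ in each region. Substituting into $\Gamma_{\cV_{\pt}}=\Gamma^\star$ yields one linear equation in $\Gamma^\star$, whose solution should be exactly $\Gamma_{\le}$, respectively $\Gamma_{\ge}$. Finally, $\Gamma^\star_{\rm ridge}$ is the maximum of the four candidates. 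The ridges are recovered from $\lambda_{\pt}\asymp n^{\gamma_n-au}$ and $\lambda_{\ft}\asymp n^{1-bv}$, which invert \eqref{eq:u-v-def} in the unsaturated range.

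\textbf{Main obstacle.} The hard part is the bookkeeping in the interior. We must verify that the balanced point lies in its region and in the feasible box, i.e.\ $u^\star\le v^\star$ or $u^\star\ge v^\star$ accordingly, and $u^\star\le\gamma_n$. We must also check that the piecewise forms of $\Gamma_{\rm sh}$ and $\Gamma_T$ collapse into the compact expressions with $(s\wedge(a+b))$ and $((s-b)_+\wedge a)$. When the balance point is infeasible, the optimum moves to the boundary $u=v$ or to an endpoint. I would argue that in that case the value is already dominated by one of the other three candidates, so taking the maximum over all four still gives the correct answer. This is a finite case analysis over the signs of $\delta+1$, $1-\delta$, $s-a$ and $s-b$.
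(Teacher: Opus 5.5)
Your skeleton matches the attainment half of the paper's proof. You set $v^\star=1-\Gamma^\star$, obtain $u^\star$ from $\Gamma_{\cB_1}=\Gamma^\star$, and solve $\Gamma_{\cV_{\pt}}=\Gamma^\star$; this does reproduce $\Gamma_{\le}$ and $\Gamma_{\ge}$. Your direct endpoint computations are also fine, where the paper just cites prior work. The gap is that the two claims which make ``the maximum of the four candidates'' correct are asserted rather than proved.

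\textbf{Upper bound.} ``Hence the max-min is attained where all three are equal'' does not follow from your monotonicities. They are local improvement arguments, and they say nothing when the equalizer falls outside its region ($\{u\le v\}$ versus $\{v\le u\}$, or $u\le\gamma_n$). They also degenerate when slopes vanish. The $u$-slope of $\Gamma_{\cB_1}$ on $\{u\le v\}$ is $2((s-b)_+\wedge a)=0$ for $s\le b$, and there your $u^\star$ formula has a nonpositive denominator. The $v$-slope of $\Gamma_{\cV_{\pt}}$ on $\{v\le u\}$ is $(\delta-1)_+=0$ for $\delta\le1$, so ``increasing'' is false there.

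The paper uses a global certificate instead. $F$ lies below any nonnegative combination of $\Gamma_{\cV_{\ft}},\Gamma_{\cV_{\pt}},\Gamma_{\cB_1}$, and on each region one chooses weights that cancel the $u$- and $v$-coefficients. For example, on $\{u\le v\}$, with $A=a+b+1-(\delta+1)_+$ and $C=(s-b)_+\wedge a$, the weights are $2(s\wedge b)A+2C(2b-(\delta+1)_+)$, $2C$, $A$, in that order. This yields $F\le\Gamma_{\le}$ on all of $\{u\le v\}$ and $F\le\Gamma_{\ge}$ on all of $\{v\le u\}$, wherever the equalizer sits. So no boundary case analysis is needed for this direction.

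\textbf{Attainment.} You need the equalizer of the \emph{maximizing} candidate to be admissible, and your ``I would argue it is dominated by another candidate'' is exactly the missing idea. The paper gets it from a crossing argument. $\Gamma_{\le}$ and $\Gamma_{\ge}$ are affine in $\gamma_n$ and agree at $\gamma_n=1-\delta/(1+2(s\wedge(a+b)))$, with common value $2(s\wedge(a+b))/(1+2(s\wedge(a+b)))$; moreover $\Gamma_{\ge}$ is strictly steeper. In both cases (once $s>b$ in the $\Gamma_{\le}$ case), $u^\star-v^\star$ has the sign of $[1+2(s\wedge(a+b))]\Gamma^\star-2(s\wedge(a+b))$. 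Hence the larger candidate's equalizer automatically lies in its own region. One then checks three things:
\begin{itemize}
\item $\gamma_n-u^\star>0$, through explicit identities;
\item positivity of $u^\star,v^\star$, from the strict excess of $\Gamma^\star$ over the single-dataset exponents;
\item $s>b$ in the $\Gamma_{\le}$ case, since for $s\le b$ one has $\Gamma_{\le}=2s/(1+2s)$, which equals the target-only exponent.
\end{itemize}

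\textbf{The $\overline{\cB}_2$ term.} Your hedge should be replaced by a one-line check: $\Gamma_{\cB_2}-\Gamma_{\cV_{\pt}}=2au+\Gamma_S=(2a\wedge(2s-\delta))u>0$, valid because $2s-\delta=2ar_{\pt}>0$. The proposed fallback that $\overline{\cB}_2$ is ``dominated by $\overline{\cB}_1$'' has no justification.
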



\begin{proof}[Proof of \Cref{cor:minimax-optimal-ridge-transfer-full}]
Note that $\lambda_\pt = \infty, \lambda_\ft = \infty$ exactly correspond to the target-only and auxiliary-only rate in \citep[Theorem 4]{wu2026data}. Thus, it remains to consider the case where non-trivial transfer attains the maximum, i.e., $\Gamma^*_{\rm ridge} = \Gamma_{\geq}$ or $\Gamma^*_{\rm ridge} = \Gamma_{\leq}.$

We first show that the exponents
in \Cref{thm:ridge-transfer-scaling} satisfy
$\Gamma_{\cB_2}>\Gamma_{\cV_{\pt}}$ for every admissible $u>0$,
so this term does not determine the leading order
of the deterministic equivalent.
Indeed, by \eqref{eq:four-exponents} and \eqref{eq:GammaS-main},
\begin{align*}
 \Gamma_{\cB_2}-\Gamma_{\cV_{\pt}}
 &=2au+\Gamma_S=\bigl[2a-(a+b-2s)_+\bigr]u=\bigl[2a\wedge(2s-\delta)\bigr]u>0.
\end{align*}
Thus, the exponent of the deterministic
equivalent reduces to
\[
 \min\left\{
 \Gamma_{\cB_1},
 \Gamma_{\cV_{\pt}},
 \Gamma_{\cV_{\ft}}
 \right\}.
\]

Next, we express the scaling as a function of $u,v.$  Substituting $s=br_{\ft}$ into
\Cref{thm:ridge-transfer-scaling} gives
\begin{equation}
 \Gamma_{\cB_1}
 =2bv+\Gamma_{\rm sh}
 =
 \begin{cases}
  2\bigl((s-b)_+\wedge a\bigr)u
  +2(s\wedge b)v,
  &u\le v,\\
  2(s\wedge a)u
  +2\bigl((s-a)_+\wedge b\bigr)v,
  &v\le u,
 \end{cases}
 \label{eq:ridge-proof-bias-exponent}
\end{equation}
and
\begin{equation}
 \Gamma_{\cV_{\pt}}
 =
 \begin{cases}
  \gamma_n-\bigl[a+b+1-(\delta+1)_+\bigr]u
  +\bigl[2b-(\delta+1)_+\bigr]v,
  &u\le v,\\
  \gamma_n-(1-\delta)_+u+(\delta-1)_+v,
  &v\le u,
 \end{cases}
 \qquad
 \Gamma_{\cV_{\ft}}=1-v.
 \label{eq:ridge-proof-variance-exponent}
\end{equation}
Combining the above bounds yields
\begin{equation}
 \overline{\cR}
 =
 \widetilde\Theta\!\left(
 n^{-\min\{\Gamma_{\cB_1},
           \Gamma_{\cV_{\pt}},
           \Gamma_{\cV_{\ft}}\}}
 \right).
 \label{eq:ridge-proof-objective}
\end{equation}
For simplicity, define $ F(u,v) = \min\left\{
 \Gamma_{\cB_1},
 \Gamma_{\cV_{\pt}},
 \Gamma_{\cV_{\ft}}
 \right\}$. Thus, optimizing the polynomial decay exponent over the
two-stage ridge scalings reduces to maximizing $F(u,v)$
over $0<u\le\gamma_n$ and $0<v\le1.$ Next, we show that $\max_{u,v} F(u,v) = \Gamma_{\rm ridge}^*.$ To do so, we first show that $F(u,v) \leq \Gamma_{\rm ridge}^*$ for any $u,v,$ and then that $F(u_*,v_*) = \Gamma_{\rm ridge}^*,$ with $u_*, v_*$ in Corollary \ref{cor:minimax-optimal-ridge-transfer-full}.

\paragraph{Upper bound.} 
Recall that
\[
 F(u,v)\le 1-v,\qquad
 F(u,v)\le\Gamma_{\cV_{\pt}},\qquad
 F(u,v)\le\Gamma_{\cB_1}.
\]
Thus, $F(u,v)$ is bounded above by any nonnegative weighted
average of these three quantities.

If $u\le v$, take the weighted average with weights
\[
 \begin{gathered}
  2(s\wedge b)\bigl[a+b+1-(\delta+1)_+\bigr]
  +2\bigl((s-b)_+\wedge a\bigr)
       \bigl[2b-(\delta+1)_+\bigr],\\
  2\bigl((s-b)_+\wedge a\bigr),
  \qquad a+b+1-(\delta+1)_+,
 \end{gathered}
\]
respectively. These weights are nonnegative. Substituting from
\eqref{eq:ridge-proof-bias-exponent} and
\eqref{eq:ridge-proof-variance-exponent}, the coefficients
of both $u$ and $v$ cancel. Dividing by the sum of the
weights and simplifying gives
\begin{equation}
 F(u,v)\le
 1-
 \frac{
 a+b+1-(\delta+1)_+
 +2\bigl((s-b)_+\wedge a\bigr)(1-\gamma_n)
 }{
 \bigl[a+b+1-(\delta+1)_+\bigr]
 \bigl[1+2(s\wedge(a+b))\bigr]
 +2\delta\bigl((s-b)_+\wedge a\bigr)
 }
 =\Gamma_{\le}.
 \label{eq:ridge-proof-le-bound}
\end{equation}

If $v\le u$ and $\delta<1$, use instead the nonnegative
weights
\[
 2(1-\delta)\bigl((s-a)_+\wedge b\bigr),
 \qquad 2(s\wedge a),
 \qquad 1-\delta,
\]
respectively. Again, the coefficients of $u$ and $v$
cancel, yielding
\begin{equation}
 F(u,v)\le
 \frac{
 2(s\wedge a)\gamma_n
 +2(1-\delta)\bigl((s-a)_+\wedge b\bigr)
 }{
 2(s\wedge a)
 +(1-\delta)\bigl[1+2((s-a)_+\wedge b)\bigr]
 }
 =\Gamma_{\ge}.
 \label{eq:ridge-proof-ge-bound}
\end{equation}

Finally, if $v\le u$ and $\delta\ge1$, then
$\Gamma_{\cV_{\pt}}=\gamma_n+(\delta-1)v$.
The two variance constraints alone give
\[
 F(u,v)
 \le
 \frac{(\delta-1)(1-v)+\Gamma_{\cV_{\pt}}}{\delta}
 =
 \frac{\delta-1+\gamma_n}{\delta}
 =\Gamma_{\ge}.
\]
Therefore, every admissible two-stage choice satisfies
\[
 F(u,v)\le\max\{\Gamma_{\le},\Gamma_{\ge}\}
 \le\Gamma_{\rm ridge}^{\star}.
\]
Together with the previously established single-dataset
exponents, this proves that no admissible choice has a
polynomial decay exponent exceeding
\eqref{eq:global-optimal-ridge-exponent}.

\paragraph{Lower bound.}

We show that $u_*, v_*$ are admissible and in fact
\[
 F(u^\star,v^\star)=\Gamma_{\rm ridge}^{\star}.
\]
We first discuss properties of $\Gamma_{\geq}, \Gamma_{\leq}.$ As a function of $\gamma_n,$
$\Gamma_{\le}$ and $\Gamma_{\ge}$ are affine and intersect at
\[
 \gamma_n
 =
 1-\frac{\delta}{1+2(s\wedge(a+b))},
\]
where their common value is
$2(s\wedge(a+b))/(1+2(s\wedge(a+b)))$.
Moreover, $\Gamma_{\ge}$ has strictly larger slope as
\[
\frac{2(s\wedge a)}
 {(1-\delta)_+\bigl[1+2(s\wedge(a+b))\bigr]
   +2\delta(s\wedge a)}
 >
 \frac{2\bigl((s-b)_+\wedge a\bigr)}
 {\bigl[a+b+1-(\delta+1)_+\bigr]
   \bigl[1+2(s\wedge(a+b))\bigr]
   +2\delta\bigl((s-b)_+\wedge a\bigr)}
\]
where both denominators are positive, and we use
\[
 0\le ((s-b)_+\wedge a)\le s\wedge a,
 \qquad s\wedge a>0,
\]
together with
\[
 0<a+b+1-(\delta+1)_+-(1-\delta)_+
 =
 \begin{cases}
  2b, & \delta<-1,\\
  a+b-1, & -1\le\delta\le1,\\
  2a, & \delta>1,
 \end{cases}
\]
which follows from $\delta=b-a$ and $a,b>1$.
Consequently,
\begin{equation}
\begin{aligned}
 \Gamma_{\le}\ge\Gamma_{\ge}
 &\ \Longrightarrow\
 \Gamma_{\le}
 \le
 \frac{2(s\wedge(a+b))}{1+2(s\wedge(a+b))},
 \\
 \Gamma_{\ge}\ge\Gamma_{\le}
 &\ \Longrightarrow\
 \Gamma_{\ge}
 \ge
 \frac{2(s\wedge(a+b))}{1+2(s\wedge(a+b))}.
\end{aligned}
\label{eq:ridge-proof-candidate-crossing}
\end{equation}
Next, we show that by choosing
\[
 v^\star=1-\Gamma_{\rm ridge}^{\star},
 \qquad
 u^\star=
  \begin{cases}
   \displaystyle
   \frac{
   \Gamma_{\rm ridge}^{\star}-2b v^\star
   }{2((s-b)\wedge a)},
   &\Gamma_{\rm ridge}^{\star}=\Gamma_{\le},\\[3mm]
   \displaystyle
   \frac{
   \Gamma_{\rm ridge}^{\star}
   -2((s-a)_+\wedge b)v^\star
   }{2(s\wedge a)},
   &\Gamma_{\rm ridge}^{\star}=\Gamma_{\ge},
  \end{cases}
\] we have 
\[
 \Gamma_{\cB_1}(u^\star,v^\star)
 ,
 \Gamma_{\cV_{\pt}}(u^\star,v^\star)
 ,
 \Gamma_{\cV_{\ft}}(u^\star,v^\star)
 \geq
 \Gamma_{\rm ridge}^{\star}.
\]
First note that, by definition,   $ \Gamma_{\cV_{\ft}}(u^\star,v^\star) = 1- v^\star = \Gamma_{\rm ridge}^\star.$ We then show that
\[
 \Gamma_{\cV_{\pt}}(u^\star,v^\star)
 \ge \Gamma_{\rm ridge}^{\star},
 \qquad
 \Gamma_{\cB_1}(u^\star,v^\star)
 \ge \Gamma_{\rm ridge}^{\star},
\]
and that $u^\star, v^\star$ are admissible, namely
\[
 0<u^\star\le\gamma_n,
 \qquad
 0<v^\star\le1.
\]
Recall that $\Gamma_{\rm ridge}^{\star}$ strictly exceeds
both single-dataset exponents in the case under consideration.

\smallskip
\noindent
\emph{Case $\Gamma_{\rm ridge}^{\star}=\Gamma_{\ge}$.}
We first verify $(u^\star, v^\star)$ are admissible. Using \eqref{eq:ridge-candidate-ge} gives
\[
\begin{aligned}
 0
 &<
 \Gamma_{\rm ridge}^{\star}
 -\frac{2(s\wedge a)\gamma_n}
       {2(s\wedge a)+(1-\delta)_+}
 =
 \frac{
 2(1-\delta)_+((s-a)_+\wedge b)
 +2(s\wedge a)(\delta-1)_+
 }{2(s\wedge a)+(1-\delta)_+}\,v^\star.
\end{aligned}
\]
Since the coefficient of $v^\star$ is nonnegative,
this implies $v^\star>0$.
Also $v^\star<1$, since
$\Gamma_{\rm ridge}^{\star}>0$.
By \eqref{eq:ridge-proof-candidate-crossing},
\[
 u^\star-v^\star
 =
 \frac{
 [1+2(s\wedge(a+b))]\Gamma_{\rm ridge}^{\star}
 -2(s\wedge(a+b))
 }{2(s\wedge a)}
 \ge0.
\]
Thus $0<v^\star\le u^\star$.
The definitions and \eqref{eq:ridge-candidate-ge} also give
\[
\begin{aligned}
 \gamma_n-u^\star
 &=
 \bigl[2(s\wedge a)+(1-\delta)_+-1\bigr]
 (u^\star-v^\star)+
 \bigl[2(s\wedge(a+b))-\delta\bigr]v^\star
 >0.
\end{aligned}
\]
Here both coefficients are positive under
$a,b>1$, $s>0$, and $2s>\delta$.
In this case $(u^\star, v^\star)$ are admissible and satisfy
$v^\star\le u^\star$.

For $\cB_1$, we therefore have
\[
\begin{aligned}
 \Gamma_{\cB_1}(u^\star,v^\star)
 &=
 2(s\wedge a)u^\star
 +2((s-a)_+\wedge b)v^\star=
 \Gamma_{\rm ridge}^{\star}
 -2((s-a)_+\wedge b)v^\star
 +2((s-a)_+\wedge b)v^\star=
 \Gamma_{\rm ridge}^{\star}.
\end{aligned}
\]
For the source variance, using
\[
 s\wedge(a+b)
 =(s\wedge a)+((s-a)_+\wedge b),
 \qquad
 (1-\delta)_++\delta=1+(\delta-1)_+,
\]
we obtain
\[
\begin{aligned}
 \Gamma_{\cV_{\pt}}(u^\star,v^\star)
 &=
 \gamma_n-(1-\delta)_+u^\star
 +(\delta-1)_+v^\star\\
 &=
 \gamma_n-\frac{(1-\delta)_+}{2(s\wedge a)}+
 \left[
 \frac{
 (1-\delta)_+\bigl[1+2(s\wedge(a+b))\bigr]
 +2\delta(s\wedge a)}
 {2(s\wedge a)}
 -1
 \right]v^\star\\
 &=
 \gamma_n-\frac{(1-\delta)_+}{2(s\wedge a)}
 +\frac{(1-\delta)_++2(s\wedge a)(1-\gamma_n)}
       {2(s\wedge a)}
 -v^\star\\
 &=
 1-v^\star
 =\Gamma_{\rm ridge}^{\star}.
\end{aligned}
\]

\smallskip
\noindent
\emph{Case $\Gamma_{\rm ridge}^{\star}=\Gamma_{\le}$.}
In this case, we must have $s>b$;
otherwise $\Gamma_{\le}=2s/(1+2s)$ would equal the
target-only exponent, contrary to the strict comparison.
Thus the denominator in the definition of $u^\star$
is positive.
Moreover,
$\Gamma_{\rm ridge}^{\star}>2b/(1+2b)$ implies
\[
 u^\star
 =
 \frac{(1+2b)\Gamma_{\rm ridge}^{\star}-2b}
      {2((s-b)\wedge a)}
 >0.
\]
By \eqref{eq:ridge-proof-candidate-crossing},
\[
 u^\star-v^\star
 =
 \frac{
 [1+2(s\wedge(a+b))]\Gamma_{\rm ridge}^{\star}
 -2(s\wedge(a+b))
 }{2((s-b)\wedge a)}
 \le0.
\]
Hence $0<u^\star\le v^\star$.
Also $v^\star<1$, since
$\Gamma_{\rm ridge}^{\star}>0$.
The definitions and \eqref{eq:ridge-candidate-le} give
\[
\begin{aligned}
 \gamma_n-u^\star
 &=
 \bigl[a+b-(\delta+1)_+
       +2((s-b)\wedge a)\bigr]u^\star
 +(\delta+1)_+v^\star
 >0.
\end{aligned}
\]
Indeed, the coefficient of $u^\star$ is positive because
\[
 a+b-(\delta+1)_+
 =\min\{a+b,2a-1\}>0.
\]
Thus $(u^\star, v^\star)$ are admissible and satisfy
$u^\star\le v^\star$.

For the leading bias, using $s>b$, we obtain
\[
\begin{aligned}
 \Gamma_{\cB_1}(u^\star,v^\star)
 &=
 2((s-b)\wedge a)u^\star+2bv^\star=
 \Gamma_{\rm ridge}^{\star}-2bv^\star+2bv^\star
 =
 \Gamma_{\rm ridge}^{\star}.
\end{aligned}
\]
For the source variance, using
\[
 s\wedge(a+b)=b+((s-b)\wedge a),
 \qquad \delta=b-a,
\]
we have
\[
\begin{aligned}
 \Gamma_{\cV_{\pt}}(u^\star,v^\star)
 &=
 \gamma_n-\bigl[a+b+1-(\delta+1)_+\bigr]u^\star
 +\bigl[2b-(\delta+1)_+\bigr]v^\star\\
 &=
 \gamma_n
 -\frac{a+b+1-(\delta+1)_+}{2((s-b)\wedge a)}+
 \left[
 \frac{
 \bigl[a+b+1-(\delta+1)_+\bigr]
 \bigl[1+2(s\wedge(a+b))\bigr]
 +2\delta((s-b)\wedge a)}
 {2((s-b)\wedge a)}
 -1
 \right]v^\star\\
 &=
 \gamma_n
 -\frac{a+b+1-(\delta+1)_+}{2((s-b)\wedge a)}+
 \frac{
 a+b+1-(\delta+1)_+
 +2((s-b)\wedge a)(1-\gamma_n)}
 {2((s-b)\wedge a)}
 -v^\star\\
 &=
 1-v^\star
 =\Gamma_{\rm ridge}^{\star}.
\end{aligned}
\]
The last substitution uses
$v^\star=1-\Gamma_{\le}$ and
\eqref{eq:ridge-candidate-le}.

In both cases, the proposed cutoffs are admissible and
satisfy
\[
 \Gamma_{\cB_1}(u^\star,v^\star)
 =
 \Gamma_{\cV_{\pt}}(u^\star,v^\star)
 =
 \Gamma_{\cV_{\ft}}(u^\star,v^\star)
 =
 \Gamma_{\rm ridge}^{\star}.
\]
Consequently,
\[
 F(u^\star,v^\star)=\Gamma_{\rm ridge}^{\star},
\]
which establishes that picking $(u^\star, v^\star)$ achieves the optimal scaling exponent.
\end{proof}

\subsection{Proof of
\Cref{cor:minimax-optimal-ridge-transfer}}
Suppose $\delta>1$ and $\gamma_c<\gamma_n<1$.
Since
\[
 \gamma_n>\gamma_c
 =1-\frac{\delta}{1+2s}
 \ge
 1-\frac{\delta}{1+2(s\wedge(a+b))},
\]
we have $\Gamma_{\ge}>\Gamma_{\le}$.
Indeed, $\delta>1$ and $\delta=b-a$ imply
\[
 (1-\delta)_+=0,
 \qquad
 a+b+1-(\delta+1)_+=2a.
\]
Substituting these identities into
\eqref{eq:ridge-candidate-le} and
\eqref{eq:ridge-candidate-ge} gives
\[
\begin{aligned}
 \Gamma_{\ge}-\Gamma_{\le}
 &=
 \frac{
 a+((s-b)_+\wedge a)(1-\gamma_n)}
 {a[1+2(s\wedge(a+b))]
  +\delta((s-b)_+\wedge a)}
 -\frac{1-\gamma_n}{\delta}\\
 &=
 \frac{
 a\bigl\{\delta-[1+2(s\wedge(a+b))](1-\gamma_n)\bigr\}}
 {\delta\bigl\{
 a[1+2(s\wedge(a+b))]
 +\delta((s-b)_+\wedge a)
 \bigr\}}
 >0.
\end{aligned}
\]
Here the denominator is positive, and the numerator is
positive because
\[
 \delta-[1+2(s\wedge(a+b))](1-\gamma_n)
 =
 [1+2(s\wedge(a+b))]
 \left(
 \gamma_n-1+\frac{\delta}{1+2(s\wedge(a+b))}
 \right)
 >0.
\]
Moreover,
\[
 \Gamma_{\ge}
 =\frac{\delta-1+\gamma_n}{\delta}
 >
 \max\left\{\gamma_n,\frac{2s}{1+2s}\right\},
\]
so $\Gamma_{\ge}$ also strictly exceeds both
single-dataset exponents.
Thus \Cref{cor:minimax-optimal-ridge-transfer-full} yields
\[
 \Gamma_{\rm ridge}^{\star}
 =\frac{\delta-1+\gamma_n}{\delta},
 \qquad
 v^\star=\frac{1-\gamma_n}{\delta}.
\]

We finally argue that $u = \gamma_n$ is optimal. Note that the optimal $(u^\star, v^\star)$ by
\Cref{cor:minimax-optimal-ridge-transfer-full}
satisfies $v^\star\le u^\star\le\gamma_n$.
For fixed $v=v^\star$ and $v^\star\le u\le\gamma_n$,
the two variance exponents are
\[
\Gamma_{\cV_{\ft}}=1-v^\star,
\qquad
\Gamma_{\cV_{\pt}}=\gamma_n+(\delta-1)v^\star,
\]
and hence do not depend on $u$.
Meanwhile, the leading-bias exponent
\[
\Gamma_{\cB_1}
=
2(s\wedge a)u
+
2((s-a)_+\wedge b)v^\star
\]
is nondecreasing in $u$.
Since $\Gamma_{\cB_2}>\Gamma_{\cV_{\pt}}$, the second bias
term does not determine the overall exponent.
Consequently, replacing $u^\star$ in Corollary \ref{cor:minimax-optimal-ridge-transfer-full} by $\gamma_n$ preserves
the optimal exponent, as $\Gamma^*_{\rm ridge}$ is the minimal over the three exponents. 

Thus, the choice $u=\gamma_n$,
$v=\frac{1-\gamma_n}{\delta}$ is also optimal. By \eqref{eq:u-v-def}, this choice is realized by
\[
\lambda_{\pt}\asymp n_{\pt}^{1-a},
\qquad
\lambda_{\ft}\asymp n^{1-b(1-\gamma_n)/\delta},
\]
which proves the Corollary \ref{cor:minimax-optimal-ridge-transfer}.


\section{Proof of Lemma \ref{lem:sgd-scaling-main}}
\label{sec:gd-sgd-transfer}
To prove \Cref{lem:sgd-scaling-main}, we first 
condition on the pretrained source estimator and apply the
last-iterate SGD upper bound in \citep[Corollary 3.4]{wu2022power}. This reduces the upper bound to two
terms that depend on the source ridge estimator. We then compute the deterministic equivalent of these two terms, and evaluate the scaling law. Finally, we derive a matching lower bound.

Before going into the proof, we define some shorthand notations. Recall the power-law model in
\eqref{eq:power-law-spectra} and \eqref{eq:source-conditions}, with
\[
 \delta=b-a>1,\qquad
 \gamma_{\rm c}=1-\frac{\delta}{1+2s}<\gamma_n<1,
 \qquad 2s>b.
\]
Also by the condition that $2s>b$ we have
\begin{equation}
 \|\bbeta_*\|_2^2
 \asymp\sum_{k\ge1}k^{b-1-2s}<\infty.
 \label{eq:sgd-square-summable-signal}
\end{equation}
The signal follows a power law, with
$s=br_{\ft}=ar_{\pt}+\delta/2$, and both noise variances are  $O(1).$ 
For simplicity define
\begin{equation}
 m:=n_{\pt},\qquad 
 N:=\left\lfloor\frac{n}{\log_2 n}\right\rfloor,\qquad
 K:=\left(\frac Nm\right)^{1/\delta},\qquad
 v:=\frac{1-\gamma_n}{\delta}.
 \label{eq:sgd-proof-scales}
\end{equation}

We first derive the order of certain quantities that will be frequently used in the proof. A direct computation gives
\begin{equation}
 \frac KN=\frac1mK^{1-\delta}
 =\widetilde\Theta(n^{-\Gamma_{\rm TL}}),
 \qquad
 \Gamma_{\rm TL}:=\frac{\delta-1+\gamma_n}{\delta}.
 \label{eq:sgd-balanced-variances}
\end{equation}
Moreover, since $\delta>1$ and $2s>\delta$,
\[
 \gamma_{\rm c}-\frac1{1+\delta}
 =\frac{\delta(2s-\delta)}{(1+2s)(1+\delta)}>0.
\]
Hence $\gamma_n>\gamma_{\rm c}>1/(1+\delta)$, which implies
\[
 1-\gamma_n<\delta\gamma_n,
 \qquad
 v=\frac{1-\gamma_n}{\delta}<\gamma_n.
\]
Using $N\asymp n/\log n$ and $m\asymp n^{\gamma_n}$, we obtain
\[
 \frac Km
 =\frac{(N/m)^{1/\delta}}m
 \asymp
 n^{v-\gamma_n}(\log n)^{-1/\delta}
 \longrightarrow0.
\]

\subsection{Upper bound}
\label{app:sgd-rd-upper}

\paragraph{Step 1: Upper bound conditioned on source dataset.}

Conditional on the source estimator $\widehat \bbeta_\pt$, we write
$\be_0:=\widehat\bbeta_{\pt}-\bbeta_*,$ which is independent of the target dataset. Recall the geometric schedule
in \eqref{eq:sgd-optimal-tuning-main}--\eqref{eq:sgd-optimal-step-main}, use the shorthand $\xi_{z,k}:=(\bSigma_{z})_{k, k}$ ($z\in\{\pt, \ft\}$),
and define the following quantities
\begin{equation}
 \bP:=\prod_{t=1}^n(I-\eta_t\bSigma_{\ft}),\qquad
 p_k:=\prod_{t=1}^n(1-\eta_t\xi_{\ft,k}),\qquad
 \bC_P:=\bP\bSigma_{\ft}\bP,
 \label{eq:sgd-population-filter}
\end{equation}
as well as
\begin{equation}
 \begin{aligned}
 J&:=\{k:\xi_{\ft,k}\ge(N\eta_0)^{-1}=K^{-b}\},\\
 \bQ&:=K^{-b}I_J+(\bSigma_{\ft})_{J^c},\\
 D_{\rm eff}&:=|J|+K^{2b}\sum_{k\notin J}\xi_{\ft,k}^2,
 \end{aligned}
 \label{eq:sgd-effective-dimension}
\end{equation}
where $I_J$ and $(\bSigma_{\ft})_{J^c}$ are given by
\[
 [I_J]_{kk}:=\mathbf{1}_{\{k\in J\}},
 \qquad
 [(\bSigma_{\ft})_{J^c}]_{kk}
 :=\xi_{\ft,k}\mathbf{1}_{\{k\notin J\}},
 \qquad k\ge1.
\]

Next, we apply the last-iterate upper bound in
\cite[Corollary~3.4]{wu2022power}. To do so, we check that the relevant assumptions hold. \citep[Assumption 1A]{ wu2022power} and \citep[Assumption 2]{ wu2022power} trivially follow from  Assumption \ref{ass:sgd-fourth-moment-main}, and the way the target labels are generated in our model. It remains to verify $\eta_0 < \frac{1}{4 C \Tr(\bSigma_\ft)}.$ Note that
\[
 \gamma_{\rm c}-\frac ab
 =\frac{\delta(1+2s-b)}{b(1+2s)}>0,
\]
and hence
\[
 \eta_0=\frac{K^b}{N}
 =N^{a/\delta}m^{-b/\delta}
 \asymp n^{(a-b\gamma_n)/\delta}(\log n)^{-a/\delta}
 \longrightarrow0.
\]
Since $\Tr(\bSigma_{\ft})=O(1)$, there exists $N_0$ that only depends on $C, a, b, \gamma_n$ such that for large enough $n > N_0,$
$\eta_0<\min(1/2, [4C\Tr(\bSigma_{\ft})]^{-1})$, where $C$ is the constant
in Assumption \ref{ass:sgd-fourth-moment-main}.

Applying \cite[Corollary~3.4, upper bound]{wu2022power} to the initialization $\widehat \bbeta_\pt$ yields
\begin{equation}
 \begin{aligned}
 &\E\!\left[
 \|\widehat\bbeta_n-\bbeta_*\|_{\bSigma_{\ft}}^2
 \mid\mathcal D_{\pt}\right] \lesssim
 \|\bP\be_0\|_{\bSigma_{\ft}}^2
 +\frac{D_{\rm eff}}{N}
 \left(\sigma_{\ft}^2+\|\be_0\|_{\bQ}^2\right),
 \end{aligned}
 \label{eq:sgd-conditional-target-upper}
\end{equation} where we denote $\mathcal D_{\pt} = \{\bX_\pt, \by_\pt\}$, and $\lesssim$ absorbs universal constants.

The power-law spectrum gives
\[
\begin{aligned}
 D_{\rm eff}
 &=
 \sum_{k\ge1}\min\{1,K^{2b}\xi_{\ft,k}^2\}\asymp
 \sum_{k\le K}1+K^{2b}\sum_{k>K}k^{-2b}
 \asymp K+K^{2b}K^{1-2b}
 \asymp K.
\end{aligned}
\]
Moreover, for every $k$,
\[
 0\le[\bQ]_{kk}
 =\min\{K^{-b},\xi_{\ft,k}\}
 \le\xi_{\ft,k}.
\]
Since $0\le\eta_t\xi_{\ft,k}\le
\eta_0\|\bSigma_{\ft}\|_{\op}<1/2$, we also have
\[
 0\le[\bC_P]_{kk}
 =\xi_{\ft,k}p_k^2
 =\xi_{\ft,k}\prod_{t=1}^n(1-\eta_t\xi_{\ft,k})^2
 \le\xi_{\ft,k}.
\]
Thus,
\begin{equation}
 D_{\rm eff}\asymp K,
 \qquad 0\preceq\bQ\preceq\bSigma_{\ft},
 \qquad 0\preceq\bC_P\preceq\bSigma_{\ft}.
 \label{eq:sgd-target-weight-bounds}
\end{equation}

We now average over the source label noise while keeping the source
design fixed. Let
\[
 \bH_{\pt}:=(\bX_{\pt}^{\top}\bX_{\pt}+\lambda_{\pt}I)^{-1}.
\]
We compute
\[
\begin{aligned}
 \be_0
 &=
 \bH_{\pt}\bX_{\pt}^\top\by_{\pt}-\bbeta_*=
 (\bH_{\pt}\bX_{\pt}^\top\bX_{\pt}-\bI)\bbeta_*
 +\bH_{\pt}\bX_{\pt}^\top\boldsymbol{\varepsilon}_{\pt}=
 -\lambda_{\pt}\bH_{\pt}\bbeta_*
 +\bH_{\pt}\bX_{\pt}^\top\boldsymbol{\varepsilon}_{\pt}.
\end{aligned}
\]
Averaging over the source noise gives
\begin{equation}
\begin{aligned}
 \bS_0
 &:=
 \E_{\eps_\pt}[\be_0\be_0^\top\mid\bX_{\pt}]\\
 &=
 \lambda_{\pt}^2\bH_{\pt}\bbeta_*\bbeta_*^\top\bH_{\pt}
 +\bH_{\pt}\bX_{\pt}^\top
 \E[\boldsymbol{\varepsilon}_{\pt}
       \boldsymbol{\varepsilon}_{\pt}^\top\mid\bX_{\pt}]
 \bX_{\pt}\bH_{\pt}\\
 &=
 \lambda_{\pt}^2\bH_{\pt}\bbeta_*\bbeta_*^\top\bH_{\pt}
 +\sigma_{\pt}^2\bH_{\pt}\bX_{\pt}^\top
 \bX_{\pt}\bH_{\pt}.
\end{aligned}
\label{eq:sgd-source-second-moment}
\end{equation}
Thus, 
\begin{equation}
\begin{aligned}
 \E_{\eps_\pt}\!\left[
 \|\widehat\bbeta_n-\bbeta_*\|_{\bSigma_{\ft}}^2
 \mid\bX_{\pt}\right]
 &\lesssim
 \E_{\eps_\pt}[\|\bP\be_0\|_{\bSigma_{\ft}}^2\mid\bX_{\pt}]
 +\frac{D_{\rm eff}}{N}
 \left(
 \sigma_{\ft}^2+
 \E_{\eps_\pt}[\|\be_0\|_{\bQ}^2\mid\bX_{\pt}]
 \right)\\
 &=
 \Tr(\bC_P\bS_0)
 +\frac{D_{\rm eff}}{N}
 \left(\sigma_{\ft}^2+\Tr(\bQ\bS_0)\right)\\
 &\lesssim
 \Tr(\bC_P\bS_0)
 +\frac KN
 \left(\sigma_{\ft}^2+\Tr(\bSigma_{\ft}\bS_0)\right).
\end{aligned}
\label{eq:sgd-conditional-design-upper}
\end{equation}
The last inequality uses $D_{\rm eff}\asymp K$ and
\[
 \Tr(\bSigma_{\ft}\bS_0)-\Tr(\bQ\bS_0)
 =
 \Tr\!\left(
 \bS_0^{1/2}(\bSigma_{\ft}-\bQ)\bS_0^{1/2}
 \right)\ge0,
\]
since $\bS_0\succeq0$ and $\bQ\preceq\bSigma_{\ft}$.
The remaining task for the upper bound is therefore to compute the two terms:
$\Tr(\bC_P\bS_0)$ and $\Tr(\bSigma_{\ft}\bS_0)$.

\paragraph{Step 2: Deterministic equivalents for the source estimator.}
Choose $\lambda_{\pt}=\kappa m^{1-a}$ and let
$\tau:=\lamonmu_{\pt}$ be its effective regularization.
By \Cref{prop:generic-effective-ridge,prop:uniform-denominator-scaling},
\begin{equation}
 \tau\asymp m^{-a},\qquad
 c_{\pt}:=
 \left\{m-\Tr\!\left[
 \bSigma_{\pt}^2(\bSigma_{\pt}+\tau I)^{-2}
 \right]\right\}^{-1}\asymp m^{-1}.
 \label{eq:sgd-source-cutoff-fixed}
\end{equation}
For a deterministic diagonal positive semidefinite $\bC$,
define
\begin{align}
 \bW_{\pt}
 &:=\bSigma_{\pt}(\bSigma_{\pt}+\tau I)^{-2},\nonumber\\
 A(\bC)
 &:=\tau^2\bbeta_*^\top
 (\bSigma_{\pt}+\tau I)^{-1}\bC
 (\bSigma_{\pt}+\tau I)^{-1}\bbeta_*,\nonumber\\
 L(\bC)
 &:=A(\bC)+c_{\pt}
 \left(\sigma_{\pt}^2+\tau^2\bbeta_*^\top\bW_{\pt}\bbeta_*\right)
 \Tr(\bC\bW_{\pt}).
 \label{eq:sgd-weighted-source-equivalent}
\end{align}
We further define the following quantities that will appear in the approximation error
\begin{equation}
 \chi_m:=\tau^2\bbeta_*^\top
 (\bSigma_{\pt}+\tau I)^{-1}\bbeta_*,\qquad
 \varepsilon_m:=
 \frac{\rho_{\lambda_{\pt}}(m)^6\log^{5/2}m}{\sqrt m}.
 \label{eq:sgd-source-error-quantities}
\end{equation}

Next we apply \cite[Theorem~6, equation~(40), and Theorem~5, equation~(37)]{wu2026improved} to obtain the deterministic equivalence. We verify assumptions in \cite[Theorem~6, and Theorem~5]{wu2026improved} at the chosen source regularization. By definition in \eqref{eq:rho_lambda},
\[
 \rho_{\lambda_{\pt}}(m)
 =1+O\!\left(\frac{m^{1-a}\log m}{\lambda_{\pt}}\right)
 =O(\log m).
\]
To see this, take $j=\lfloor\vartheta m\rfloor$, with the fixed
$\vartheta\in(0,1/2)$ appearing in the definition of $\rho$, and use
\[
 \sum_{k\ge j}\xi_{\pt,k}\asymp m^{1-a},\qquad
 \frac{\sum_{k\ge j}\xi_{\pt,k}}{\xi_{\pt,j}}\asymp m.
\]
Consequently,
\begin{equation}
 \varepsilon_m=O(m^{-1/2}\log^{17/2}m)=o(1),
 \qquad
 \chi_m\le\tau\|\bbeta_*\|_2^2=O(m^{-a}).
 \label{eq:sgd-source-epsilon}
\end{equation}
Thus, for large enough $m$, we have that $\lambda_{\pt}\rho_{\lambda_{\pt}}(m)\gtrsim m^{1-a}$
dominates $\|\bSigma_{\pt}\|_{\op}m^{-K_0}$ for fixed $K_0>a-1$,
and $\rho_{\lambda_{\pt}}(m)^{5/2}\log^{3/2}m=o(\sqrt m),$ which implies that the assumptions holds.

Applying 
\cite[Theorem~6, equation~(40), and Theorem~5, equation~(37)]{wu2026improved} gives
\begin{equation}
 \left|\Tr(\bC\bS_0)-L(\bC)\right|
 \le C_D\varepsilon_m\left[
 L(\bC)+\chi_m
 \|\bC(\bSigma_{\pt}+\tau I)^{-1}\|_{\op}\right],
 \label{eq:sgd-weighted-source-error}
\end{equation}
with probability at least $1-C_Dm^{-D}$ over the source design,
for every fixed $D>0$. 

\paragraph{Step 3: Evaluation of the deterministic equivalents and approximation errors.}

We first control $\Tr(\bC_P\bS_0)$ through its deterministic
equivalent
\[
 L(\bC_P)
 =
 A(\bC_P)
 +c_{\pt}
 \left(
 \sigma_{\pt}^2+\tau^2\bbeta_*^\top\bW_{\pt}\bbeta_*
 \right)
 \Tr(\bC_P\bW_{\pt}).
\]
Since $c_{\pt}\asymp m^{-1}$ and
$K/N=m^{-1}K^{1-\delta}$, we will show that
\[
\begin{aligned}
 &\Tr(\bC_P\bW_{\pt})
 \asymp K^{1-\delta}, \quad
 A(\bC_P)
 =o(K/N), \quad 
\tau^2\bbeta_*^\top\bW_{\pt}\bbeta_*
 =O(m^{-a}), \\
 &\|\bC_P(\bSigma_{\pt}+\tau I)^{-1}\|_{\op}
 \lesssim K^{-\delta},\quad \chi_m \lesssim m^{-a}.
\end{aligned}
\]
For the term $\Tr(\bC_P \bW_\pt)$,
we first estimate the order of  $p_k$.
The geometric schedule satisfies
\[
 K^b=N\eta_0
 \le\sum_{t=1}^n\eta_t
 \le2N\eta_0=2K^b.
\]
Since $0\le\eta_t\xi_{\ft,k}<1/2$, the inequalities
$-2x\le\log(1-x)\le-x$ give
\[
 -2\xi_{\ft,k}\sum_{t=1}^n\eta_t
 \le\log p_k
 \le-\xi_{\ft,k}\sum_{t=1}^n\eta_t.
\]
Using $\xi_{\ft,k}\asymp k^{-b}$, we obtain
\begin{equation}
 e^{-C_1(K/k)^b}
 \le p_k
 \le e^{-c_1(K/k)^b}
 \label{eq:sgd-fixed-filter}
\end{equation}
for fixed positive constants $c_1,C_1$. Combining this estimate with
$\xi_{\pt,k}\asymp k^{-a}$ and $\tau\asymp m^{-a}$,
we have
\[
\begin{aligned}
 \Tr(\bC_P\bW_{\pt})
 &=
 \sum_{k\ge1}
 \frac{p_k^2\xi_{\ft,k}\xi_{\pt,k}}
      {(\xi_{\pt,k}+\tau)^2}\\
 &\lesssim
 \sum_{k\le m}e^{-c(K/k)^b}k^{-\delta}
 +m^{2a}\sum_{k>m}k^{-a-b}\\
 &\lesssim
 K^{1-\delta}+m^{1-\delta}
 \lesssim K^{1-\delta}.
\end{aligned}
\]
Indeed, boundedness of $x^\delta e^{-cx^b}$ and $\delta>1$
give
\[
 \sum_{k\le m}e^{-c(K/k)^b}k^{-\delta}
 \lesssim
 K^{-\delta}\sum_{k\le K}1
 +\sum_{k>K}k^{-\delta}
 \lesssim K^{1-\delta}.
\]
Also,
$m^{2a}\sum_{k>m}k^{-a-b}\asymp m^{1-\delta}$,
and
$m^{1-\delta}=o(K^{1-\delta})$ because $K=o(m)$
and $\delta>1$.

For the matching lower bound, restrict to
$\lceil K\rceil\le k\le\lfloor2K\rfloor$.
On this range, $p_k\gtrsim1$ and
$\tau/\xi_{\pt,k}\asymp(K/m)^a=o(1)$.
Consequently,
\[
 \Tr(\bC_P\bW_{\pt})
 \gtrsim
 \sum_{k=\lceil K\rceil}^{\lfloor2K\rfloor}
 \frac{\xi_{\ft,k}}{\xi_{\pt,k}}
 \asymp
 \sum_{k=\lceil K\rceil}^{\lfloor2K\rfloor}k^{-\delta}
 \asymp K^{1-\delta}.
\]
For the term $\|\bC_P(\bSigma_{\pt}+\tau I)^{-1}\|_{\op}$, a direct computation gives
\[
\begin{aligned}
 \|\bC_P(\bSigma_{\pt}+\tau I)^{-1}\|_{\op}
 =
 \sup_k
 \frac{p_k^2\xi_{\ft,k}}{\xi_{\pt,k}+\tau}\le
 \sup_k\frac{p_k^2\xi_{\ft,k}}{\xi_{\pt,k}}\lesssim
 \sup_k e^{-c(K/k)^b}k^{-\delta}=
 K^{-\delta}
 \sup_k (K/k)^\delta e^{-c(K/k)^b}
 \lesssim K^{-\delta}.
\end{aligned}
\]
Together with the trace estimate, this gives
\begin{equation}
 \frac1m\Tr(\bC_P\bW_{\pt})
 \asymp\frac1mK^{1-\delta},
 \qquad
 \|\bC_P(\bSigma_{\pt}+\tau I)^{-1}\|_{\op}
 \lesssim K^{-\delta}.
 \label{eq:sgd-source-spectral-bounds}
\end{equation}
For the remaining terms that depend on $\bbeta_*$,
by \eqref{eq:sgd-square-summable-signal},
\[
 \chi_m
 =
 \tau^2\bbeta_*^\top
 (\bSigma_{\pt}+\tau I)^{-1}\bbeta_*
 \le\tau\|\bbeta_*\|_2^2
 =O(m^{-a}).
\]
For any diagonal positive semidefinite $\bC$,
\begin{equation}
\begin{aligned}
 A(\bC)
 &=
 \tau^2\sum_{k\ge1}
 \frac{\bbeta_*[k]^2[\bC]_{kk}}
      {(\xi_{\pt,k}+\tau)^2}
 \le
 \left(
 \tau^2\sum_{k\ge1}
 \frac{\bbeta_*[k]^2}{\xi_{\pt,k}+\tau}
 \right)
 \sup_k\frac{[\bC]_{kk}}{\xi_{\pt,k}+\tau}
 =
 \chi_m
 \|\bC(\bSigma_{\pt}+\tau I)^{-1}\|_{\op}.
\end{aligned}
\label{eq:sgd-weighted-bias-bound}
\end{equation}
Applying this inequality to $\bC_P$ yields
\[
 A(\bC_P)\lesssim m^{-a}K^{-\delta},
 \qquad
 \frac{A(\bC_P)}{K/N}
 \lesssim
 \frac{m^{-a}K^{-\delta}}
      {m^{-1}K^{1-\delta}}
 =
 \frac{m^{1-a}}K
 \longrightarrow0.
\]
Finally, the inequality
$(x+\tau)^2\ge4x\tau$ gives
\[
\begin{aligned}
 0\le
 \tau^2\bbeta_*^\top\bW_{\pt}\bbeta_*
 &=
 \sum_{k\ge1}
 \frac{\tau^2\xi_{\pt,k}}
      {(\xi_{\pt,k}+\tau)^2}\bbeta_*[k]^2
 \le
 \frac{\tau}{4}\|\bbeta_*\|_2^2
 =O(m^{-a}).
\end{aligned}
\]
Since $\sigma_{\pt}^2$ is fixed and strictly positive,
\[
 \sigma_{\pt}^2+
 \tau^2\bbeta_*^\top\bW_{\pt}\bbeta_*
 =
 \sigma_{\pt}^2+O(m^{-a})
 \asymp1.
\]

\smallskip
\noindent
Substituting the above estimates into $L(\bC_P)$ gives
\[
\begin{aligned}
 L(\bC_P)
 &=
 A(\bC_P)
 +c_{\pt}
 \left(
 \sigma_{\pt}^2+\tau^2\bbeta_*^\top\bW_{\pt}\bbeta_*
 \right)
 \Tr(\bC_P\bW_{\pt})=
 o(K/N)+\Theta(m^{-1}K^{1-\delta})
 \asymp K/N.
\end{aligned}
\]
For the approximation error, we  have
\begin{equation}
\begin{aligned}
 \frac{|\Tr(\bC_P\bS_0)-L(\bC_P)|}{K/N}
 &\lesssim
 \varepsilon_m
 \left[
 \frac{L(\bC_P)}{K/N}
 +
 \frac{\chi_m K^{-\delta}}
      {m^{-1}K^{1-\delta}}
 \right]\lesssim
 \varepsilon_m
 \left(1+\frac{m\chi_m}{K}\right)\lesssim
 \varepsilon_m
 \left(1+\frac{m^{1-a}}K\right)
 =o(1).
\end{aligned}
\label{eq:sgd-source-accuracy}
\end{equation}
Here we used $\varepsilon_m=o(1)$ and $m^{1-a}/K\to0$.
Since $L(\bC_P)\asymp K/N$, it follows that
\begin{equation}
 \Tr(\bC_P\bS_0)
 =
 L(\bC_P)(1+o(1))
 \asymp\frac KN.
 \label{eq:sgd-propagated-source-bound}
\end{equation}

Next, we control $\Tr(\bSigma_{\ft}\bS_0)$.
Recall that
\[
 L(\bSigma_{\ft})
 =
 A(\bSigma_{\ft})
 +c_{\pt}
 \left(
 \sigma_{\pt}^2+\tau^2\bbeta_*^\top\bW_{\pt}\bbeta_*
 \right)
 \Tr(\bSigma_{\ft}\bW_{\pt}).
\]
We already know that $c_{\pt}\asymp m^{-1}$,
$\tau^2\bbeta_*^\top\bW_{\pt}\bbeta_*=O(m^{-a})$,
and $\chi_m=O(m^{-a})$.
It therefore suffices to establish
\[
 \Tr(\bSigma_{\ft}\bW_{\pt})=O(1),
 \qquad
 \|\bSigma_{\ft}(\bSigma_{\pt}+\tau I)^{-1}\|_{\op}=O(1),
 \qquad
 A(\bSigma_{\ft})=O(1),
\]
and then apply \eqref{eq:sgd-weighted-source-error}.

A direct computation gives
\[
\begin{aligned}
 \Tr(\bSigma_{\ft}\bW_{\pt})
 =
 \sum_{k\ge1}
 \frac{\xi_{\ft,k}\xi_{\pt,k}}
      {(\xi_{\pt,k}+\tau)^2}\le
 \sum_{k\ge1}\frac{\xi_{\ft,k}}{\xi_{\pt,k}}
\lesssim
 \sum_{k\ge1}k^{-\delta}
 =O(1),
\end{aligned}
\]
and
\[
\begin{aligned}
 A(\bSigma_{\ft})
 =
 \sum_{k\ge1}
 \left(\frac{\tau}{\xi_{\pt,k}+\tau}\right)^2
 \xi_{\ft,k}\bbeta_*[k]^2\le
 \sum_{k\ge1}\xi_{\ft,k}\bbeta_*[k]^2=\|\bbeta_*\|_{\bSigma_{\ft}}^2\le\|\bSigma_{\ft}\|_{\op}\|\bbeta_*\|_2^2
 =O(1).
\end{aligned}
\]

Substituting these bounds into the deterministic equivalent,
we obtain
\[
\begin{aligned}
 L(\bSigma_{\ft})
 &=
 A(\bSigma_{\ft})
 +c_{\pt}
 \left(
 \sigma_{\pt}^2+\tau^2\bbeta_*^\top\bW_{\pt}\bbeta_*
 \right)
 \Tr(\bSigma_{\ft}\bW_{\pt})=
 A(\bSigma_{\ft})+O(m^{-1})
 =O(1).
\end{aligned}
\]
Moreover,
\[
 \chi_m
 \|\bSigma_{\ft}(\bSigma_{\pt}+\tau I)^{-1}\|_{\op}
 =O(m^{-a}).
\]
Hence, on the same event,
\eqref{eq:sgd-weighted-source-error} yields
\[
\begin{aligned}
 &\left|
 \Tr(\bSigma_{\ft}\bS_0)-L(\bSigma_{\ft})
 \right|\le
 C_D\varepsilon_m
 \left[
 L(\bSigma_{\ft})
 +\chi_m
 \|\bSigma_{\ft}(\bSigma_{\pt}+\tau I)^{-1}\|_{\op}
 \right]=O(\varepsilon_m)=o(1).
\end{aligned}
\]
Consequently,
\begin{equation}
 \Tr(\bSigma_{\ft}\bS_0)
 =
 L(\bSigma_{\ft})+O(\varepsilon_m)
 =O(1).
 \label{eq:sgd-initial-energy-bound}
\end{equation}

\paragraph{Step 4: Compute the expectation.}
For every source design $\bX_\pt$,
\[
 \|\lambda_{\pt}\bH_{\pt}\|_{\op}\le1,\qquad
 \bH_{\pt}\bX_{\pt}^{\top}\bX_{\pt}\bH_{\pt}
 \preceq\frac1{4\lambda_{\pt}}I.
\]
Therefore, for either $\bC=\bC_P$ or $\bC=\bSigma_{\ft}$,
\[
 \begin{aligned}
 0\le\Tr(\bC\bS_0)
 \le\|\bSigma_{\ft}\|_{\op}\|\bbeta_*\|_2^2
 +\frac{\sigma_{\pt}^2\Tr(\bSigma_{\ft})}{4\lambda_{\pt}}=O(1+m^{a-1}).
 \end{aligned}
\]
Choose the 
exponent
$D>a-1+\Gamma_{\rm TL}/\gamma_n$ in
\eqref{eq:sgd-weighted-source-error}, and recall that $\frac{K}{N} = \tilde \bTheta(n^{-\Gamma_{\rm TL}})$. Let $\mathcal E$ denote the common source-design event on which
\eqref{eq:sgd-propagated-source-bound} and
\eqref{eq:sgd-initial-energy-bound} hold, so that
$\P(\mathcal E^c)\le C_Dm^{-D}$.
For either $\bC=\bC_P$ or $\bC=\bSigma_{\ft}$,
the uniform bound above gives
\[
\begin{aligned}
 \E_{\bX_{\pt}}\!\left[
 \Tr(\bC\bS_0)\mathbf{1}_{\mathcal E^c}
 \right]
 \lesssim
 (1+m^{a-1})\P(\mathcal E^c)\lesssim
 m^{a-1-D}
 =o(K/N).
\end{aligned}
\]
Indeed, by the choice $D>a-1+\Gamma_{\rm TL}/\gamma_n$,
\[
 \frac{m^{a-1-D}}{K/N}
 =
 \widetilde O\!\left(
 n^{\gamma_n(a-1-D)+\Gamma_{\rm TL}}
 \right)
 \longrightarrow0.
\]

We now split the expectation over $\mathcal E$ and $\mathcal E^c$.
The upper bound on $\mathcal E$ yields
\[
\begin{aligned}
 \E_{\bX_{\pt}}\Tr(\bC_P\bS_0)
 &=
 \E_{\bX_{\pt}}\!\left[
 \Tr(\bC_P\bS_0)\mathbf{1}_{\mathcal E}
 \right]
 +
 \E_{\bX_{\pt}}\!\left[
 \Tr(\bC_P\bS_0)\mathbf{1}_{\mathcal E^c}
 \right]\lesssim
 \frac KN+o(K/N)
 \lesssim\frac KN.
\end{aligned}
\]
Conversely, nonnegativity and the lower bound on $\mathcal E$ give
\[
\begin{aligned}
 \E_{\bX_{\pt}}\Tr(\bC_P\bS_0)
 \ge
 \E_{\bX_{\pt}}\!\left[
 \Tr(\bC_P\bS_0)\mathbf{1}_{\mathcal E}
 \right]\gtrsim
 \frac KN\,\P(\mathcal E)
 \asymp\frac KN.
\end{aligned}
\]
Thus
$\E_{\bX_{\pt}}\Tr(\bC_P\bS_0)\asymp K/N$.
Similarly,
\[
\begin{aligned}
 \E_{\bX_{\pt}}\Tr(\bSigma_{\ft}\bS_0)
 &=
 \E_{\bX_{\pt}}\!\left[
 \Tr(\bSigma_{\ft}\bS_0)\mathbf{1}_{\mathcal E}
 \right]
 +
 \E_{\bX_{\pt}}\!\left[
 \Tr(\bSigma_{\ft}\bS_0)\mathbf{1}_{\mathcal E^c}
 \right]\le O(1)+o(K/N)
 =O(1).
\end{aligned}
\]

Finally, by the definition
$\bS_0=\E[\be_0\be_0^\top\mid\bX_{\pt}]$,
\begin{equation}
\begin{aligned}
 \E\|\bP\be_0\|_{\bSigma_{\ft}}^2
 &=
 \E_{\bX_{\pt}}\!
 \left[\E[\be_0^\top\bC_P\be_0\mid\bX_{\pt}]\right]=
 \E_{\bX_{\pt}}\Tr(\bC_P\bS_0)
 \asymp K/N,\\
 \E\|\be_0\|_{\bSigma_{\ft}}^2
 &=
 \E_{\bX_{\pt}}\!
 \left[\E[\be_0^\top\bSigma_{\ft}\be_0\mid\bX_{\pt}]\right]=
 \E_{\bX_{\pt}}\Tr(\bSigma_{\ft}\bS_0)
 =O(1).
\end{aligned}
\label{eq:sgd-unconditional-source-bounds}
\end{equation}

Taking the expectation of
\eqref{eq:sgd-conditional-design-upper} over the source design
therefore yields
\begin{equation}
\begin{aligned}
 \E\|\widehat\bbeta_n-\bbeta_*\|_{\bSigma_{\ft}}^2
 &=
 \E_{\bX_{\pt}}\!\left[
 \E\!\left[
 \|\widehat\bbeta_n-\bbeta_*\|_{\bSigma_{\ft}}^2
 \mid\bX_{\pt}
 \right]\right]\\
 &\lesssim
 \E_{\bX_{\pt}}\Tr(\bC_P\bS_0)
 +\frac KN
 \left(
 \sigma_{\ft}^2+
 \E_{\bX_{\pt}}\Tr(\bSigma_{\ft}\bS_0)
 \right)\\
 &\lesssim
 \frac KN+\frac KN\bigl(\sigma_{\ft}^2+O(1)\bigr)
 \lesssim\frac KN.
\end{aligned}
\label{eq:sgd-final-upper-bound}
\end{equation}
This completes the proof of  the upper bound.

\subsection{Lower bound}
\label{app:sgd-rd-lower}

We derive a matching lower bound from the target label noise alone.
Let us condition on the source sample $\mathcal D_{\pt}$,
so that the initialization error
$\be_0=\widehat\bbeta_{\pt}-\bbeta_*$ is fixed.
Writing $\be_t:=\widehat\bbeta_t-\bbeta_*$ and using
$y_t=\bx_t^\top\bbeta_*+\epsilon_t$, the SGD recursion gives
\[
 \be_t
 =
 (\bI-\eta_t\bx_t\bx_t^\top)\be_{t-1}
 +\eta_t\epsilon_t\bx_t.
\]
We decompose this error as
\[
 \be_t=\boldsymbol r_t+\bz_t,
\]
where $\boldsymbol r_t$ is the initialization error that propagates during training, which satisfies
\[
 \boldsymbol r_0=\be_0,\qquad
 \boldsymbol r_t
 =
 (\bI-\eta_t\bx_t\bx_t^\top)\boldsymbol r_{t-1},
\]
and $\bz_t$ is the error due to the noisy labels,
\[
 \bz_0=0,\qquad
 \bz_t
 =
 (\bI-\eta_t\bx_t\bx_t^\top)\bz_{t-1}
 +\eta_t\epsilon_t\bx_t.
\]
The identity $\be_t=\boldsymbol r_t+\bz_t$ follows by
induction from these recursions.

Conditional on $\mathcal D_{\pt}$ and all target covariates
$\bx_1,\ldots,\bx_n$, the vector
\[
 \boldsymbol r_n
 =
 (\bI-\eta_n\bx_n\bx_n^\top)\cdots
 (\bI-\eta_1\bx_1\bx_1^\top)\be_0
\]
is deterministic. Furthermore, expanding the noise recursion gives
\[
 \bz_n
 =
 \sum_{t=1}^n\eta_t\epsilon_t
 (\bI-\eta_n\bx_n\bx_n^\top)\cdots
 (\bI-\eta_{t+1}\bx_{t+1}\bx_{t+1}^\top)\bx_t,
\]
where the matrix product is interpreted as $I$ when $t=n$.
Thus, $\bz_n$ is a linear combination of the target noises
with coefficients fixed under this conditioning.

Averaging over the conditioned variables  yields the lower bounds
\[
 \E\|\be_n\|_{\bSigma_{\ft}}^2
 =
 \E\|\boldsymbol r_n\|_{\bSigma_{\ft}}^2
 +
 \E\|\bz_n\|_{\bSigma_{\ft}}^2
 \ge
 \E\|\bz_n\|_{\bSigma_{\ft}}^2,
\]
where we use the fact that the target noises are centered and independent of the
source sample and all target covariates, so that the cross term is zero.

It remains to control $\E\|\bz_n\|_{\bSigma_{\ft}}^2.$ Defining $\bV_t:=\E[\bz_t\bz_t^\top]$, we have $\E\|\widehat\bbeta_n-\bbeta_*\|_{\bSigma_{\ft}}^2
 \ge\Tr(\bSigma_{\ft}\bV_n),$ where
$\bV_t$ satisfies the following recursion
\[
 \bV_t
 =\E\big[(I-\eta_t\bx_t\bx_t^\top)\bV_{t-1}
                (I-\eta_t\bx_t\bx_t^\top)\big]
 +\eta_t^2\sigma_{\ft}^2\bSigma_{\ft}.
\]
For every positive semidefinite $\bA$, we compute
\[
\begin{aligned}
 &\E\big[(I-\eta_t\bx_t\bx_t^\top)\bA (I-\eta_t\bx_t\bx_t^\top)\big]-(I-\eta_t\bSigma_{\ft})\bA(I-\eta_t\bSigma_{\ft})]=\eta_t^2\E\big[
  (\bx_t\bx_t^\top-\bSigma_{\ft})\bA
  (\bx_t\bx_t^\top-\bSigma_{\ft})\big]\succeq0.
\end{aligned}
\]
Iterating this positive-semidefinite comparison and taking the target
trace yields
\begin{equation}
 \Tr(\bSigma_{\ft}\bV_n)
 \ge\sigma_{\ft}^2\sum_k\sum_{t=1}^n
 \eta_t^2\xi_{\ft,k}^2
 \prod_{j=t+1}^n(1-\eta_j\xi_{\ft,k})^2.
 \label{eq:sgd-noise-population-lower}
\end{equation}
We now restrict \eqref{eq:sgd-noise-population-lower} to
$1\le t\le N$ and $\lceil K\rceil\le k\le\lfloor2K\rfloor$.
On this range,
\[
 \eta_t=\eta_0=\frac{K^b}{N},\qquad
 \xi_{\ft,k}\asymp K^{-b},\qquad
 \sum_{j=1}^n\eta_j\le2K^b.
\]
For sufficiently large $n$, we have
$0\le\eta_j\xi_{\ft,k}\le1/2$.
Using $\log(1-x)\ge-2x$ on this interval, we obtain uniformly
over these $t,k$
\[
 \prod_{j=t+1}^n(1-\eta_j\xi_{\ft,k})^2
 \ge\exp\!\left(-4\xi_{\ft,k}\sum_{j=t+1}^n\eta_j\right)
 \ge c>0.
\]
Computing the summation gives 
\begin{equation}
 \begin{aligned}
 \E\|\widehat\bbeta_n-\bbeta_*\|_{\bSigma_{\ft}}^2
 &\ge c\sigma_{\ft}^2
 \sum_{k=\lceil K\rceil}^{\lfloor2K\rfloor}
 \sum_{t=1}^N\eta_0^2\xi_{\ft,k}^2 \gtrsim K N\eta_0^2K^{-2b}
 =\frac KN.
 \end{aligned}
 \label{eq:sgd-fixed-schedule-lower}
\end{equation}

Combining with the upper bound gives
\[
 \E\|\widehat\bbeta_n-\bbeta_*\|_{\bSigma_{\ft}}^2
 \asymp\frac KN
 =\widetilde\Theta(n^{-\Gamma_{\rm TL}}),
\]
which proves \Cref{lem:sgd-scaling-main}.

\section{Additional numerical results and experimental details}
\label{app:num}

\subsection{Deterministic equivalent predictions for empirical spectra}
\label{app:de-empirical}

We test the accuracy of the deterministic equivalent  predictions when we replace the Gaussian inputs with PCA features from MNIST \citep{LeCun1998GradientbasedLA} and CIFAR-10 \citep{Krizhevsky09learningml}, while keeping the outputs generated by the same linear model as in the theory. 
\Cref{fig:mnist-pca-de,fig:cifar10-pca-de} compare predictions from  the deterministic equivalent with the empirical values of the transfer learning risk as the feature dimension, regularization, and sample size vary. We also include a Gaussian control with the same covariance spectrum. The Gaussian control follows the deterministic equivalent, while the image features show slightly larger deviations, especially under weaker regularization. In both datasets, the deterministic prediction captures the main shape of the risk curve, including the peak near $p=n$ and its shift with $n$.

\begin{figure*}[t]
    \centering
    \includegraphics[width=\textwidth]{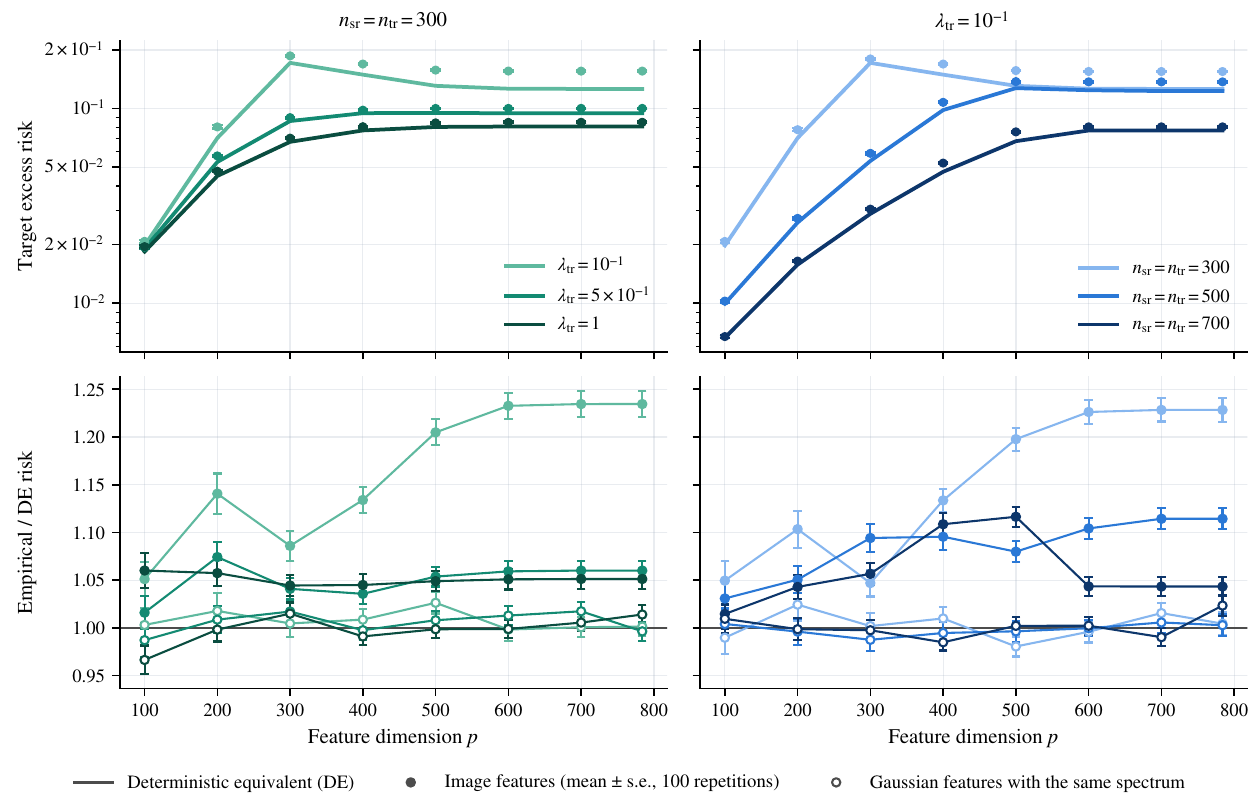}



    \caption{
    Deterministic equivalent on MNIST PCA features. Inputs are the top $p$ principal components of the centered training images, and outputs follow a linear model with noise standard deviation $0.2$ and normalized correlation between $\bbeta_{\pt}$ and $\bbeta_{\ft}$ equal to $0.8$. Source and target inputs are drawn with replacement from the training images. We use $\lambda_{\rm sr}=0.5\lambda_{\rm tr}$. The left column fixes $n_{\rm sr}=n_{\rm tr}=300$ and varies $\lambda_{\rm tr}\in\{10^{-1},5\times10^{-1},1\}$; the right column fixes $\lambda_{\rm tr}=10^{-1}$ and varies $n_{\rm sr}=n_{\rm tr}\in\{300,500,700\}$. \textbf{Top:} solid curves show the deterministic equivalent and markers show the mean target excess risk over $100$ training draws with $\pm1$ s.e.\ bars. The risk increases up to $p=n$ and then flattens. For $n=300$ and $\lambda_{\rm tr}=10^{-1}$, there is a peak at $p=n$. Discrepancies are larger at weaker regularization. \textbf{Bottom:} ratios of empirical risk to the deterministic equivalent for image features and Gaussian features with the same covariance spectrum. The Gaussian control stays within $3.4\%$ of the prediction, while the image features deviate more.}
    \label{fig:mnist-pca-de}
\end{figure*}

\begin{figure*}[t]
    \centering
    \includegraphics[width=\textwidth]{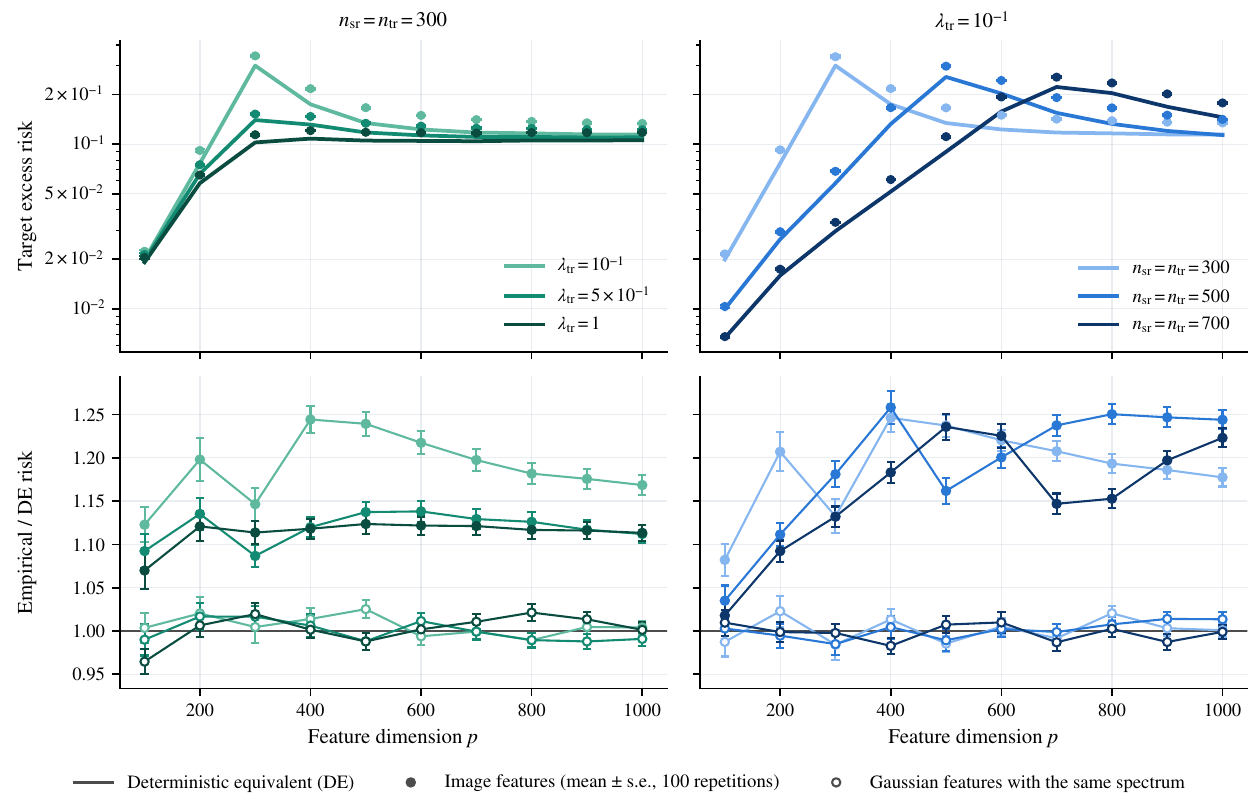}



    \caption{Deterministic equivalent on CIFAR-10 PCA features. We use the same setup as in \Cref{fig:mnist-pca-de}, using the same image pool for PCA and for drawing $20{,}000$ training samples. Source and target inputs are drawn with replacement from this pool. The left column fixes $n_{\rm sr}=n_{\rm tr}=300$ and varies $\lambda_{\rm tr}\in\{10^{-1},5\times10^{-1},1\}$; the right column fixes $\lambda_{\rm tr}=10^{-1}$ and varies $n_{\rm sr}=n_{\rm tr}\in\{300,500,700\}$. \textbf{Top:} solid curves show the deterministic equivalent and markers show the mean target excess risk over $100$ training draws with $\pm1$ s.e.\ bars. The curves show a peak near $p=n$, which shifts with $n$, with larger errors under weaker regularization. \textbf{Bottom:} ratios of empirical risk to the deterministic equivalent for image features and Gaussian features with the same covariance spectrum. The Gaussian control stays within $3.5\%$ of the prediction, while the image features deviate more.}
    \label{fig:cifar10-pca-de}
\end{figure*}




\subsection{Details for experiments on language models}
\label{app:llm-exp}

\paragraph{Model and data.}
We use GPT-2 small \citep{radford2019language} in the nanoGPT implementation \citep{karpathy2022nanogpt}, with 12 layers, 12 attention heads, width 768, context length 1024, and 123.6M parameters. FineWeb-Edu \citep{penedo2024fineweb} is the source distribution, and Python \citep{codeparrot_clean} is the target distribution. We tokenize all data with the GPT-2 tokenizer. We ensure that the target validation and test sets do not overlap with the source training pool; for Python, we additionally split the train and held-out data by repository and remove near-duplicate training files.

\paragraph{Setup.}
We refer to the number of unique target and source tokens as the
\emph{target budget} and \emph{source budget}, respectively. The target
budget is $n=2^k$ for $k\in\{21,\ldots,26\}$. Within each seed, the
budgets are nested, so different methods see the same target examples in
the same order.
The source budget follows
\begin{equation}
  m_\gamma(n)
  =
  m^\ast
  \left(\frac{n}{n^\ast}\right)^\gamma,
  \qquad
  n^\ast=2^{21},
  \qquad
  m^\ast=7\cdot2^{21},
\end{equation}
with $\gamma\in\{0,0.75,1,1.25\}$. Thus, $\gamma=0$ corresponds to a fixed source budget, while $\gamma>0$ makes the source budget grow with the target budget. 

\paragraph{Training.} We train all models with AdamW \citep{loshchilov2018decoupled}, using $\beta=(0.9,0.95)$, gradient clipping at $1$, and batches of 65,536 tokens. Each source model makes one pass over FineWeb-Edu. The main source learning rate, $6\times10^{-4}$, is selected on seed $0$ from $\{3,6,12\}\times10^{-4}$ using source validation loss at $m^\ast$.

We compare three procedures for training with target data. \emph{Target only} (TO) trains from random initialization. \emph{Fine-tuning} (FT) starts from the corresponding source checkpoint with a fresh optimizer state. L2-SP \citep{li2018explicit} replaces ordinary weight decay with a pull toward the source parameters. In our implementation (FTSP), we apply the decoupled update
\begin{equation}
  \theta
  \leftarrow
  \theta
  -
  \min(\eta_t\lambda,1)
  (\theta-\theta_{\pt}),
\end{equation}
after every AdamW step, so it is decoupled from the gradient in the same way AdamW's weight decay is. The original L2-SP adds a penalty to the loss instead. Also, here $\eta_t$ is the scheduled learning rate, which can be during the warm-up phase, constant phase, or cool-down phase.

\paragraph{Target-training length.} We train for $E\in\{1,2,4,\dots\}$ epochs using cool-down branches from the same run. Hyperparameters are tuned with at most 16 epochs. We choose the number of epochs based on validation, and we increase $E$ until doubling the number of epochs improves validation loss by less than $0.01$, and then we report the branch with the lowest validation loss. The $E = 1$ branch of the same run gives the one-pass result. 

\paragraph{Hyperparameters and seeds.} Learning rate, regularization strength, and dropout are tuned on seed $0$, separately for each target and training method, at $n\in\{2^{22},2^{26}\}$. The experiments use seeds $0,4,5$. Each seed fixes the initialization, dropout noise, and data order, allowing paired comparisons across methods.

\paragraph{Evaluation.}
The test loss is measured as the average cross-entropy loss per token. Since the irreducible loss floor is unknown, we summarize scaling directly from the observed test loss. Between neighboring target budgets $n_i$ and $n_{i+1}$, we define the local exponent
\begin{equation}
\label{eq:gamma-hat-llm}
  \widehat{\Gamma}_{i,i+1}
  :=
  -\frac{
    \log L_{i+1}-\log L_i
  }{
    \log n_{i+1}-\log n_i
  }.
\end{equation}
For windows containing more than two target budgets $n_i$, we fit a single exponent by least squares.
We compute ratios and differences within each seed and then average them, reporting standard errors across seeds. 

\end{document}